\documentclass[12pt,twoside]{article}
\usepackage{geometry}
\geometry{left=1.25in,top=1.7in}
\setlength{\parindent}{.5in}

\setlength{\parskip}{\baselineskip}
\usepackage[labelfont=bf]{caption}
\usepackage[numbers]{natbib}
\bibliographystyle{plainnat}
\usepackage{amssymb}
\usepackage{graphicx}
\usepackage{float}
\usepackage{booktabs}
\usepackage{tabularx}
\usepackage{titling}
\usepackage{amsmath}
\usepackage{amsthm}
\usepackage{dsfont}
\usepackage{titlesec}
\usepackage{subcaption}
\usepackage{breqn}
\usepackage{thmtools}

\usepackage{algorithm}
\usepackage{algorithmic}

\setlength{\droptitle}{-5em}

\titlespacing*{\section}
  {0pt}{-.1\baselineskip}{-.1\baselineskip}  
\titlespacing*{\subsection}
   {0pt}{-.1\baselineskip}{-.1\baselineskip}  

\usepackage[T1]{fontenc}

\date{}
\providecommand{\keywords}[1]
{
   \small	
  \textit{\hspace{-1em} Keywords: } #1
}

\title{ \textbf{Online Learning Approach for Survival Analysis}} 
\author{\normalsize Camila Fernandez \and \normalsize Pierre Gaillard \and \normalsize Joseph de Vilmarest \and \normalsize Olivier Wintenberger}

\usepackage{mathtools}

\newtheorem{Theorem}{Theorem}
\newtheorem{assumption}{Assumption}
\newtheorem{Lemma}{Lemma}
\newtheorem{Definition}{Definition}
\newtheorem{proposition}{Proposition}

\newtheorem{Corollary}{Corollary}

\setlength{\parindent}{0em}

\usepackage{titlesec}
\titleformat{\section}[block]
  {\fontsize{12}{15}\bfseries\sffamily\filcenter}
  {\thesection}
  {1em}
  {\MakeUppercase}
\titleformat{\subsection}[hang]
  {\fontsize{12}{15}\bfseries\sffamily}
  {\thesubsection}
  {1em}
  {}

\begin{document}

\maketitle

\vspace{-6em}

\begin{abstract}
\noindent  We introduce an online mathematical framework for survival analysis, allowing real time adaptation to dynamic environments and censored data. This framework enables the estimation of event time distributions through an optimal second order online convex optimization algorithm—Online Newton Step (ONS). This approach, previously unexplored, presents substantial advantages, including explicit algorithms with non-asymptotic convergence guarantees. Moreover, we analyze the selection of ONS hyperparameters, which depends on the exp-concavity property and has a significant influence on the regret bound. We propose a stochastic approach that guarantees logarithmic stochastic regret for ONS. Additionally, we introduce an adaptive aggregation method that ensures robustness in hyperparameter selection while maintaining fast regret bounds. The findings of this paper can extend beyond the survival analysis field, and are relevant for any case characterized by poor exp-concavity and unstable ONS. Finally, these assertions are illustrated by simulation experiments.\\
\keywords{\textit{Online learning, survival analysis, regret bounds, convex optimization, stochastic risk.}}  
\end{abstract}

\section{Introduction}
On the one hand the primary objective of survival analysis is to estimate the time until a critical event occurs, often referred to as survival time or failure time. Examples of such events include customer churn \cite{lu2002predicting}, machine failures \cite{chen2020predictive}, and employees' attrition \cite{mohbey2022employee}. Survival analysis is particularly suited for scenarios where the occurrence of the event may not be observed for all individuals in the dataset. This phenomenon arises when data collection happened before the event occurred, or individuals left the study before experiencing the event, and is called right censoring. As neglecting the censored data is restrictive, it is essential to consider censorship in estimating event time distributions to avoid bias and underestimation. For each individual $i$ with event time $t_i$, we define the survival probability function as
\[
S_i(t) = \mathbb P (t_i \ge t), \qquad t\ge 0.
\]

On the other hand convex optimization aims to find the minimum of a convex function over a convex set. It can be extended to an online approach in which the dataset becomes available in sequential order and is used to update the estimations of the algorithms at each step. This setting is suitable when the dataset is rapidly evolving over time, allowing for efficient processing of large volumes of data. Online convex optimization is a broad field with diverse applications such as online portfolio selection in finance, signal processing, communication, and machine learning algorithms; see Hazan \cite{hazan2022introduction} and references therein.

In this paper, we propose the application of online convex optimization algorithms to survival analysis. The combination of these two approaches has not been explored before. Our method offers significant advantages, including explicit algorithms with non-asymptotic convergence guarantees, making it a promising tool for the survival analysis field. 

Specifically, we estimate a parametric survival probability function $S_i$ using online convex optimization algorithms: let $\Theta$ be a non-empty, convex, compact set in $\mathbb R^d$, and $\ell_t$ the negative log-likelihood of the individuals at risk during the interval $(t-1,t]$, $t\ge 1$. The performance of online convex optimization algorithms is measured with the regret
\[
Regret_n := \displaystyle \sum_{t=1}^n \ell_{t}(\theta_t) - \min_{\theta \in \Theta} \sum_{t=1}^n \ell_{t}(\theta), \qquad n \ge 1,
\]
which indicates how close the cumulative loss is to the optimal solution. A smaller regret implies better performance, and our objective is to bound its growth with respect to $n$ as slowly as possible.

One of the most widely used algorithms in online convex optimization is the Online Newton Step (ONS) of Hazan et al. \cite{hazan2007logarithmic}, renowned for its fast regret convergence rate for exp-concave loss functions. This second-order algorithm relies on a hyperparameter known as the learning rate, whose optimal selection is directly dependent on the exp-concavity properties of the loss functions. The exp-concavity constant plays a fundamental role in the theoretical regret analysis of ONS.

We give a detailed mathematical framework for online survival analysis data and we implement the ONS method to optimize the negative log-likelihood of the exponential model. We note that the ONS algorithm requires a careful selection of the learning rate to ensure robust performance. However, certain choices, such as the learning rate proposed by Hazan et al. \cite{hazan2007logarithmic}, might lead to an explosive increase in regret, particularly when applied to the survival losses $\ell_t$. Therefore, proper selection of the learning rate is essential in our application.

We discuss various strategies for selecting the learning rate hyperparameter. The first contribution involves applying the stochastic setting from Wintenberger~\cite{wintenberger2021stochastic} to the survival case. This setting enhances convex properties by assessing stochastic risks rather than cumulative losses, allowing us to attain theoretical guarantees for the stochastic regret that is strongly related to the exp-concavity properties on average. Consequently, this provides the convergence of the algorithm estimations to the real parameter under well-specification. Secondly, in the deterministic setting, we propose to apply ONS to an auxiliary function that recursively adapts the learning rate in response to updates in the exp-concavity constant. We introduce the algorithm SurvONS, an aggregation procedure which ensures a logarithmic regret bound and robustness in hyperparameter selection over a fixed grid. This provides a new compromise in the context of second-order algorithms: the algorithm either performs well on average (as in the case of BOA \cite{wintenberger2017optimal}) or performs well for certain iterations (as in our case with SurvONS). It is important to emphasize that this algorithm is applicable not only to the survival case but also to any case where the exp-concavity properties are poor and the original versio of ONS is unstable. Finally, we conduct experiments using simulated data to examine our algorithm's behavior under different constraints. We discuss the choice of the grid, and we observe that the combination of multiple ONS allows us to use larger grids in SurvONS than in BOA-ONS~\cite{wintenberger2017optimal}.

The literature in survival analysis is considerable. The approaches range from non-parametric methods, such as the one proposed by Kaplan and Meier in 1958 \cite{kaplan1958nonparametric}, to semi-parametric methods like Cox proportional hazards \cite{cox1972regression}, and more recent machine learning applications. For instance, Ishwaran proposed an adapted random forest for censored data inx \cite{ishwaran2008random}. Another example is DeepSurv, which was introduced by Katzman in \cite{katzman2018deepsurv}. DeepSurv utilizes deep learning techniques to estimate the log-risk function in the Cox model. From a theoretical perspective, Arjas and Haara \cite{arjas1987logistic} proposed a dynamic setting called discrete-time logistic regression. In this model, events are always treated in the order in which they occurred in real time. The authors provided an asymptotic normality result for the maximum likelihood estimator of the regression coefficients. The discrete model is a suitable choice when events are observed at discrete time points; see Tutz \cite{tutz2016modeling}. Building upon Arjas and Haara's framework, Fahrmeir \cite{fahrmeir1994dynamic} introduced a state-space approach for analyzing discrete-time survival data. This approach includes the estimation of time-varying covariate effects achieved by maximizing posterior densities through the use of Kalman Filter algorithms. Christoffersen \cite{christoffersen2021dynamichazard} provided a method for discretising continuous event times when the instantaneous hazard follows an exponential shape. In a similar setting we provide adaptive estimators with non-asymptotic guarantees for the first time.

\section{Background on parametric inference} \label{paramsection}
\subsection{Notation}
We consider a set of $N$ individuals denoted by $i \in \{1, \ldots, N\}$, each associated with an arrival time $\tau_i \geq 0$. Such time could represent when a patient enters the hospital, a client joins the company, or simply when an individual enrolls in the study. Every individual has a unique event time $t_i$, which is a positive random variable. By definition, we have $t_i 
\ge \tau_{i}$ almost surely (a.s). We also define $c_i$, which marks the cessation of observation for the individual $i$; this time is referred to as the censored time. For instance, this might be applicable in cases where the observation period has a predetermined ending. In a more general context, $c_i$ can be a positive random variable satisfying $c_i \ge  \tau_{i}$ a.s. Given that some individuals are censored before the event occurs, and vice versa, it is natural to define the observed time as $u_i := \min\{t_i, c_i\}$. We also define the event indicator $\delta_i :=  \mathds{1}\{ t_i \leq c_i\}$, which provides a way to discern whether an event has happened or if it is censored. For each individual $i \in \{1, \ldots, N\}$, we obseve the random variables $(u_i, \delta_i) \in \mathbb{R}_+ \times \{0,1\}$. Furthermore, we suppose that both $t_i$ and $c_i$ are independent across all individuals.

Explanatory variables are defined to give context through time to each of the individuals, and these will be represented by left continuous functions $x_i: \mathbb{R_+} \rightarrow \mathbb{R}^d$. The explanatory variables $x_i(t) \in \mathbb{R}^d$ combine covariates of the individual $i \in \{1, \ldots, N\}$ at time $t \ge 0$. It's important to note that we use the variable $t$ to refer to time in general, while $t_i$ represents the specific event time of individual $i$. We assume that given $x_i$, a short notation for $(x_i(t))_{t \ge 0}$, the times $t_i$ and $c_i$ are conditionally independent. Additionally, we suppose $t_i$ follows a continuous distribution of density $f(t | x_i, \tau_i)$ and $c_i$ a continuous distribution of density $g(t | x_i, \tau_i)$. We have $g(t | x_i, \tau_i) = f(t | x_i, \tau_i) = 0$ for all $t < \tau_i$ since $t_i, c_i \ge \tau_i$ a.s.

In addition, we suppose that $g$ satisfies the following property: \[ 
\forall t \geq \varepsilon > 0: g(t\vert x_i,\tau_i) =  g(t-\varepsilon \vert x_i, \tau_i - \varepsilon)\, .  
\]
Note that this assumption is also necessary for the density function $f$. However, as we will know its specific shape, the property is inherently satisfied. Finally, we denote by $I_d$ the identity matrix of dimension $d$.

\subsection{Survival probability} 
The objective of survival analysis is to predict the length of time until a specified event occurs. Consequently, it is necessary to estimate the distribution of these events. We define the survival probability function of individual $i$ to be the complement of the cumulative distribution, that is, 
$S(t | x_i, \tau_i) = 1 - \int_{\tau_i}^{t} f(s | x_i, \tau_i) ds$, which can also be expressed as the probability of surviving up to time $t$:
\[S(t|x_i,\tau_i) = \mathbb{P}(t_i \geq t | x_i,\tau_i)\,,\qquad t\ge 0\,. \] 
To estimate this function, it is common to assume a particular shape for the hazard function. The hazard function is defined as:
\[ H(t|x_i,\tau_i) = - \frac{\partial}{\partial t} \log(S(t|x_i,\tau_i))\,, \qquad t\ge 0\,,\] 
which represents the instantaneous risk of the event occurring at time $t$. Notably, we can derive the survival function from the hazard function:
\[ S(t|x_i,\tau_i) = \exp \left( - \int_0^t H(s|x_i,\tau_i) ds\right)\,,\qquad t\ge 0\,.\]
For more details on event times distributions, refer to Cox and Oakes \cite{cox1984analysis}.

\subsection{Likelihood}
In order to estimate the survival probability we suppose the hazard function is a function of a specified parametric family $\Theta$ given the explanatory variables. The parameters will be determined following the likelihood principle observing $(u_i, \delta_i) \in \mathbb{R}_+ \times \{0,1\}$ and knowing $(x_i,\tau_i)$. As usual we implicitly make the assumption of non-informative censoring (see Kalbfleisch et al. \cite{kalbfleisch2011statistical}), which means that the censored distribution does not involve the parameter $\theta$.

As mentioned earlier, some models assume a specific shape for the hazard function, such as additive, exponential, logistic or Weibull (see Cox and Oakes \cite{cox1984analysis}). In this paper, we assume that the hazard function is exponential, and we detail this assumption below.

\begin{Definition}[Log-linear regression model for the Hazard function] \label{ass:exp}
    We assume that there exist a vector $\theta \in \mathbb{R}^d$ , such that the hazard function satisfies for all $t\geq 0$ and all $x_i:\mathbb{R}_+ \to \mathbb{R}^d$,
    \[
        H(t|x_i,\tau_i) := h\big( \theta^Tx_i(t)\big)\mathds{1}\{t\ge \tau_i\} \,,\qquad t\ge 0\,,
    \]
    where $h:x \in \mathbb{R} \mapsto \exp(x)$ is the response function.
\end{Definition}

By using this exponential model we obtain a formula to compute the negative log-likelihood which is the function that we aim to minimize.

\begin{proposition}\label{likelihood} Under the exponential model from Definition~\ref{ass:exp} and omitting additional constants, the negative log-likelihood function $\ell :\Theta \rightarrow \mathbb R$ can be written in the following way: \begin{equation}\label{eq:likelihood} \ell(\theta)  = \displaystyle \sum_{i=1}^N -\delta_i \theta^T x_i(u_i) + \int_{\tau_i}^{u_i}\exp(\theta^T x_i(s))ds\,.\end{equation}
\end{proposition}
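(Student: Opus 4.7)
The plan is to derive the likelihood contribution of a single individual $i$, take the negative logarithm, discard the terms independent of $\theta$, and sum over $i$. First I would write the joint law of $(u_i,\delta_i)$ given $(x_i,\tau_i)$. Because $t_i$ and $c_i$ are conditionally independent given $x_i$, the contribution of individual $i$ to the likelihood is
\[
L_i(\theta) \;=\; \bigl[f(u_i\mid x_i,\tau_i)\,\bar G(u_i\mid x_i,\tau_i)\bigr]^{\delta_i}\;\bigl[g(u_i\mid x_i,\tau_i)\,S(u_i\mid x_i,\tau_i)\bigr]^{1-\delta_i},
\]
where $\bar G(t\mid x_i,\tau_i)=\mathbb P(c_i\ge t\mid x_i,\tau_i)$. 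By the non-informative censoring assumption, the factors involving $g$ and $\bar G$ do not depend on $\theta$ and can be dropped as additive constants after taking $-\log$.

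Next, I would use the two standard identities $f = H\cdot S$ and $S(t\mid x_i,\tau_i)=\exp\bigl(-\int_0^t H(s\mid x_i,\tau_i)\,ds\bigr)$ stated in the previous subsection. Factoring $f^{\delta_i}S^{1-\delta_i}=H^{\delta_i}S$ and taking $-\log$, the contribution of individual $i$ becomes
\[
-\log L_i(\theta) \;=\; -\delta_i \log H(u_i\mid x_i,\tau_i) \;+\; \int_0^{u_i} H(s\mid x_i,\tau_i)\,ds \;+\; C_i,
\]
where $C_i$ collects the terms independent of $\theta$.

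Finally I would plug in the log-linear form of Definition~\ref{ass:exp}, namely $H(s\mid x_i,\tau_i)=\exp(\theta^T x_i(s))\,\mathds{1}\{s\ge\tau_i\}$. Since $u_i\ge\tau_i$ almost surely, the indicator gives $\log H(u_i\mid x_i,\tau_i)=\theta^T x_i(u_i)$, and it also truncates the integral to the range $[\tau_i,u_i]$. Summing the resulting expressions over $i=1,\dots,N$ and discarding the constants $\sum_i C_i$ yields the announced formula~\eqref{eq:likelihood}.

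The derivation is essentially a routine bookkeeping exercise; the only points requiring care are justifying that the factors $\bar G^{\delta_i} g^{1-\delta_i}$ truly drop out (this is where the non-informative censoring hypothesis is used) and tracking the indicator $\mathds{1}\{s\ge\tau_i\}$ so that the lower bound of the integral becomes $\tau_i$ rather than $0$.
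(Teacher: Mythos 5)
Your proposal is correct and follows essentially the same route as the paper's proof: factor the joint law of $(u_i,\delta_i)$ into event and censoring parts, invoke non-informative censoring to discard the $\theta$-free factors, rewrite $f^{\delta_i}S^{1-\delta_i}$ via $f=H\cdot S$ and $S=\exp(-\int H)$, and substitute the exponential hazard whose indicator truncates the integral to $[\tau_i,u_i]$. The only step you compress relative to the paper is the derivation of the density of $(u_i,\delta_i)$ by the limiting argument in $h$, which is standard and does not affect correctness.
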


We call this function the complete log-likelihood and the proof of this proposition is detailed in Appendix~\ref{AppA}.

\subsection{Sequential likelihood optimization}
We consider a horizon time $n$ and a time partition $(t-1,t]$ with discrete time $t=1,2,\ldots$ that is independent of the observations $(u_i,\delta_i)_{1\le i\le N}$. In many real-life situations, data continues to evolve; new patients may arrive, some patients may leave, and the optimization algorithm may need to update its estimation as new information becomes available. This is the focus of our work: to update online convex optimization algorithms for sequential survival data. 

For individual $i$ we define $y_{it} := \delta_i \mathds{1}\{t-1 < u_i \leq t \}$ which indicates whether an event is observed for individual $i$ during the interval $(t-1,t]$ or not. Additionally, we denote the risk indicator as $r_{it} := \mathds{1}\{\tau_i \leq t, u_i > t-1\}$ for event $i$ in the interval $(t-1,t]$. Then, we define the log-likelihood on the interval $(t-1,t]$ by the expression
\begin{equation}\label{eq:lik} \ell_t(\theta) := \displaystyle \sum_{i=1}^N -y_{it} \theta^T x_i(u_i) + r_{it} \int_{\tau_i \vee (t-1)}^{u_i \wedge t}\exp(\theta^T x_i(s))ds\,,\qquad\theta\in \Theta, ~t=1,2,\ldots \, , \end{equation}
where we remind $u_i \wedge t = \min\{u_i,t\}$ and $\tau_i \vee (t-1) = \max\{\tau_i,t-1\}$. Let us notice that, analogous to Equation \eqref{eq:likelihood}, the contribution to the log-likelihood of an individual that experiences an event in the interval $(t-1,t]$ is given by $\theta^T x_i(u_i) + \int_{\tau_i \vee (t-1)}^{u_i} \exp(\theta^T x_i(s))ds$, and the contribution of an individual that is censored  in the interval $(t-1,t]$—either by $u_i = c_i$ or by $t$—is $ \int_{\tau_i \vee (t-1)}^{u_i \wedge t}\exp(\theta^T x_i(s))ds$. If an individual is not yet present in the interval, i.e., $\tau_i > t$, or its observed time has passed before the beginning of the interval ($u_i \leq t-1$), its contribution to the log-likelihood is zero.

Finally, the log-likelihood up to time $n$ is given by:
\[\ell^n(\theta) := \displaystyle \sum_{t=1}^n \ell_t(\theta),\qquad\theta\in \Theta. \]
It is important to notice that if $n$ is sufficiently large, i.e., $n\ge u_i$ for every $1\le i\le N$, and all the events have been observed, the complete log-likelihood of Equation \eqref{eq:likelihood} corresponds to the sum of all the interval contributions. Therefore, $\ell(\theta) = \ell^n(\theta)$, $\theta\in \Theta$, for $n$ sufficiently large when $N$ is finite.

\section{Online convex optimization}
\label{sec:OCO}
\subsection{Setting} 
\label{subsec:4.1}
A convex optimization problem consists of approximating the minimum of a convex function over a convex set. This problem can be extended to a recursive setting where, at each iteration $t$, a convex optimization algorithm predicts the parameter $\theta_t$ and incurs a loss of $\ell_{t}(\theta_t)$. This approach is particularly good in situations where the data evolves over time, requiring fast adaptation and decision making. We apply this methodology to survival analysis, introducing a novel perspective in a field traditionally dominated by batch processed data.

The online convex optimization algorithm aims to minimize its regret at any horizon time $n\ge1$:
\[Regret_n := \displaystyle \sum_{t=1}^n \ell_{t}(\theta_t) - \min_{\theta \in \Theta} \sum_{t=1}^n \ell_{t}(\theta). \] 
In this paper, we aim to optimize the losses $\ell_t(\theta)$ from Equation \eqref{eq:lik}. To apply online convex optimization algorithms, we must first assume that $\Theta \subseteq \mathbb{R}^d$ is a non-empty, convex, bounded, and closed set. Subsequently, we verify the convexity of the objective function. Here the choice of the response function $h$ is crucial. For $h(x)=\exp(x)$ the cost function $\ell_t (\theta)$ is defined in Equation \eqref{eq:lik} for every iteration $t$. We derive its gradient and Hessian:
\begin{equation} \label{eq:4} \nabla \ell_t(\theta) =  \displaystyle \sum_{i=1}^N -y_{it} x_i(u_i) + r_{it} \int_{\tau_i \vee t-1}^{u_i \wedge t}\exp(\theta^T x_i(s)) x_i(s)ds,\end{equation}
and   
\begin{equation} \label{eq:5} \nabla^2 \ell_t(\theta) =  \displaystyle  \sum_{i=1}^N  r_{it} \int_{\tau_i \vee t-1}^{u_i \wedge t} \exp(\theta^T x_i(s))x_i(s) x_i(s)^Tds \succcurlyeq 0.\end{equation} 
The positive semi-definite Hessian confirms the convexity of the losses. Additionally, we formalize the boundedness assumption. 
\begin{assumption}[Bounded domain and gradient]
    \label{ass:bounded}
    There exists $D, G >0$ such that for all $t=1,2,\ldots$ and $\theta \in \Theta$, $\|\theta\|\leq D$ and $\|\nabla \ell_t(\theta)\| \leq G$. 
\end{assumption}
One of the most ancient algorithms for online convex optimization is named "follow the leader" (FTL), and it consists of choosing, at each iteration $t$, the point that optimizes the cumulative loss up to $t-1$. This algorithm does not satisfy any non-trivial regret guarantee for linear losses. However, under some modifications, like the randomized version proposed by Hannan \cite{hannan1957approximation}, it can achieve an $\mathcal{O}(\sqrt{n})$ regret bound. Additionally, the approach from Cesa-Bianchi and Lugosi \cite{cesa2006prediction}, where the losses are strongly convex, achieves a logarithmic regret in the number of iterations.

In 2003, Zinkevich \cite{zinkevich2003online} proposed a sequential version of the gradient descent algorithm (OGD), which satisfies a uniform regret bound of $\mathcal{O}(\sqrt{n})$ for an arbitrary sequence of convex cost functions and under the previous conditions (bounded gradients and domain). Later, Hazan et al. \cite{hazan2007logarithmic} proved that Zinkevich's algorithm attains a $\mathcal{O}(\log(n))$ regret for an arbitrary sequence of strongly convex functions (with bounded first and second derivatives). They also introduced an online version of the Newton-Raphson method, which they named the Online Newton Step (ONS), and demonstrated that it also achieves logarithmic regret. More algorithms and details can be found in Hazan \cite{hazan2022introduction}.

We implement the ONS algorithm to minimize the negative log-likelihood and study the selection of its hyperparameters along with its regret bounds.

\subsection{Exp-concavity and directional derivative condition}
To ensure a logarithmic regret bound, the loss function must satisfy specific conditions. First, we review the definition of exp-concavity.
\begin{Definition}(Exp-concavity)
A convex function $\ell: \Theta \rightarrow \mathbb{R}$ is $\mu$-exp-concave iff the function $p(\theta):=\exp(-\mu \ell(\theta))$ is concave.
\end{Definition}
This property is fundamental in the regret analysis and replaces the strong convexity condition required by the OGD algorithm. This means that the ONS algorithm requires a weaker hypothesis on the losses $(\ell_t)_{t=1,2,\ldots}$, to achieve logarithmic regret. Furthermore, we introduce a study based on this weaker condition, which is essential to derive the regret bound described by Hazan \cite{hazan2022introduction} in survival analysis.

\begin{Definition}(Directional derivative condition -- DDC) We say a function $\ell:\Theta \rightarrow \mathbb{R}$ satisfy the directional derivative condition for a constant $\gamma >0$ if  for any pair $\theta_1, \theta_2 \in \Theta$ \begin{equation} \ell(\theta_2) \geq \ell(\theta_1) + \nabla \ell(\theta_1) (\theta_2-\theta_1) + \frac{\gamma}{2}\left(\nabla \ell(\theta_1) (\theta_2-\theta_1) \right)^2. 
\label{eq:DDC}
\tag{DDC}
\end{equation}
\end{Definition}

To determine the directional derivative constant $\gamma$, we must first compute the exp-concavity constant $\mu$.

\begin{Lemma}
\label{lem:1}
A twice differentiable function $\ell:\Theta \rightarrow \mathbb{R}$ is $\mu$-exp-concave iff \begin{equation} \label{eq:lem1} \nabla^2 \ell(\theta) \succcurlyeq \mu \nabla \ell(\theta) \nabla \ell(\theta)^T, \qquad \theta \in \Theta. \end{equation}
This holds with 
\[
\mu \le \min_{\theta\in \Theta } \frac{ \nabla \ell(\theta)^T \nabla^2 \ell(\theta) \nabla \ell(\theta)}{||\nabla \ell(\theta)||^4}.
\]
\end{Lemma}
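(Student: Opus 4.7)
The plan is to unpack the definition of exp-concavity directly by computing the Hessian of $p(\theta) := \exp(-\mu\ell(\theta))$ and translating concavity of $p$ into a semi-definite inequality on $\ell$.

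First I would compute, using the chain rule on a twice differentiable $\ell$,
\[
\nabla p(\theta) = -\mu\, e^{-\mu \ell(\theta)}\,\nabla \ell(\theta),
\]
and then
\[
\nabla^2 p(\theta) = \mu\, e^{-\mu \ell(\theta)}\Bigl[\mu\,\nabla \ell(\theta)\nabla \ell(\theta)^T - \nabla^2 \ell(\theta)\Bigr].
\]
Since $p$ is $C^2$, it is concave on $\Theta$ iff $\nabla^2 p(\theta)\preccurlyeq 0$ for every $\theta\in\Theta$. Because the scalar factor $\mu\, e^{-\mu \ell(\theta)}$ is strictly positive, this is equivalent to
\[
\nabla^2 \ell(\theta) - \mu\,\nabla \ell(\theta)\nabla \ell(\theta)^T \succcurlyeq 0,
\]
which is exactly \eqref{eq:lem1}. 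This gives the "iff" part of the statement.

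For the displayed upper bound on $\mu$, I would test the matrix inequality against the specific direction $v=\nabla \ell(\theta)$: sandwiching \eqref{eq:lem1} between $v^T$ and $v$ yields
\[
\nabla \ell(\theta)^T \nabla^2 \ell(\theta)\, \nabla \ell(\theta) \;\geq\; \mu\,\bigl(\nabla \ell(\theta)^T\nabla \ell(\theta)\bigr)^2 \;=\; \mu\,\|\nabla\ell(\theta)\|^4.
\]
Rearranging (on the set where $\nabla \ell(\theta)\neq 0$; the bound is vacuous otherwise) and taking the infimum over $\theta\in\Theta$ gives the stated necessary condition
\[
\mu \;\leq\; \min_{\theta\in\Theta} \frac{\nabla \ell(\theta)^T \nabla^2 \ell(\theta)\,\nabla \ell(\theta)}{\|\nabla \ell(\theta)\|^4}.
\]

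There is essentially no hard step here; the only subtleties to be careful about are (i) keeping track of signs when differentiating the exponential twice so that $\nabla^2 p$ factors cleanly, and (ii) noting that the second displayed inequality is a necessary but not sufficient condition for $\mu$-exp-concavity, since it only exploits \eqref{eq:lem1} along the single direction $\nabla\ell(\theta)$ rather than all directions. This is the correct interpretation of the "$\mu \le \ldots$" formulation: it furnishes an explicit upper bound on any admissible exp-concavity constant, which is precisely what is needed later to tune the learning rate of ONS.
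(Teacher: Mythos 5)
Your proof is correct and follows essentially the same route as the paper: the displayed bound on $\mu$ is obtained exactly as in the paper's proof, by sandwiching \eqref{eq:lem1} with the direction $v=\nabla\ell(\theta)$ and rearranging where the gradient is nonzero. The only difference is cosmetic: for the ``iff'' part the paper simply cites Lemma 4.2.1 of Hazan, whereas you reproduce its proof by computing $\nabla^2 p$ directly, which is the standard argument behind that cited lemma.
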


This lemma provides us a way for calculating the exp-concavity constant $\mu$. The proof of Lemma~\ref{lem:1} can be found in Appendix~\ref{AppB}. Given a $\mu$-exp-concave function $\ell$, we can also determine its directional derivative constant $\gamma$. We have the following bound:

\begin{Lemma} \label{lem:2}
A $\mu$-exp-concave function $\ell : {\Theta}\to \mathbb R$, satisfying Assumption~\ref{ass:bounded}, admits a directional derivative constant $\gamma>0$ satisfying
\[
 \gamma \le \min_{\theta\in\Theta } \frac{- \frac{2}{\mu}\log(1 + \mu ||\nabla \ell(\theta) || D) + ||\nabla \ell(\theta) || D}{(|| \nabla \ell(\theta) || D)^2}\,.
\]
\end{Lemma}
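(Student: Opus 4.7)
The plan is to derive \eqref{eq:DDC} directly from the $\mu$-exp-concavity hypothesis via the usual exponential-linearisation trick, and then extract the largest admissible quadratic coefficient by reducing to a one-variable log inequality that is analysed by a power-series argument.

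First I would convert exp-concavity into a scalar lower bound on $\ell(\theta_2)-\ell(\theta_1)$. By definition $p(\theta):=e^{-\mu\ell(\theta)}$ is concave on $\Theta$, so for any $\theta_1,\theta_2\in\Theta$ the first-order concavity inequality reads $p(\theta_2)\le p(\theta_1)+\nabla p(\theta_1)^T(\theta_2-\theta_1)$. Substituting $\nabla p(\theta_1)=-\mu\,p(\theta_1)\,\nabla \ell(\theta_1)$, dividing by $p(\theta_1)>0$, and taking logarithms yields
\[
\ell(\theta_2)\ \ge\ \ell(\theta_1) \;-\;\tfrac{1}{\mu}\log\bigl(1-\mu z\bigr),\qquad z:=\nabla \ell(\theta_1)^T(\theta_2-\theta_1).
\]
Checking \eqref{eq:DDC} thus reduces to verifying, for all admissible $z$, the one-variable inequality $-\tfrac{1}{\mu}\log(1-\mu z)\ge z+\tfrac{\gamma}{2}z^2$.

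Second I would pin down the admissible range of $z$ and the resulting optimisation problem. By Cauchy--Schwarz and Assumption~\ref{ass:bounded} we have $|z|\le w:=\|\nabla \ell(\theta_1)\|\,D$, so $z$ ranges over the symmetric interval $[-w,w]$. Setting
\[
\phi(z)\ :=\ \frac{2\bigl(-\tfrac{1}{\mu}\log(1-\mu z)-z\bigr)}{z^2},
\]
the task becomes to choose $\gamma\le \inf_{|z|\le w}\phi(z)$ and then minimise over $\theta_1\in\Theta$.

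Third I would analyse $\phi$ by a change of variable $u=\mu z$: one gets $\phi(z)=\mu\,\psi(u)$ with $\psi(u)=2\bigl(-\log(1-u)-u\bigr)/u^2$. Using the identity $-\log(1-u)-u=\sum_{k\ge 2}u^k/k$, the series expansion $\psi(u)=1+\tfrac{2u}{3}+\tfrac{u^2}{2}+\cdots$ has only positive coefficients, so $\psi$ is monotone increasing on $(-\infty,1)$. Hence $\inf_{|z|\le w}\phi(z)=\phi(-w)$, and a direct substitution produces the expression stated in the lemma; positivity of $\phi(-w)$ (which follows from $\log(1+x)<x$) guarantees that the resulting $\gamma$ is strictly positive.

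The main obstacle I anticipate is ensuring $1-\mu z>0$ uniformly over the admissible range, since otherwise the logarithm step above is not legitimate; this forces the tacit smallness condition $\mu w<1$, which either holds by construction on the exp-concavity constant or must be absorbed into a refined upper bound. A secondary subtlety is justifying the monotonicity of $\psi$ cleanly across $u<0$ and $u>0$ simultaneously; the power-series argument above handles both signs at once and is the cleanest route.
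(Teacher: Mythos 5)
Your overall strategy is the same as the paper's: use concavity of $p=e^{-\mu\ell}$ to get $\ell(\theta_2)\ge\ell(\theta_1)-\tfrac1\mu\log(1-\mu z)$ with $z=\nabla\ell(\theta_1)^T(\theta_2-\theta_1)$, restrict $z$ to $[-w,w]$ with $w=\|\nabla\ell(\theta_1)\|D$ by Cauchy--Schwarz, and optimise a one-variable expression. In fact your treatment of the optimisation step is \emph{more} careful than the paper's: the paper asserts that $z\mapsto\tfrac1\mu\log(1-\mu z)+z$ is decreasing on all of $[-w,w]$ and deduces a pointwise lower bound $\ell(\theta_2)-\ell(\theta_1)-z\ge-\tfrac1\mu\log(1+\mu w)+w$, which fails at $z=0$ (that function has its maximum at $z=0$, not at $z=-w$); the quantity that is genuinely monotone is the ratio $\phi(z)=-2\bigl(\tfrac1\mu\log(1-\mu z)+z\bigr)/z^2$, which is exactly what you minimise. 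Your worry about $1-\mu z>0$ also resolves itself: the concavity inequality gives $0<p(\theta_2)\le p(\theta_1)(1-\mu z)$, so $1-\mu z>0$ holds automatically for every achieved $z$, with no smallness condition on $\mu w$. Two remarks on the output: your substitution yields numerator $-\tfrac2\mu\log(1+\mu w)+2w$, which matches the final display of the paper's proof and the $\gamma_t$ used in Section~5 and Algorithm~1, but not the lemma statement verbatim (which has $+w$ and appears to carry a typo); so ``direct substitution produces the expression stated in the lemma'' is not literally accurate.

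The one genuine gap is your justification of the monotonicity of $\psi(u)=2\bigl(-\log(1-u)-u\bigr)/u^2$ on $(-\infty,1)$. Positivity of the Taylor coefficients of $\psi(u)=\sum_{k\ge0}\tfrac{2u^k}{k+2}$ gives monotonicity only on $[0,1)$; for $u<0$ the terms of $\psi'(u)=\sum_{k\ge1}\tfrac{2k}{k+2}u^{k-1}$ alternate in sign and no conclusion follows. This matters because the infimum you need is attained precisely at the negative endpoint $u=-\mu w$. The fix is short: from $-\log(1-u)-u=\int_0^u\frac{v}{1-v}\,dv=u^2\int_0^1\frac{s}{1-su}\,ds$ one gets
\[
\psi(u)=2\int_0^1\frac{s}{1-su}\,ds\,,
\]
and for each $s\in(0,1]$ the integrand is increasing in $u$ on $\{u:su<1\}\supseteq(-\infty,1)$, so $\psi$ is increasing there and $\inf_{|z|\le w}\phi(z)=\phi(-w)$ as you claim. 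With that repair the argument is complete and, at the step where it matters, sounder than the proof in the paper.
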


We note that this lower bound improves upon the upper bound provided by Hazan \cite{hazan2022introduction}:
\[
\gamma\le \dfrac12 \min \Big\{\dfrac1{GD},\mu\Big\}\,,
\]
and the proof of Lemma~\ref{lem:2} can also be found in Appendix~\ref{AppB}.

\subsection{Online Newton Step}
The ONS algorithm is an online analogue of the Newton-Raphson method; see Ypma \cite{ypma1995historical} for more details. The Newton-Raphson algorithm moves in the direction of the inverse of the Hessian multiplied by the gradient. For exp-concave loss functions $\ell_t$ with $t=1,2,\ldots$, we can replace the Hessian matrix with an approximation of it:
\[A_t = \displaystyle  \sum_{k=1}^t \nabla \ell_{k}(\theta_k) \nabla \ell_{k}(\theta_k)^T.\]
At each iteration, the algorithm updates the estimation of the parameter as follows:
\[\theta_{t+1} = \theta_t - \frac{1}{\gamma} A_t^{-1} \nabla \ell_{t} (\theta_t)\,, \]
where $\gamma$ is an algorithm hyperparameter denoting the learning rate and its optimal selection aligns with the DDC constant. This might lead to a point outside the convex set $\Theta$ and so we need to project it back. This projection is somewhat different than the standard projection as it is characterized by the norm defined by $A_t$ instead of the Euclidean norm. The iteration step of the algorithm is:  
\[\theta_{t+1} = {\rm Proj}_t\Big(\theta_t - \frac{1}{\gamma} A_t^{-1} \nabla \ell_{t}(\theta_t)\Big)\,, \]
where ${\rm Proj}_t(\theta^*) \in  \displaystyle \arg\min_{\theta\in \Theta}(\theta - \theta^*)^T A_t (\theta -\theta^*)$. 

Let us remark that ONS requires to invert a large matrix $A_t$, and in order to avoid expensive calculations, we consider the Sherman-Morrisson formula \cite{sherman1950adjustment} which provides a recursion for $A_t^{-1}$ from $A^{-1}_0 := (1/\epsilon) I_d$:
\[A_t^{-1} = A^{-1}_{t-1} - \frac{A_{t-1}^{-1} \nabla \ell_{t}(\theta_t) \nabla \ell_{t}(\theta_t)^T A_{t-1}^{-1}}{1+ \nabla \ell_{t}(\theta_t) A_{t-1}^{-1} \nabla \ell_{t}(\theta_t)^T}\,,\qquad t=1,2,\ldots.\]
We formally describe the Online Newton Step algorithm~\ref{alg:ONS} in Appendix~\ref{AppB}. Hazan \cite{hazan2022introduction} proved the following regret bound of ONS.
\begin{Theorem}[Hazan \cite{hazan2022introduction}]
\label{thr:1}
    Let us consider the losses $\ell_t:\Theta \rightarrow \mathbb R$ $\mu$-exp-concave and satisfying Assumption~\ref{ass:bounded}. Then, Algorithm~\ref{alg:ONS} with hyperparameters $\gamma = \frac{1}{2} \min \{\frac{1}{GD},\mu \}$ and $\epsilon = (\gamma D)^{-2}$ satisfies
    $Regret_n \leq  \gamma^{-1} d \log(2nG^2\gamma^2D^2)$ for any $n\geq 4 $.
\end{Theorem}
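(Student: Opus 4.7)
The plan is to follow Hazan's classical ONS analysis, with Lemma~\ref{lem:2} supplying the directional derivative condition in place of raw exp-concavity. Since $\gamma \le \tfrac12\min\{1/(GD),\mu\}$ by hypothesis, the simpler upper bound recorded just after Lemma~\ref{lem:2} guarantees that each loss $\ell_t$ satisfies~\eqref{eq:DDC} with this constant $\gamma$. Writing $g_t:=\nabla\ell_t(\theta_t)$ and $\theta^\star\in\arg\min_\theta \sum_{t=1}^n \ell_t(\theta)$, applying~\eqref{eq:DDC} at the pair $(\theta_t,\theta^\star)$ and rearranging gives
\[
\ell_t(\theta_t)-\ell_t(\theta^\star)\;\le\; g_t^\top(\theta_t-\theta^\star)\;-\;\tfrac{\gamma}{2}\bigl(g_t^\top(\theta_t-\theta^\star)\bigr)^2,
\]
so the regret reduces to controlling the sum of these right-hand sides.

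Next I exploit the ONS geometry. Because $\theta_{t+1}$ is the $A_t$-projection of $\theta_t-\gamma^{-1}A_t^{-1}g_t$ and $\theta^\star\in\Theta$, the generalized Pythagorean inequality applied in the $A_t$-norm, after expanding the square, yields
\[
g_t^\top(\theta_t-\theta^\star)\;\le\; \tfrac{\gamma}{2}\bigl(\|\theta_t-\theta^\star\|_{A_t}^2-\|\theta_{t+1}-\theta^\star\|_{A_t}^2\bigr)\;+\;\tfrac{1}{2\gamma}\,g_t^\top A_t^{-1} g_t.
\]
The crucial step is then to use $A_t=A_{t-1}+g_tg_t^\top$, which splits $\|\theta_t-\theta^\star\|_{A_t}^2=\|\theta_t-\theta^\star\|_{A_{t-1}}^2+(g_t^\top(\theta_t-\theta^\star))^2$. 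The extra quadratic term is exactly what the DDC bound subtracts, so substituting back eliminates it and leaves
\[
\ell_t(\theta_t)-\ell_t(\theta^\star)\;\le\; \tfrac{\gamma}{2}\bigl(\|\theta_t-\theta^\star\|_{A_{t-1}}^2-\|\theta_{t+1}-\theta^\star\|_{A_t}^2\bigr)\;+\;\tfrac{1}{2\gamma}\,g_t^\top A_t^{-1} g_t.
\]

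Summing over $t=1,\ldots,n$, the weighted-norm differences telescope and are bounded by $\tfrac{\gamma}{2}\|\theta_1-\theta^\star\|_{A_0}^2=\tfrac{\gamma\epsilon}{2}\|\theta_1-\theta^\star\|^2\le \tfrac{\gamma\epsilon}{2}(2D)^2=2/\gamma$, using $A_0=\epsilon I_d$, the diameter bound $\|\theta_1-\theta^\star\|\le 2D$, and the tuning $\epsilon=(\gamma D)^{-2}$. For the gradient terms, the classical log-determinant identity, combined with Sherman--Morrison and the scalar inequality $x/(1+x)\le\log(1+x)$, gives $\sum_{t=1}^n g_t^\top A_t^{-1}g_t\le \log\det(A_n)-\log\det(A_0)$. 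Then $\det(A_n)\le (\mathrm{tr}(A_n)/d)^d\le (\epsilon+nG^2/d)^d$ together with $\det(A_0)=\epsilon^d$ and $\epsilon=(\gamma D)^{-2}$ yield $\sum_t g_t^\top A_t^{-1}g_t\le d\log(1+nG^2\gamma^2D^2/d)$. Assembling the pieces produces $Regret_n\le 2/\gamma+\tfrac{d}{2\gamma}\log(1+nG^2\gamma^2D^2/d)$.

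The final step is purely arithmetic: absorb the additive $2/\gamma$ and the $1/d$ inside the logarithm into the cleaner target $\gamma^{-1}d\log(2nG^2\gamma^2D^2)$. This is where the assumption $n\ge 4$ enters, through the observation $G^2\gamma^2D^2\ge 1/4$ (which follows from $\gamma\le 1/(2GD)$ in the worst case), ensuring that the argument of the log is large enough for the leftover constants to be swallowed. I expect the main obstacle to be precisely the clean cancellation in the second step: matching the DDC-generated penalty $\tfrac{\gamma}{2}(g_t^\top(\theta_t-\theta^\star))^2$ against the cross-term produced by $A_t=A_{t-1}+g_tg_t^\top$ is the structural heart of the proof, whereas everything afterwards is bookkeeping (telescoping, the log-determinant trick, the trace bound, and constant tuning).
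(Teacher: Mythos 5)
The paper does not prove this theorem: it is quoted verbatim from Hazan's textbook, and no proof appears in the appendices, so there is no in-paper argument to compare against. Your reconstruction is the standard ONS analysis, and its core is correct: the DDC with constant $\gamma=\frac12\min\{1/(GD),\mu\}$, the generalized Pythagorean inequality in the $A_t$-norm, the exact cancellation of the DDC quadratic penalty against the cross-term from $A_t=A_{t-1}+g_tg_t^\top$, the telescoping of the weighted norms, and the log-determinant potential bound $\sum_t g_t^\top A_t^{-1}g_t\le \log\det(A_n)-\log\det(A_0)\le d\log(1+nG^2\gamma^2D^2/d)$ are all exactly the right steps and are executed correctly (including the factor $2/\gamma$ rather than $1/(2\gamma)$ for the initial term, which is consistent with this paper's convention $\|\theta\|\le D$ rather than Hazan's diameter convention).

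The genuine gap is in your last, ``purely arithmetic'' step. You assert that $G^2\gamma^2D^2\ge 1/4$ follows from $\gamma\le 1/(2GD)$; the implication goes the other way: $\gamma\le 1/(2GD)$ gives $G^2\gamma^2D^2\le 1/4$, with equality only when $\mu\ge 1/(GD)$. When $\mu<1/(GD)$ one has $\gamma=\mu/2$ and $G^2\gamma^2D^2=(\mu GD)^2/4$ can be arbitrarily small, so the argument $2nG^2\gamma^2D^2$ of the target logarithm can be below $1$ and the stated bound can even be negative; your absorption of the additive $2/\gamma$ and of the $1/d$ inside the logarithm therefore does not go through as written (and even in the boundary case $G^2\gamma^2D^2=1/4$, $d=1$, $n=4$, your intermediate bound $2/\gamma+\frac{d}{2\gamma}\log(1+nG^2\gamma^2D^2/d)$ exceeds $\frac{d}{\gamma}\log(2nG^2\gamma^2D^2)$). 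To be fair, this is as much a defect of the theorem as transcribed in the paper as of your proof: Hazan's actual statement is of the form $Regret_n\le 5(\mu^{-1}+GD)\,d\log n$, and your intermediate bound $2/\gamma+\frac{d}{2\gamma}\log(1+nG^2\gamma^2D^2/d)$ does imply that form. You should either stop at your intermediate bound or prove Hazan's version; the final cosmetic rewriting is where the argument breaks.
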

Let us remind that we want to apply ONS algorithm to the losses $(\ell_t)_{t=1,2,\ldots}$ described in Equation \eqref{eq:lik}, where we assume the exponential model defined in~\ref{ass:exp}. The exp-concavity property is fundamental in the regret analysis of ONS. We first notice that we can work under (DDC) rather than $\mu$-exp-concavity, focusing our work on the study of the constant $\gamma$ which is the hyperparameter of ONS, rather than on $\mu$. Then we see that the choice of this constant is very sensitive to variations in the gradients, which depend on the number of people at risk at each time. If $\mu$ is small, which can happen when the gradient of the loss is small, the choice of  $\gamma = \frac{1}{2}\min\{\frac{1}{GD},\mu\}$ proposed by Hazan \cite{hazan2022introduction} will also be small, potentially exploding the regret bound and causing issues with the convergence. Additionally, to properly tune the hyperparameter $\gamma$ we need to know the exp-concavity constant in advance, but this constant might depend on the gradient of the losses that are not known before running the algorithm. Adjusting $\gamma$ is not trivial and we provide some insights in the following sections.


\section{Stochastic setting}
\label{sthsection}
The first solution we propose is to use a stochastic approach to bound the regret of ONS. We present the general stochastic setting introduced by Wintenberger~\cite{wintenberger2021stochastic} and apply one of its results to the survival case. The main difficulty in sequential survival analysis is the intrinsic time dependence in the loss functions $(\ell_t)_{t=1,2,\ldots}$. Indeed, even if the individuals are iid, the log-likelihoods $\ell_t$ are dependent because of the individuals that are at risk during consecutive time intervals $(t-1,t]$ for $t=1,2,\ldots$.

\subsection{Stochastic Model}
We model the arrival times $\tau_i \ge 0$ as a homogeneous Poisson process with intensity $\lambda$; see Kingman \cite{kingman1992poisson} for a reference textbook on the subject. For each $t> 0$, we define the count random variable $N_t :=  \sum_{i=1}^{\infty} \mathds{1}\{  \tau_i \leq t\}$ which represents the number of individuals that arrive before $t$, and $\tau_{N_t}$ represents the arrival time of the last individual arriving before $t$. We assume a constant rate $\lambda$, such that $\mathbb{E}[N_t] = \lambda t$, indicating the average number of individuals arriving at time $t$. Additionally, in this section, we consider the covariate functions to be constant, i.e., $x_i(t) = x_i$ for all $t >0$, and that they follow, independently, the distribution of a random variable $X$. In this stochastic setting we rewrite the loss function: \begin{equation}\label{stcloss}\ell_t(\theta) = \displaystyle \sum_{i=1}^{N_t} -y_{it} \theta^T x_i + r_{it} \exp(\theta^T x_i)\big( (u_i \wedge t) - (\tau_i \vee (t-1))\big), \end{equation} 
where we replaced $N$ by $N_t$ in Equation \eqref{eq:lik}. It is important to note that the derivation of this expression is based on the assumption of the exponential model~\ref{ass:exp}. Now, we want to apply ONS to optimize this loss and study what happens with its stochastic regret. 

For each iteration $t=1,2,\ldots$, we consider the stochastic loss $\ell_t$ and the filtration $\mathcal{F}_t$ of $\sigma$-algebras such that the predictions of the online learning algorithm $\theta_t$ and the past losses $(\ell_s)_{s=1}^{t-1}$ are $\mathcal{F}_{t-1}$-measurable. To simplify notation, we use $\mathbb E_t[\cdot]$ to represent the conditional expectation given $\mathcal F_t$, denoted as $\mathbb E[ \cdot \vert \mathcal F_t]$. In this context, our objective is to minimize the stochastic regret at any horizon time $n \ge 1$:
\[
Risk_n := \displaystyle \sum_{t=1}^n L_t(\theta_t) - \min_{\theta \in \Theta} \sum_{t=1}^n L_t(\theta)\,,
\] 
where $L_t(\theta_t)$ is the conditional risk, defined as $L_t(\theta_t) := \mathbb{E}_{t-1}[\ell_{t}(\theta_t)]$ for $t=1,2,\ldots $. Let us notice that in our case, where the stochastic losses $\ell_t$ are defined in Equation \eqref{stcloss}, the $\sigma$-algebra $\mathcal F_t$ is generated by $y_{is}$, $x_i$, $\tau_i$, and $u_{is} = \min\{u_i, s\}$ for all $i = 1, \ldots, N_{t-1}$ and $s = 1, \ldots, t-1$.

The main difference with the setting presented in Section~\ref{sec:OCO} is the use of the conditional risk $L_t$ instead of the loss functions $\ell_t$ in the calculation of regret. This allows us to relax the convexity conditions imposed on $\ell_t$ and instead focus on the convexity properties of~$L_t$.

\subsection{Stochastically Exp-Concavity}
It was proved in Wintenberger~\cite{wintenberger2021stochastic} that the ONS algorithm achieves a $\mathcal{O}(\log(n))$ stochastic regret bound under a stochastic exp-concavity condition for $\ell_{t}$ which is described below.

\begin{Definition}[Stochastic exp-concavity]
\label{con:1} A sequence of random functions $(\ell_{t})_{t=1,2,\ldots}$ is said to be $\gamma$ stochastically exp-concave with respect to a filtration $\mathcal F_t$ if for all $\theta_1,\theta_2 \in \Theta$ and $t=1,2,\ldots$  
\begin{equation*} L_t(\theta_1) \leq L_t(\theta_2) + \nabla L_t(\theta_1)^T (\theta_1-\theta_2) - \frac{\gamma}{2} \mathbb{E}_{t-1} \left[(\nabla \ell_{t}(\theta_1)^T(\theta_1-\theta_2))^2 \right], \qquad a.s.\end{equation*}
\end{Definition}

Let us note that this property corresponds to the stochastic counterpart of the directional derivative condition~\eqref{eq:DDC}. This property plays a crucial role in the proof of Theorem 7 of Wintenberger \cite{wintenberger2021stochastic}, which establishes the logarithm stochastic regret bound. However, the losses $\ell_t$ defined in \eqref{stcloss} do not satisfy this property. Nevertheless, we demonstrate that the events where this inequality is not fulfilled have a small probability and therefore, we can still bound the stochastic regret. In addition, we need to make the following design assumption.

\begin{assumption}\label{ass:strongconv}
There exist $A>0$ such that 
$\mathbb E [xx^{\top} \mathds 1\{ T \leq C \} (1-T)_+ \vert \tau = 0 ] \succcurlyeq A I_d.$
\end{assumption}

This assumption is not trivial, and it is not always satisfied; however, when all the individuals experience an event and $T\leq 1$, it corresponds to a classical design. When $t \ge 1$ an alternative analyses is required.

\subsection{Stochastic Regret}
To apply Theorem 7 from Wintenberger \cite{wintenberger2021stochastic}, the losses need to satisfy certain hypothesis, among which are stochastic exp-concavity and a stochastic bound on the gradients of the losses. We prove that our losses $\ell_t$, whose do not satisfy exactly the conditions of Theorem 7, still leads ONS algorithm to achieve a logarithmic stochastic regret. We present the result in the following theorem.

\begin{Theorem}\label{thr:stcregret}
Given $\varrho >0, n \ge1$ and the stochastic losses $(\ell_t)_{t=1,2,\ldots}$ from Equation \eqref{stcloss}, then under Assumption~\ref{ass:strongconv}, a bounded domain of diameter $D$ and hyperparameter $\gamma$, the stochastic exp-concavity constant, the ONS algorithm has logarithmic stochastic regret with probability $1-4\varrho$. Specifically, $Risk_n=\mathcal O(\log(n/\varrho)/\gamma)$, $n\ge 1$.
\end{Theorem}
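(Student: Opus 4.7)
The plan is to reduce the theorem to Theorem~7 of \cite{wintenberger2021stochastic}, which provides an $\mathcal O(\log(n/\varrho)/\gamma)$ stochastic regret bound for ONS as soon as the losses are stochastically exp-concave and have gradients that are, with high probability, bounded. Both hypotheses require work for the survival losses~\eqref{stcloss}: the number of individuals at risk is a random Poisson count, so $\|\nabla\ell_t\|$ is only controllable on a good event; and the stochastic exp-concavity inequality of Definition~\ref{con:1} must be extracted from the design condition in Assumption~\ref{ass:strongconv}. The four failure events paid along the way account for the $4\varrho$ in the conclusion.

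\textbf{Two main steps.} First, I would establish stochastic exp-concavity. A natural sufficient condition, analogous to Lemma~\ref{lem:1}, is
\[
\nabla^2 L_t(\theta) \;\succcurlyeq\; \gamma\, \mathbb E_{t-1}\bigl[\nabla\ell_t(\theta)\nabla\ell_t(\theta)^T\bigr],\qquad \theta\in\Theta,\ t\ge 1.
\]
Differentiating \eqref{stcloss} twice, both sides can be written as integrals in the generic individual against the Poisson intensity $\lambda$ on $(t-1,t]$. The factor $\mathds 1\{t_i\le c_i\}(1-t_i)_+\, x x^T$ that appears in the conditional Hessian is lower bounded by Assumption~\ref{ass:strongconv}, while the right-hand side is upper bounded using the bounded domain and the bounded support of the covariates, yielding a time-independent $\gamma$. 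Second, I would handle the random gradient: a Chernoff bound on the Poisson counts ensures $N_t\le 2\lambda t + C\log(n/\varrho)$ uniformly for $t\le n$ with probability $1-\varrho$, which produces a deterministic bound $G=\mathcal O(\sqrt{\log(n/\varrho)})$ on $\|\nabla\ell_t\|$. Two further concentration arguments (Freedman-type inequalities for the martingales $\nabla\ell_t(\theta^\star)-\nabla L_t(\theta^\star)$ and for the empirical Hessian) allow one to carry the in-expectation stochastic exp-concavity into the bound on the observed quadratic form that actually enters Wintenberger's analysis. Injecting $\gamma$ and $G$ into Theorem~7 of \cite{wintenberger2021stochastic} (which itself produces one tail term of order $\varrho$) yields $Risk_n=\mathcal O(\log(n/\varrho)/\gamma)$.

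\textbf{Main obstacle.} The delicate point is obtaining a constant $\gamma$ that is uniform in $t$. The comment following Assumption~\ref{ass:strongconv} explicitly flags that the case $t\ge 1$ needs an alternative argument: the assumption is phrased for an individual with $\tau=0$ and residual life $(1-t_i)_+$, whereas on the interval $(t-1,t]$ the contributing individuals have arrival times spread throughout $(0,t]$. I would exploit the translation invariance of the homogeneous Poisson process and the stationarity property assumed on $g$ to reduce any $(t-1,t]$ to the canonical interval $(0,1]$, where Assumption~\ref{ass:strongconv} applies directly. A secondary subtlety is that the regret contribution on the complement event, although a priori unbounded, is absorbed by the uniform bound on $\ell_t$ on the compact set $\Theta$ together with the $\mathcal O(\varrho)$ probability of that event.
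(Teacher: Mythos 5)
Your overall strategy matches the paper's: reduce to Theorem~7 of Wintenberger via Proposition~\ref{prop:olivier}, get strong convexity of $L_t$ from Assumption~\ref{ass:strongconv} by translating each interval $(t-1,t]$ back to $(0,1]$ (this is exactly what Lemma~\ref{risk} and Proposition~\ref{lowerbnd} do, with the already-present individuals handled by simply discarding their convex contribution to the Hessian), and pay $4\varrho$ for the various failure events. However, two steps as you describe them would not go through. First, the gradient bound: a Chernoff bound on the Poisson counts gives $N_t\le 2\lambda t+C\log(n/\varrho)$, but $\nabla\ell_t$ is a sum over the individuals \emph{at risk} in $(t-1,t]$, and bounding that sum by $N_t$ yields $\|\nabla\ell_t\|=\mathcal O(t)$, not $\mathcal O(\sqrt{\log(n/\varrho)})$. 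The essential ingredient, which your argument omits, is that the at-risk count $R_t=\sum_i r_{it}$ stays $\mathcal O(\log(1/\varrho))$ \emph{uniformly in} $t$: since the hazard is bounded below by $e^{-Dx_\infty}$, the probability that an individual arriving at time $s$ is still at risk at time $t$ decays geometrically in $t-s$, so only $\mathcal O(1)$ of the $\lambda t$ arrivals survive to time $t$ in expectation; one then needs Chernoff--Hoeffding on these Bernoulli survival indicators conditionally on $N_t$ (this is Lemma~\ref{lem:prop4} and Proposition~\ref{upperbnd}, the most technical part of the proof).

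Second, the mechanism for invoking Theorem~7 when the gradient bound only holds with high probability. You propose to absorb the complement event using a uniform bound on $\ell_t$ over the compact $\Theta$ times its $\mathcal O(\varrho)$ probability; but $\ell_t$ is not uniformly bounded (the number of summands is an unbounded Poisson count), and more fundamentally the hypotheses of Theorem~7 are almost-sure conditional-moment conditions that cannot be restricted to a global high-probability event without destroying adaptedness. The paper's device is to define the $\mathcal F_{t-1}$-measurable events $\Omega_{t-1}$ on which the moment conditions up to time $t$ hold, and to apply Theorem~7 to the truncated losses $\hat\ell_t:=\ell_t\,\mathds 1\{\Omega_{t-1}\}$, which satisfy \textbf{(H1)}--\textbf{(H3)} almost surely; the bound on $\sum_t(\hat L_t(\theta_t)-\hat L_t(\theta^*))$ then transfers to the true risks on the event $\bigcap_t\Omega_{t-1}$, which costs the fourth $\varrho$. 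Your Freedman-type concentration steps are not needed once this construction is in place, since the martingale concentration is already internal to Wintenberger's theorem.
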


The proof of Theorem~\ref{thr:stcregret} can be found in Appendix~\ref{AppC} and the explicit regret bound in Equation \eqref{stc:explicitbound}. To finish, we prove the following corollary.

\begin{Corollary}\label{cor:stc}
Given $\varrho >0, n\ge1$ and the stochastic losses $(\ell_t)_{t=1,2,\ldots}$ from Equation \eqref{stcloss}, we consider $\theta_t$ the ONS prediction at time $t$ and $\bar \theta_n$ the average prediction $\bar \theta_n = \frac{1}{n} \sum_{t=1}^n \theta_t$. Defining the optimal parameter
\[\theta^* = \arg\min_{\theta \in \Theta}  \sum_{t=1}^n L_{t}(\theta),\]
then, under Assumption~\ref{ass:strongconv}, a bounded domain of diameter $D$ and hyperparameter $\gamma$, with probability $1-4\varrho$ we have:
\[
 ||\bar\theta_n - \theta^* ||^2 \leq \mathcal O \left(\frac{\log(n/\varrho)}{\gamma n}\right)\,, n\ge 1 .
\]    
\end{Corollary}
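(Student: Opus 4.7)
The plan is to combine the stochastic regret bound of Theorem~\ref{thr:stcregret} with a strong-convexity-style lower bound that translates excess risk into squared parameter error, then apply Jensen's inequality to pass from the per-step iterates $\theta_t$ to the running average $\bar\theta_n$.

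First I would show that Assumption~\ref{ass:strongconv} upgrades each conditional risk $L_t$ to a strongly convex function on $\Theta$. Starting from the Hessian formula
\[
\nabla^2 \ell_t(\theta) = \sum_{i=1}^{N_t} r_{it}\exp(\theta^T x_i)\, x_i x_i^T \big((u_i \wedge t)-(\tau_i \vee (t-1))\big),
\]
and taking conditional expectation under the Poisson arrival model and the shift-invariance property $g(t\vert x,\tau)=g(t-\varepsilon\vert x,\tau-\varepsilon)$, the expected per-individual Hessian reduces to the object controlled by Assumption~\ref{ass:strongconv}. Combined with a lower bound on $\exp(\theta^T x)$ on the bounded domain $\Theta$, this produces a uniform inequality $\nabla^2 L_t(\theta)\succcurlyeq \alpha I_d$ for some deterministic $\alpha>0$ independent of $t$. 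Well-specification of the exponential model in Section~\ref{sthsection} ensures $\nabla L_t(\theta^*)=0$ for every $t$, so the quadratic lower bound
\[
L_t(\theta)-L_t(\theta^*)\ge \tfrac{\alpha}{2}\|\theta-\theta^*\|^2
\]
holds pointwise for each $t$.

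Summing this inequality over $t$ and invoking Theorem~\ref{thr:stcregret}, I obtain on the event of probability $1-4\varrho$
\[
\tfrac{\alpha}{2}\sum_{t=1}^n \|\theta_t-\theta^*\|^2 \le \sum_{t=1}^n \big(L_t(\theta_t)-L_t(\theta^*)\big)=Risk_n = \mathcal O(\log(n/\varrho)/\gamma).
\]
Applying Jensen's inequality to the convex map $\theta\mapsto\|\theta-\theta^*\|^2$ gives $\|\bar\theta_n-\theta^*\|^2 \le n^{-1}\sum_t \|\theta_t-\theta^*\|^2$, and the two bounds combine to yield the stated $\mathcal O(\log(n/\varrho)/(\gamma n))$ rate.

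The main obstacle I anticipate is producing the uniform strong-convexity constant $\alpha$. Assumption~\ref{ass:strongconv} is stated only for a single individual with $\tau=0$ and $T\le 1$, whereas we need a bound on $\nabla^2 L_t$ at arbitrary $t\ge 1$ with arrivals governed by the Poisson process and a time-shifted at-risk interval $(\tau_i\vee(t-1),\,u_i\wedge t]$. The argument must (i) use the shift-invariance of $g$ and $f$ to reduce arrivals in $(t-1,t]$ to the $\tau=0$ case, (ii) handle the preexisting population via a stationarity argument, and (iii) absorb the multiplicative factor $\exp(\theta^T x)$ uniformly over $\theta\in\Theta$. A secondary point is ensuring $\theta^*$ is interior so that $\nabla L_t(\theta^*)=0$; under well-specification and a sufficiently large compact $\Theta$ this is automatic, but otherwise a boundary KKT argument would be required.
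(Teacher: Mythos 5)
Your argument matches the paper's own proof: the paper establishes the uniform strong convexity you need in Proposition~\ref{lowerbnd} (via the decomposition $L_t = J + R_t$ of Lemma~\ref{risk}, where $R_t$ is convex so only the new-arrivals term $J$ requires a Hessian lower bound, supplied exactly by Assumption~\ref{ass:strongconv} after the shift-invariance reduction to $\tau=0$), then combines $\sum_{t=1}^n\|\theta_t-\theta^*\|^2\le \mu^{-1}\,Risk_n$ with Theorem~\ref{thr:stcregret} and the averaging inequality $\|\bar\theta_n-\theta^*\|^2\le n^{-1}\sum_{t=1}^n\|\theta_t-\theta^*\|^2$. The only cosmetic difference is that the paper invokes the first-order optimality inequality $\nabla L_t(\theta^*)^{\top}(\theta_t-\theta^*)\ge 0$ rather than requiring $\nabla L_t(\theta^*)=0$ at an interior point, which sidesteps the interiority concern you raise at the end.
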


This corollary ensures the convergence of the algorithm predictions to the real parameter, which is possible thanks to the strong convexity of the risk functions $L_t$. It is important to remark that this does not hold in the deterministic setting. The proof can be found in Appendix~\ref{AppC} and the explicit bound in Equation \eqref{cor:explicitbound}.

\section{Survival ONS algorithm}
As mentioned earlier, the choice of $\gamma$ has a significant influence on the algorithm's performance, particularly regarding the regret bound. To avoid convergence issues and address the challenge posed by the small optimal constant proposed by Hazan \cite{hazan2022introduction}, we propose an adaptive setting that allows us to select the most suitable learning rate at each step while maintaining control over the regret bound. We introduce SurvONS (Algorithm~\ref{alg:BOAm}), a survival version of MetaGrad from van Erven et al. \cite{van2021metagrad}, that uses Bernstein Online Aggregation (BOA, introduced in Wintenberger \cite{wintenberger2017optimal}) to aggregate multiple ONS applied to an adaptive auxiliary function. SurvONS strategically selects larger learning rates to handle sub-optimal parameters. The key difference between our algorithm and MetaGrad lies in the approach to updating the adaptive learning rate. We explain this algorithm in detail throughout this section.
 
\subsection{Recursive adaptation to the constants}
We present first the recursive adaptation of the constants $\mu$ and $\gamma$. Lemma~\ref{lem:1} provides a bound for the exp-concavity constant $\mu$, and Lemma~\ref{lem:2} offers a bound for the directional derivative constant $\gamma$ based on $\mu$. We aim to apply this approach to $\ell = \ell_t$ for all $t = 1,2,\ldots$, and recursively obtain $\mu_t$ and $\gamma_t(\mu_t)$ such that they satisfy the bounds of Lemma~\ref{lem:1} and Lemma~\ref{lem:2}.

In Hazan's approach, as described in \cite{hazan2022introduction}, the idea is to select a universal constant $\mu$ that renders all the functions $(\ell_t)_{t=1,2,\ldots,}$ $\mu$-exp-concave. The natural choice would be to take: \[\mu := \displaystyle \min_{t \in \{1,\ldots,n\}} \mu_t^*,\qquad \text{where} \qquad \displaystyle \mu_t^* := \min_{\theta \in \Theta} \frac{\nabla \ell_t(\theta)^{\top} \nabla^2\ell_t(\theta)\nabla \ell_t(\theta)}{ || \nabla \ell_t(\theta)||^4}, \]

is the bound given by Lemma~\ref{lem:1}. With this configuration, we guarantee exp-concavity for every function. However, the challenge of minimizing over the parameter set $\Theta$ in the definition of $\mu_t^*$ might be more intricate than minimizing the loss function $\ell_t$. In addition, we can not know the constant in advance because $\ell_t$ is revealed at the $t$-th iteration only in our online setting. 

To solve this problem, we define at each time $t=1,2,\ldots$ an adaptive estimation of the exp-concavity constant:

\[\mu_t := \frac{\nabla \ell_t(\theta_t)^{\top} \nabla^2\ell_t(\theta_t)\nabla \ell_t(\theta_t)}{ || \nabla \ell_t(\theta_t)||^4}, \] and, similarly from Lemma~\ref{lem:2},  \[ \gamma_t (\mu_t) :=  \frac{- \frac{2}{\mu_t}\log(1 + \mu_t ||\nabla \ell_t(\theta_t) || D) + 2 ||\nabla \ell_t(\theta_t) || D}{(|| \nabla \ell_t(\theta_t) || D)^2}, \] 

where $\theta_t$ is the parameter predicted by the algorithm at time $t$. Let us notice that this choice of $\mu_t \geq \mu$ and $\gamma_t (\mu) \geq \gamma(\mu)$ assures the exp-concavity and the directional derivative condition for $\ell_t$ close to $\theta_t$ at time $t$.
We sometimes refer to $\gamma_t(\mu_t)$ as $\gamma_t$ when the specification is not necessary. 

\subsection{SurvONS}
Now, we have an adaptive way to choose $\mu_t$ and $\gamma_t$ that preserves the exp-concavity properties at each iteration. However, this choice might not be optimal, in some iterations the gradient $\nabla \ell_t$ can be close to zero due to the lack of individuals at risk, and this might lead to numeric problems setting $\mu_t$ and $\gamma_t$. Thus we propose an intermediate choice of the learning rate. Given a user specified constant $\gamma >0$, we define for each time $t =1,2\ldots$: 
\[  \tilde \gamma_t := \max\{\gamma_t(\mu_t)/4,\gamma \}\, ,\] 
a value that chooses a portion of the optimal directional derivative condition constant $\gamma_t/4$ when it is not too small, and the user specified constant $\gamma$ when the quarter of the optimal constant decreases under $\gamma$. This choice $\tilde\gamma_t$ is a trade off in between choosing the optimal directional derivative condition constant and a worse constant when the optimal one is susceptible to bring convergence problems.

In order to keep the logarithmic regret bound we cannot directly use the adaptive choice of the constant as the algorithm's learning rate. Instead, it was proposed by van Erven et al. \cite{van2021metagrad} to optimize an adaptive auxiliary function. Let us consider $\hat\theta$ such that $\ell_t(\hat\theta)$ and $\nabla \ell_t(\hat\theta)$ have been observed and $\gamma >0$, we define the directional derivative function: \begin{equation} \label{eq:auxfunc} \hat{\ell}_{t,\gamma}(\theta) := \ell_t(\hat\theta) + \nabla \ell_t(\hat\theta)(\theta - \hat\theta) + \frac{\gamma}{2} \left(\nabla\ell_t(\hat\theta)(\theta - \hat\theta)  \right)^2, \qquad \theta\in \Theta\,,\qquad t=1,2,\ldots\,.
\end{equation} We prove that this function satisfies the directional derivative condition for a different constant $\hat{\gamma}$. 

\begin{Lemma}\label{lem:aux} Let $\gamma >0$, $\Theta \subseteq \mathbb{R}^d$ of diameter $D>0$, $\hat \theta \in \Theta$ and $\ell_t:\Theta \rightarrow \mathbb{R}$ the log-likelihood defined in Equation \eqref{eq:lik}. Then, the function $\hat{\ell}_{t,\gamma}$ from \eqref{eq:auxfunc} satisfies for every $\theta_1,\theta_2 \in \Theta$: \begin{align*} \hat{\ell}_{t,\gamma}(\theta_2) & \geq \hat{\ell}_{t,\gamma}(\theta_1) + \nabla \hat{\ell}_{t,\gamma}(\theta_1)(\theta_2-\theta_1)\\ & \hspace{2cm} + \frac{\gamma}{2(1+\tilde\gamma\nabla \ell_t(\hat\theta)(\theta_1-\hat\theta))^2}\left( \nabla \hat{\ell}_{t,\gamma}(\theta_1)(\theta_2-\theta_1)\right)^2, 
\end{align*} 
and thus, the function $\hat{\ell}_{t,\gamma}$ has directional derivative constant $\hat{\gamma}$ with $\hat{\gamma} := \frac{\gamma}{2(1+\gamma D ||\ell_t (\hat\theta)|| )^2}$.
\end{Lemma}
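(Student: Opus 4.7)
Let me write $g := \nabla \ell_t(\hat\theta)$ to compress the notation. Observing that $\hat\ell_{t,\gamma}$ is a quadratic in $\theta$, I would first compute directly
\[
\nabla \hat\ell_{t,\gamma}(\theta) \;=\; g\bigl(1 + \gamma\, g^T(\theta-\hat\theta)\bigr), \qquad \nabla^2 \hat\ell_{t,\gamma}(\theta) \;=\; \gamma\, g g^T.
\]
Since $\hat\ell_{t,\gamma}$ is quadratic, Taylor's theorem holds with equality, which yields
\[
\hat\ell_{t,\gamma}(\theta_2) - \hat\ell_{t,\gamma}(\theta_1) - \nabla\hat\ell_{t,\gamma}(\theta_1)^T(\theta_2-\theta_1) \;=\; \tfrac{\gamma}{2}\bigl(g^T(\theta_2-\theta_1)\bigr)^2.
\]
This is the key identity; the rest is algebra.

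The next step is to rewrite the right-hand side in terms of $\nabla\hat\ell_{t,\gamma}(\theta_1)^T(\theta_2-\theta_1)$ rather than $g^T(\theta_2-\theta_1)$. Using the gradient formula,
\[
\nabla\hat\ell_{t,\gamma}(\theta_1)^T(\theta_2-\theta_1) \;=\; \bigl(1+\gamma\,g^T(\theta_1-\hat\theta)\bigr)\, g^T(\theta_2-\theta_1),
\]
so squaring and solving gives $\bigl(g^T(\theta_2-\theta_1)\bigr)^2 = \bigl(\nabla\hat\ell_{t,\gamma}(\theta_1)^T(\theta_2-\theta_1)\bigr)^2 / \bigl(1+\gamma\,g^T(\theta_1-\hat\theta)\bigr)^2$. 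Substituting this into the identity above yields the first displayed inequality of the lemma (in fact with equality, which a fortiori gives $\geq$). The coefficient $\gamma/\bigl(2(1+\gamma\,g^T(\theta_1-\hat\theta))^2\bigr)$ matches what is claimed, modulo what looks like a typo $\tilde\gamma \leftrightarrow \gamma$ inside the denominator.

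To extract the uniform constant $\hat\gamma$, I would now bound the denominator from above, independently of $\theta_1$. By the triangle inequality, Cauchy–Schwarz and $\|\theta_1-\hat\theta\|\le D$,
\[
\bigl|1+\gamma\,g^T(\theta_1-\hat\theta)\bigr| \;\le\; 1 + \gamma\,\|g\|\,D \;=\; 1 + \gamma D\,\|\nabla\ell_t(\hat\theta)\|,
\]
so $(1+\gamma\,g^T(\theta_1-\hat\theta))^2 \le (1+\gamma D\|\nabla\ell_t(\hat\theta)\|)^2$. Plugging this into the inequality established in the previous paragraph and noting that the direction of the bound is preserved (we are lower-bounding a positive coefficient) gives DDC for $\hat\ell_{t,\gamma}$ with the announced constant $\hat\gamma = \gamma/\bigl(2(1+\gamma D\|\nabla\ell_t(\hat\theta)\|)^2\bigr)$; the extra factor of $2$ in the denominator leaves a comfortable safety margin.

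The main obstacle is really a bookkeeping one: recognizing the factorisation $\nabla\hat\ell_{t,\gamma}(\theta_1) = g\,(1+\gamma g^T(\theta_1-\hat\theta))$ so that dividing by $(1+\gamma g^T(\theta_1-\hat\theta))^2$ is not only legal but cancels exactly with the Taylor remainder. A small care-point is that $1+\gamma g^T(\theta_1-\hat\theta)$ is not assumed positive, but since it enters only squared, its sign is irrelevant; when it vanishes the gradient $\nabla\hat\ell_{t,\gamma}(\theta_1)$ vanishes too, and the inequality reduces to $\hat\ell_{t,\gamma}(\theta_2)\ge \hat\ell_{t,\gamma}(\theta_1)$, which follows from the Taylor identity since $\gamma g g^T \succcurlyeq 0$.
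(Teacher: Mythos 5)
Your proof is correct and follows essentially the same route as the paper's: both hinge on the factorisation $\nabla\hat{\ell}_{t,\gamma}(\theta_1)=\bigl(1+\gamma\,\nabla\ell_t(\hat\theta)^T(\theta_1-\hat\theta)\bigr)\nabla\ell_t(\hat\theta)$, the exact quadratic expansion of $\hat{\ell}_{t,\gamma}$, and a final Cauchy--Schwarz bound $|\nabla\ell_t(\hat\theta)^T(\theta_1-\hat\theta)|\le D\|\nabla\ell_t(\hat\theta)\|$ to obtain a uniform constant. Your organisation via the Taylor identity is slightly cleaner than the paper's expand-and-regroup computation, and your handling of the degenerate case $1+\gamma\,\nabla\ell_t(\hat\theta)^T(\theta_1-\hat\theta)=0$ and of the stated constant's extra factor of $2$ (which the paper's own proof drops) are welcome additions, but they do not change the substance of the argument.
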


The proof of Lemma~\ref{lem:aux} is presented in Appendix~\ref{AppD}. The idea of the algorithm is to use ONS routine to optimize the functions $\hat{\ell}_{t,\gamma} = \hat{\ell}_{t,\tilde \gamma_t}$, i.e., the auxiliary function with $\gamma = \tilde \gamma_t$, which adapt at each step according to the current optimal $\gamma_t$ and the algorithm predictions $\theta_t$.

In addition, to obtain an algorithm that is robust for the choice of the learning rate, we propose an aggregation procedure which applies ONS and combines it with multiple choices of the learning rate $\gamma$. To formalize this idea, we consider a grid $\Gamma=\{\gamma_i\}_{i=1,\ldots,K}$ and $\mathcal{E} = \{ \epsilon_i\}_{i=1,\ldots,K}$ such that  $\epsilon_i = \frac{1}{(\gamma_i D)^2}$ for all $i=1,\cdots,K$. Then, at each iteration $t = 1,\ldots,n$, and for each $i = 1,\ldots,K$, we define $\tilde\gamma_{it} =  \max\{ \gamma_t/4,\gamma_i\}$ and we aggregate ONS applied to $(\hat{\ell}_{t, \tilde\gamma_{it}})_{t=1,2,\ldots}$. The aggregation is held by BOA algorithm of Wintenberger \cite{wintenberger2017optimal}, which is a recursive procedure that considers exponential weights with a second order refinement. The algorithm SurvONS is described in Algorithm~\ref{alg:BOAm} and it is important to notice that the difference between SurvONS and MetaGrad is the choice of the constant~$\tilde \gamma$.

\begin{algorithm}[t]
\caption{SurvONS}
    \begin{algorithmic}
    \STATE {\bfseries Input: }$(\ell_t)_{t=1,2,\ldots}, D>0, n\geq1,$ grids $\Gamma, \mathcal{E}$\\
    \STATE {\bfseries Initialization:} for each $\gamma_k$ in $\Gamma$:   $\theta_0(\gamma_k) \in \Theta$, $\pi_{0,k} = \frac{1}{K}$, $\hat{\theta}_0 \in \Theta $, $A^{-1}_0 = \mathcal{E}^{-1} \mathds{1}_d$\\
    \FOR{iteration $t=1,\ldots,n$}
        \STATE \textbf{Update:} $\hat{\theta}_t = \displaystyle \sum_{k=1}^K \pi_{t,k} \, \theta_t(\gamma_k) $
        \STATE \textbf{Observe:} $\nabla\ell_t(\hat{\theta}_t) =  \displaystyle \sum_{i=1}^N -y_{it} x_i(u_i) + r_{it} \int_{\tau_i \vee t-1}^{u_i \wedge t}\exp(\hat{\theta}_t^T x_i(s)) x_i(s)ds $\\ ~~~~~~~~~~~~~ $\nabla^2\ell_t(\hat{\theta}_t) =  \displaystyle  \sum_{i=1}^N  r_{it} \int_{\tau_i \vee t-1}^{u_i \wedge t} \exp(\hat{\theta}_t
^T x_i(s))x_i(s) x_i(s)^Tds $\\
         $~~~~~~~~~~~~~~ \mu_t = \frac{\nabla\ell_t(\hat{\theta_t})^T \nabla^2\ell_t(\hat{\theta_t})\nabla\ell_t(\hat{\theta_t})}{||\nabla\ell_t(\hat{\theta}_t)||^4}$\\  $~~~~~~~~~~~~~~ \gamma_t = 2\frac{- \frac{1}{\mu_t}\log(1 + \mu_t || \nabla\ell_t(\hat{\theta_t}) || D) + || \nabla\ell_t(\hat{\theta_t}) || D}{(|| \nabla\ell_t(\hat{\theta_t}) || D)^2}$\\
        \FOR{$\gamma_k \in \Gamma$}
        \STATE \textbf{Observe:} $\tilde \gamma_t = \max \{ \gamma_t/4,  \gamma_k \} $\\ ~~~~~~~~~~~~~~$\nabla \hat{\ell}_{t,\tilde \gamma_t}(\theta_t(\gamma_k))= \nabla \ell_t(\hat{\theta}_t)(1+ \tilde\gamma_t\nabla \ell_t(\hat{\theta}_t)(\theta_t(\gamma_k) - \hat{\theta}_t))$
        \STATE \textbf{Recursion:} 
                 
         \STATE $$A^{-1}_t = A^{-1}_{t-1} - \frac{A_{t-1}^{-1} \nabla \hat{\ell}_{t,\tilde \gamma_t}(\theta_t(\gamma_k)) \nabla \hat{\ell}_{t,\tilde \gamma_t}(\theta_t(\gamma_k))^T A_{t-1}^{-1}}{1+ \nabla \hat{\ell}_{t,\tilde \gamma_t}(\theta_t(\gamma_k)) A_{t-1}^{-1} \nabla \hat{\ell}_{t,\tilde \gamma_t}(\theta_t(\gamma_k))^T} $$\\ $$\theta_{t+1}(\gamma_k) = {\rm Proj}_t\left(\theta_t(\gamma_k) - \frac{1}{\gamma_k} A_t^{-1} \nabla  \hat{\ell}_{t,\tilde \gamma_t} (\theta_t(\gamma_k))  \right)$$
        \ENDFOR
        \STATE \textbf{Update:} $\pi_{t+1,\cdot} =  \pi_t \exp\left( -\Gamma \nabla \ell_t(\hat{\theta}_t)^T(\hat{\theta}_t - \theta_t(\Gamma))- \Gamma^2(\nabla \ell_t(\hat{\theta}_t)(\hat{\theta}_t - \theta_t(\Gamma)))^2\right)$
    \ENDFOR
    \RETURN $\hat{\theta}_n$
    \end{algorithmic}
    \label{alg:BOAm}
\end{algorithm}

Aggregation methods allow us to avoid bad choices of $\gamma$ and therefore, the convergency issues. Let us remind that we consider the exponential model~\ref{ass:exp}. We prove that the regret of Algorithm~\ref{alg:BOAm} is bounded.

\begin{Theorem}\label{regbnd} Let $n\ge1$, $(\ell_t)_{ t=1,\ldots,n}$ be the sequence of losses defined in~\eqref{eq:lik}, that are assumed to satisfy Assumption~\ref{ass:bounded} and \eqref{eq:DDC} with constants $\gamma_t \in (0,1/GD)$. Let $K\geq 1$ and $\Gamma \in (0,1/(4GD))^K$. Then, Algorithm~\ref{alg:BOAm} with hyperparameters $\Gamma$ and $\mathcal E = 1/(\Gamma D)^2$, satisfies the regret upper-bound: 
\[ Regret_n \leq  \min_{\gamma \in \Gamma} \bigg\{ \frac{2\log(K)+5d\log(n)}{\gamma} + \gamma G^2D^2n_{\gamma}\bigg\} \, ,\]
where $ n_{\gamma} :=   \sum_{t=1}^n \mathds{1}\{\gamma_t < \gamma\}$, $\gamma>0$.
\end{Theorem}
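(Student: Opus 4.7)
The plan follows a MetaGrad-style regret decomposition~\cite{van2021metagrad} tailored to the adaptive learning rate $\tilde\gamma_{kt}=\max\{\gamma_t/4,\gamma_k\}$. Fix $\gamma=\gamma_k\in\Gamma$, a comparator $\theta^*\in\arg\min_{\theta\in\Theta}\sum_t\ell_t(\theta)$, and abbreviate $g_t:=\nabla\ell_t(\hat\theta_t)$, $f_{t,k}:=g_t^{\top}(\hat\theta_t-\theta_t(\gamma))$, $\tilde\gamma_t:=\tilde\gamma_{kt}$. Applying~\eqref{eq:DDC} to $\ell_t$ with its own constant $\gamma_t$ at $\hat\theta_t,\theta^*$, then adding and subtracting $(\tilde\gamma_t/2)(g_t^{\top}(\hat\theta_t-\theta^*))^2$ to recognize~\eqref{eq:auxfunc}, yields
\[
\ell_t(\hat\theta_t)-\ell_t(\theta^*)\;\le\;\big[\hat\ell_{t,\tilde\gamma_t}(\hat\theta_t)-\hat\ell_{t,\tilde\gamma_t}(\theta^*)\big]+\tfrac{1}{2}(\tilde\gamma_t-\gamma_t)_+\big(g_t^{\top}(\hat\theta_t-\theta^*)\big)^2.
\]
The second summand is nonzero only when $\tilde\gamma_t=\gamma>\gamma_t$, i.e.\ in the $n_\gamma$ \emph{bad rounds} where $\gamma_t<\gamma$; each such round contributes at most $\gamma G^2D^2/2$ by Assumption~\ref{ass:bounded}, producing the $\gamma G^2D^2 n_\gamma$ term of the bound.

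For the residual surrogate regret $\sum_t[\hat\ell_{t,\tilde\gamma_t}(\hat\theta_t)-\hat\ell_{t,\tilde\gamma_t}(\theta^*)]$, I split through the expert's iterate $\theta_t(\gamma)$. The aggregation sum $\sum_t[\hat\ell_{t,\tilde\gamma_t}(\hat\theta_t)-\hat\ell_{t,\tilde\gamma_t}(\theta_t(\gamma))]$ equals $\sum_t(f_{t,k}-(\tilde\gamma_t/2)f_{t,k}^2)$ since $\hat\ell_{t,\tilde\gamma_t}(\hat\theta_t)=\ell_t(\hat\theta_t)$. The BOA weight update of Algorithm~\ref{alg:BOAm} is exactly the Bernstein aggregation of~\cite{wintenberger2017optimal} with learning rate $\gamma$ on the linear pseudo-losses $-f_{t,k}$; the grid restriction $\gamma\le1/(4GD)$ ensures $|\gamma f_{t,k}|\le1/4$, so its standard analysis gives $\sum_t f_{t,k}\le\log(K)/\gamma+\gamma\sum_t f_{t,k}^2$. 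Since $\tilde\gamma_t\ge\gamma$, the second-order residual $\gamma\sum_t f_{t,k}^2$ is absorbed by $-(\tilde\gamma_t/2)\sum_t f_{t,k}^2$, bounding the aggregation contribution by $O(\log K/\gamma)$. The remaining term $\sum_t[\hat\ell_{t,\tilde\gamma_t}(\theta_t(\gamma))-\hat\ell_{t,\tilde\gamma_t}(\theta^*)]$ is the regret of a single ONS routine (step $1/\gamma$, initialization $(\gamma D)^{-2}I_d$) on the quadratic surrogates; by Lemma~\ref{lem:aux} each surrogate satisfies~\eqref{eq:DDC} with constant proportional to $\tilde\gamma_t\ge\gamma$ and, since $|\tilde\gamma_t f_{t,k}|\le1$, its gradient norm is controlled by a multiple of $G$, so the ONS analysis underlying Theorem~\ref{thr:1} yields an $O(d\log n/\gamma)$ bound. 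Summing the three contributions and taking the minimum over $\gamma\in\Gamma$ produces the stated regret bound.

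The main obstacle will be the aggregation step: the Bernstein residual $\gamma\sum_t f_{t,k}^2$ from BOA must be exactly absorbed by the $-(\tilde\gamma_t/2)f_{t,k}^2$ correction produced by the DDC rearrangement, which crucially relies on the design $\tilde\gamma_t\ge\gamma$ embedded in $\tilde\gamma_{kt}=\max\{\gamma_t/4,\gamma_k\}$—this is where SurvONS deviates from MetaGrad and why this particular form of $\tilde\gamma_{kt}$ is used. A secondary technical point is that the surrogate $\hat\ell_{t,\tilde\gamma_t}$ and its effective DDC constant vary every round through both $\hat\theta_t$ and $\tilde\gamma_t$, so the standard ONS proof has to be re-examined to verify that the Sherman--Morrison recursion and the $A_t$-projections remain well-conditioned uniformly in $t$; the hypotheses $\Gamma\subset(0,1/(4GD))^K$ and $\gamma_t\in(0,1/GD)$ are precisely what enforce that uniformity and allow tracking the numerical constants $2\log(K)+5d\log(n)$ appearing in the statement.
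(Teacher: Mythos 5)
Your high-level decomposition is the same MetaGrad-style argument the paper uses (quadratic surrogate losses, BOA for the aggregation term, one ONS per grid point run on the surrogates, and a count of the rounds where $\gamma_t<\gamma$), and your way of extracting the $\gamma G^2D^2n_\gamma$ penalty immediately at the comparator via $(\tilde\gamma_t-\gamma_t)_+\le\gamma\mathds{1}\{\gamma_t<\gamma\}$ is clean and correct. The gap is in the absorption step that you yourself flag as the main obstacle: it does not close as written. The BOA bound gives $\sum_t f_{t,k}\le\log(K)/\gamma+2\gamma\sum_t f_{t,k}^2$ (the paper's constant is $2\gamma$, not $\gamma$), so after subtracting the $-(\tilde\gamma_t/2)f_{t,k}^2$ produced by your surrogate you are left with $\sum_t(2\gamma-\tilde\gamma_t/2)f_{t,k}^2$. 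Since $\tilde\gamma_t=\max\{\gamma_t/4,\gamma\}$ only guarantees $\tilde\gamma_t\ge\gamma$, this coefficient can be as large as $3\gamma/2$ (and remains at least $\gamma/2$ even with your claimed constant $\gamma$ in place of $2\gamma$), so the second-order residual is \emph{not} absorbed: in the worst case it sums to order $\gamma G^2D^2 n$ rather than $\gamma G^2D^2 n_\gamma$, and the stated bound is lost whenever $n_\gamma\ll n$.

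The paper's mechanism for this cancellation is precisely what your proposal omits: the ONS routine is analyzed on surrogates with the \emph{quadrupled} coefficient $\hat\gamma_t=4\max\{\gamma,\gamma_t/4\}=4\tilde\gamma_t$, i.e.\ quadratic term $2\max\{\gamma,\gamma_t/4\}\big(\nabla\ell_t(\hat\theta_t)(\theta-\hat\theta_t)\big)^2\ge 2\gamma\big(\nabla\ell_t(\hat\theta_t)(\theta-\hat\theta_t)\big)^2$, which exactly matches the $2\gamma f_{t,k}^2$ Bernstein residual; the cost of this inflation reappears as $+\tfrac{(4\gamma-\gamma_t)_+}{2}\big(\nabla\ell_t(\hat\theta_t)(\hat\theta_t-\theta^*)\big)^2$ at the comparator, which is where the paper's $n_\gamma$ term comes from. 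The same factor of four is also what validates the ONS step: by Lemma~\ref{lem:aux} the surrogate's effective \eqref{eq:DDC} constant is $\hat\gamma_t$ divided by a denominator that can be as large as $4$ under $\Gamma\subset(0,1/(4GD))^K$, so only the inflated $4\tilde\gamma_t$ guarantees an effective constant at least $\gamma$, hence the $5d\log(n)/\gamma$ term for ONS run with step $1/\gamma$; with your un-inflated surrogate the effective constant is only a fraction of $\gamma$, which is not enough. Both defects are the same missing factor of four, and repairing them essentially forces you back onto the paper's bookkeeping with $(4\gamma-\gamma_t)_+$ and $(\gamma_t-4\gamma)_+$.
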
 

This theorem provides a regret bound that proposes a trade-off between the bad choices of $\gamma$ and the frequency with which the algorithm selects $\gamma$ over $\gamma_t$, thereby compensating for the regret increment. The proof of Theorem~\ref{regbnd} can be found in Appendix~\ref{AppD}. Let us notice that this analysis is also valid for MetaGrad algorithm \cite{van2021metagrad}, and Theorem~\ref{regbnd}, which was developed for the survival losses \eqref{eq:lik}, holds equally true for any loss satisfying Assumption~\ref{ass:bounded} and \eqref{eq:DDC}.

\subsection{Theoretical regret bounds comparison}
We show in Figure~\ref{fig:theoreticbound} the differences between the regret bound orders, in order to illustrate the importance of the constant adaptation $\tilde \gamma_t$ in SurvONS, and the interest of the stochastic setting. We compare the theoretical regret bound orders of ONS \cite{hazan2007logarithmic} with the optimal hyperparameter $\gamma_t$, OGD \cite{zinkevich2003online}, SurvONS~\ref{alg:BOAm}, and ONS with an average hyperparameter $\bar\gamma_t = \sum_{s=1}^t \gamma_s$, representing the stochastic approach. The bounds are detailed in Table~\ref{tab:regretbounds}.

\begin{table*}[t]
\centering
\caption{Regret bound order after $n$ iterations (up to logarithmic factors)}
\label{tab:regretbounds}
\begin{tabular}{@{}lccccc@{}}
\hline
 & \multicolumn{1}{c}{OGD}
& \multicolumn{1}{c}{ONS} & \multicolumn{1}{c}{SurvONS}
& \multicolumn{1}{c}{ONS$(\bar\gamma)$} \\
\hline \\[-7pt]
 Regret bound &$\sqrt{n}$ & $\frac{1}{\min_{1 \leq t\leq n} \gamma_t} $ & $\min_{\gamma}\big\{\frac{1}{\gamma} + n_{\gamma}\big\}$ & $\frac{1}{\bar\gamma_n}$ \\
\hline
\end{tabular}
\end{table*}

In this comparison, we omit constants and logarithmic terms. We estimate $\gamma_t$ with SurvONS, and we use these estimations to construct the bounds. The simulation framework for this experiment is detailed in Section~\ref{sec:sim}. The graph is presented in log-log scale.


\begin{figure}[t]
\centering
    \includegraphics[scale=.25]{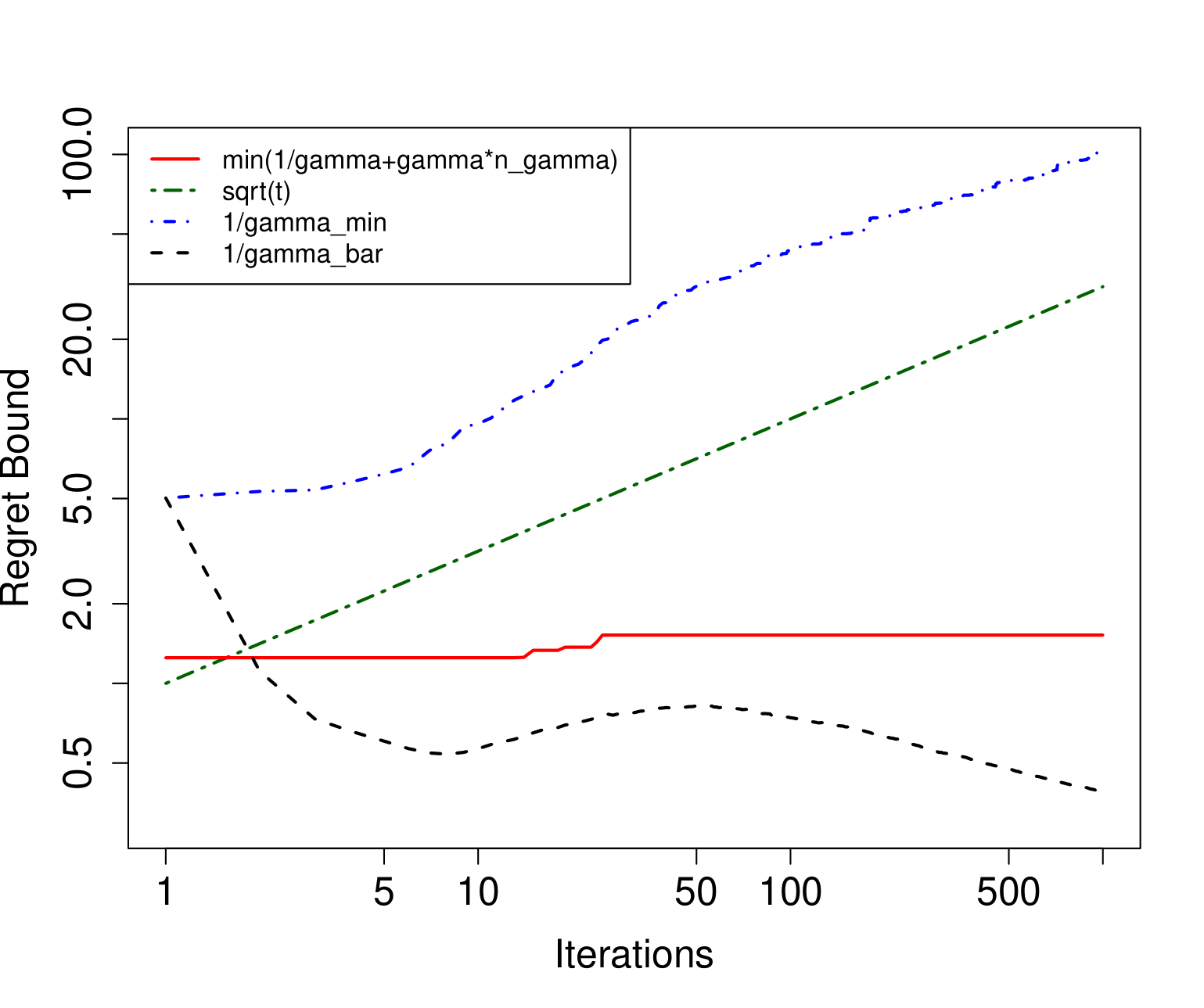}
    \caption{Regret bound orders (up to logarithmic factors)}
    \label{fig:theoreticbound}
\end{figure}

Figure~\ref{fig:theoreticbound} traces the regret behavior of the different algorithms (see Table~\ref{tab:regretbounds}). Without an explicit calculation of the stochastic constant, we show the interest of considering an average case through plotting the average constant $\bar \gamma_t$. We observe that although in theory, the bound of ONS appears better than the bound of OGD ($\mathcal O(\log(n)/\gamma)$ v/s $\mathcal O(\sqrt{n})$), when $\gamma_t$ goes to $0$, the bound of ONS is not $\mathcal O (\log(n))$, but $\mathcal O (\log(n)/\min_t \gamma_t )$. A similar finding in logistic regression has been made rigorous  by Hazan et al. \cite{hazan2014logistic} with the help of lower bounds matching $\mathcal O (\log(n)/\min_t \gamma_t )$. In practical applications, it is essential to consider more detailed analyses that remain robust in scenarios where $\min_t \gamma_t$ goes to $0$, which is what we propose with SurvONS and the stochastic approach.

\section{Simulation experiments}
\label{sec:sim}
In this section we present simulation results of our method. We considered a number of individuals $N = 10\,000$ and a number of iterations $n = 1\,000$. Then we sample a multivariate random normal of dimension $(N,d-1)$ with $d=4$ and mean vector and covariance matrix: \[ \eta: =\begin{pmatrix}  0 \\0\\0 \end{pmatrix}, \qquad \qquad \Sigma := \begin{pmatrix} 1 & 0 & 0 \\ 0 & 1 & 0\\ 0 & 0& 1 \end{pmatrix}\,. \] We add an intersect column that transforms the matrix into one of dimension $(N,d)$. This matrix corresponds to the covariates information $\{x_i\}_{i=1}^N$, which does not depend on time. The real parameter $\theta^*$ is set randomly following a $\mathcal N(0,I_d)$ distribution. We sample the arrival times $\tau_i$ as a uniform between $0$ and $n$ and we simulate $T_i$ and $C_i$  following an exponential distribution of rate $\exp(\theta^{*T} x_i)$, 
\[T_i \sim \tau_i + \exp(\exp(\theta^{*T} x_i)), \qquad C_i \sim \tau_i + \exp(\exp(\theta^{*T}x_i)). \] 
For more details on the common use of exponential distributions in survival analysis we refer to Selvin \cite{selvin2008survival}. We repeat this procedure $100$ times, and the results are the average curves over the $100$ data simulations. Additionally, we consider two exponential grids for the aggregation methods of size $K = 10$. First, we test a random grid, consider the SurvONS predictions $\theta_t$ for $t=1,\ldots,n$ , and define:
\[ G := \max_{t =1,\ldots,n}  ||\nabla \ell_t(\theta_t) || \, . \]
This process is repeated multiple times to ensure the stability of $G$ estimation.
Second, we choose $10$ equidistant points and then we generate the grids by considering the exponential of each point, such that:
\[ \Gamma_1 := (1/\sqrt{n},\ldots,1/4GD )\, , \qquad\Gamma_2 := (1/GD, \ldots ,10/GD )\, , \]
where $D$ is adjusted a posteriori such as $D := 1.1 ||\theta^*||$. Throughout this section we compare the results of the two choices of grid $\Gamma_1$ and $\Gamma_2$.


\begin{figure}[t]
\centering
    \includegraphics[scale=.25]{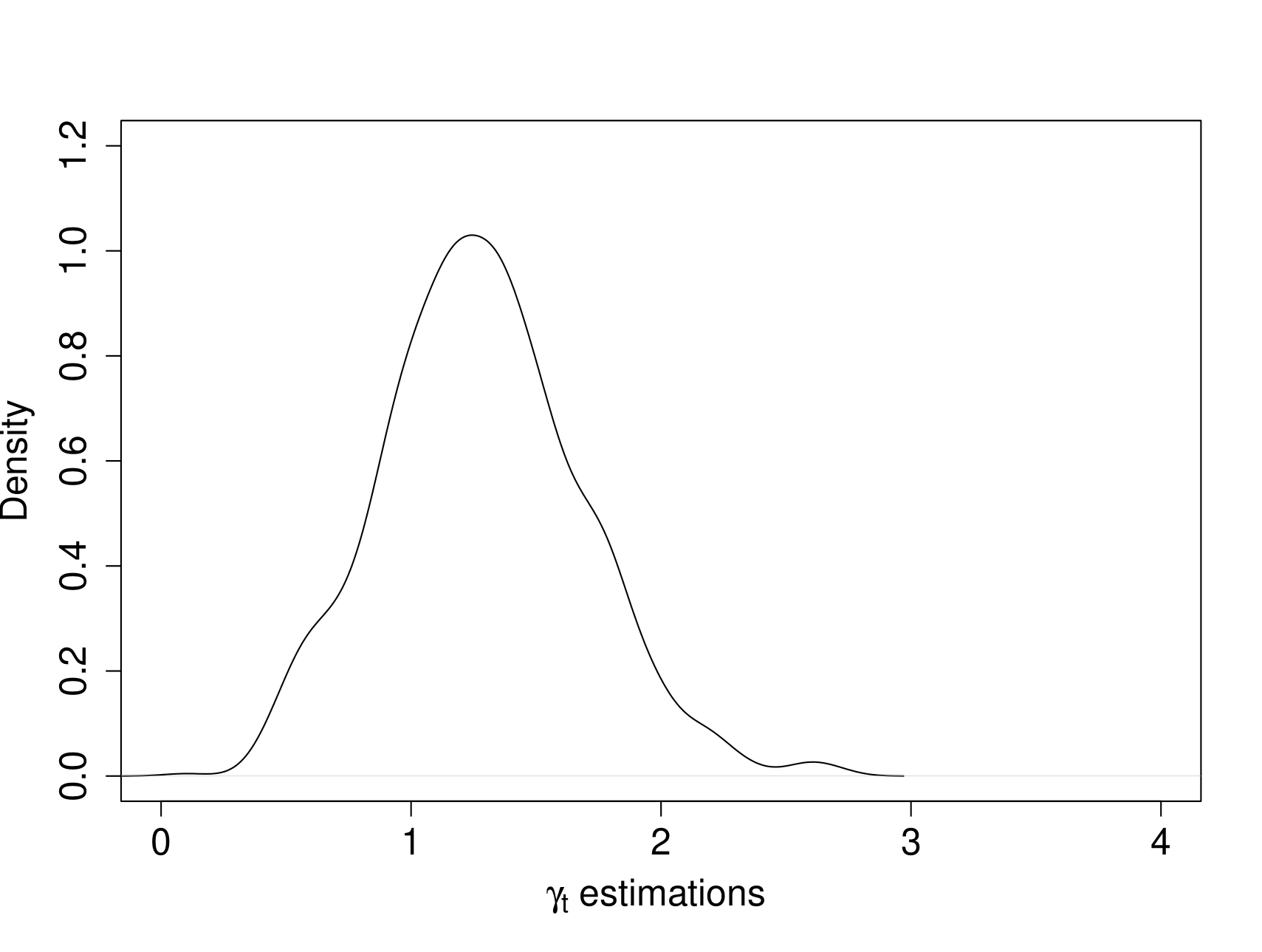} 
    \qquad
   \includegraphics[scale=.25]{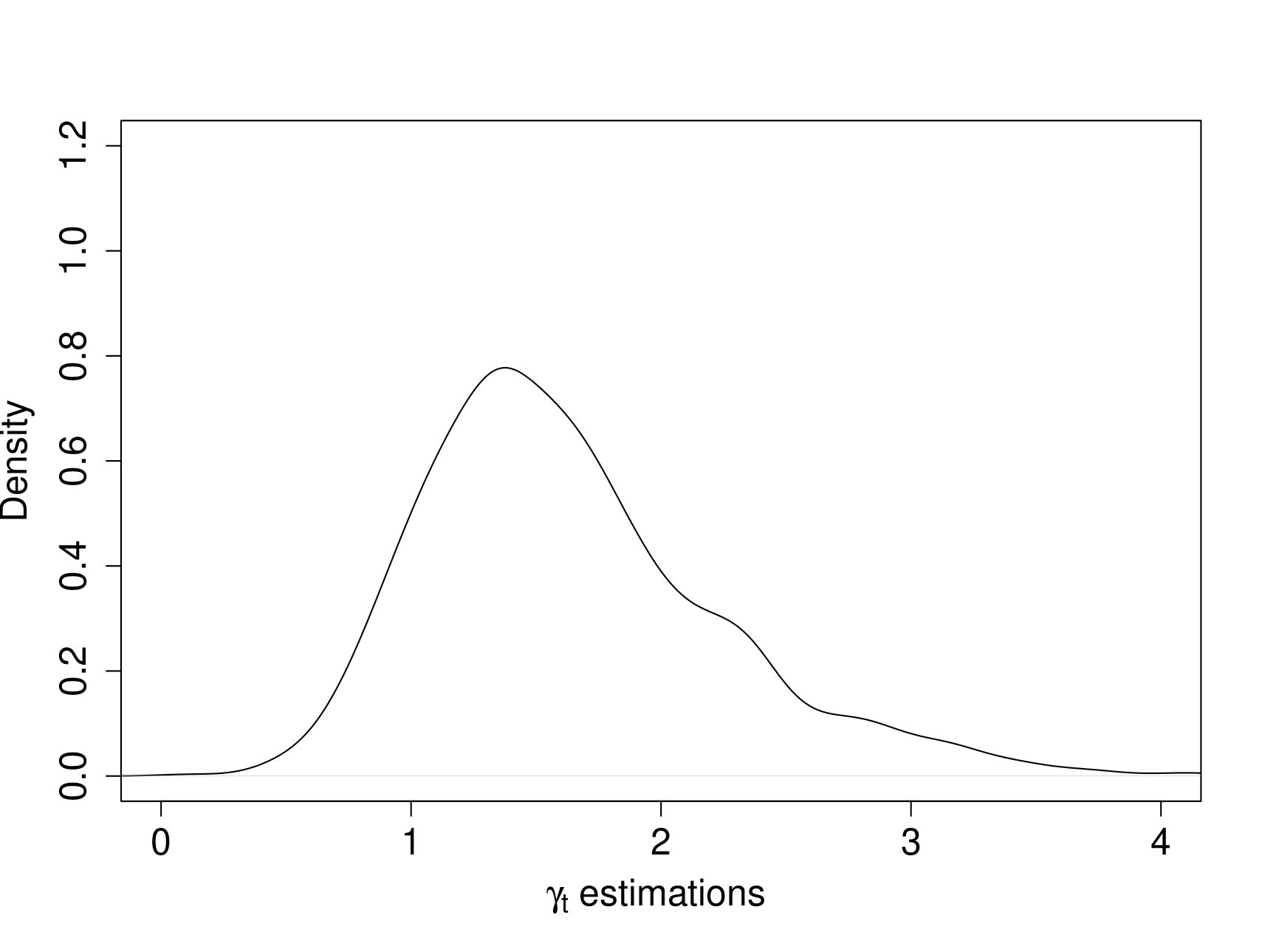}
    \caption[.]{
    Density of $\protect\gamma_t$ estimation obtained by Algorithm~\ref{alg:BOAm}, with $\Gamma_1$ [left] and $\Gamma_2$ [right] }
    \label{fig:gamma}
\end{figure}

We observe in Figure~\ref{fig:gamma}
the distribution of the average $\gamma_t$ estimations that we obtained from SurvONS. The average for $\Gamma_1$ is $1.24$ and $1.64$ for $\Gamma_2$. The similarity between both estimations elucidates the proximity of the graphs in Figure~\ref{fig:theoreticbound}, which is unsurprising given that the directional derivative constant is inherent to the loss function and does not depend on the algorithm or the selected grids.

We compare SurvONS, described in Algorithm~\ref{alg:BOAm}, with the BOA-ONS proposed by Wintenberger \cite{wintenberger2017optimal}. Additionally, we fit several ONS and OGD with constant learning rate equal to each $\gamma$ in the grid, and then we select the one that performs better to include in the comparison. Remark that this procedure overestimates the performances of ONS and OGD. We show the average cumulative difference between the negative log-likelihood of the estimations and the real parameters in Figure~\ref{fig:likelihood}.


\begin{figure}[t]
\centering
\includegraphics[scale=.25]{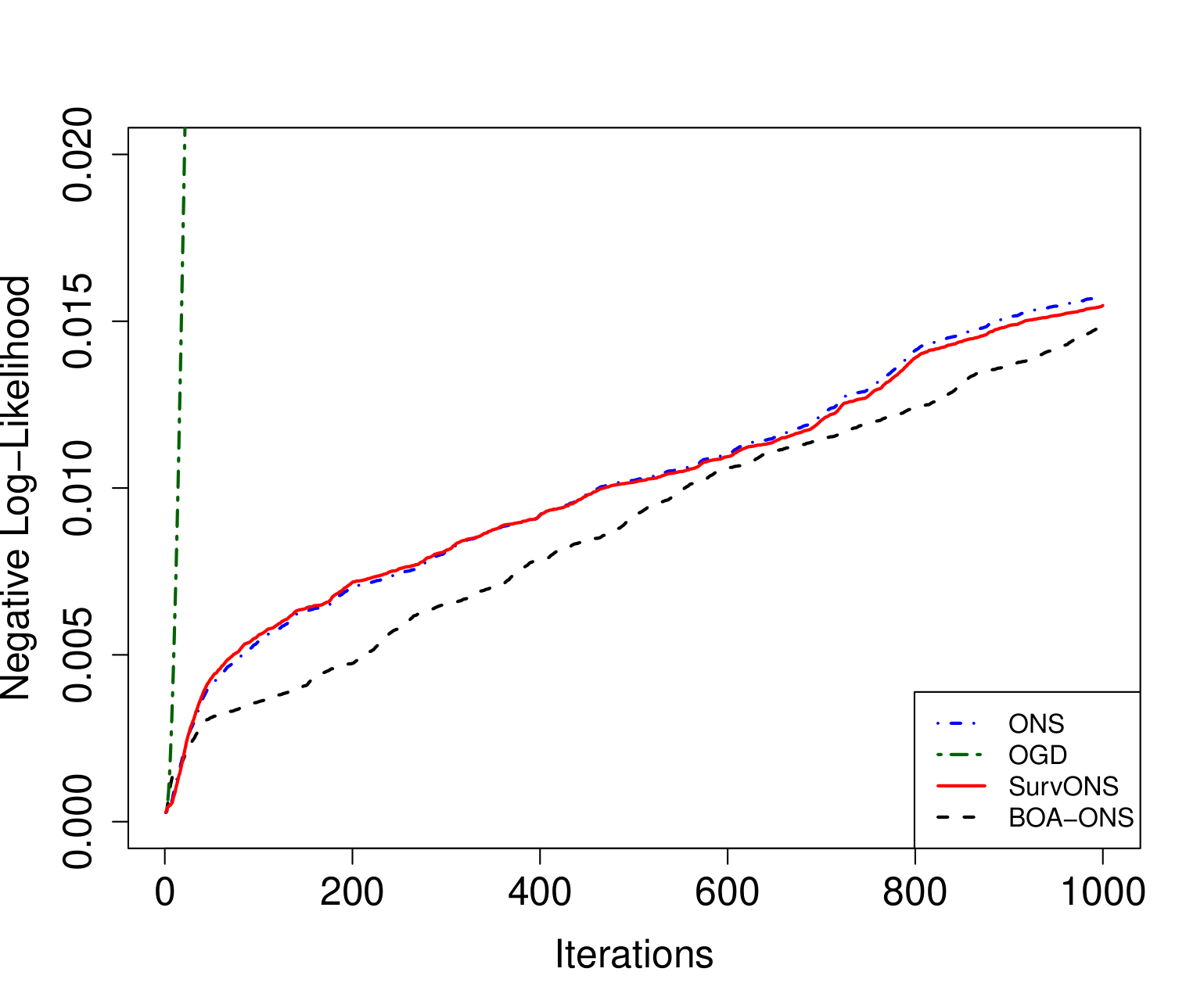} 
    \qquad
\includegraphics[scale=.25]{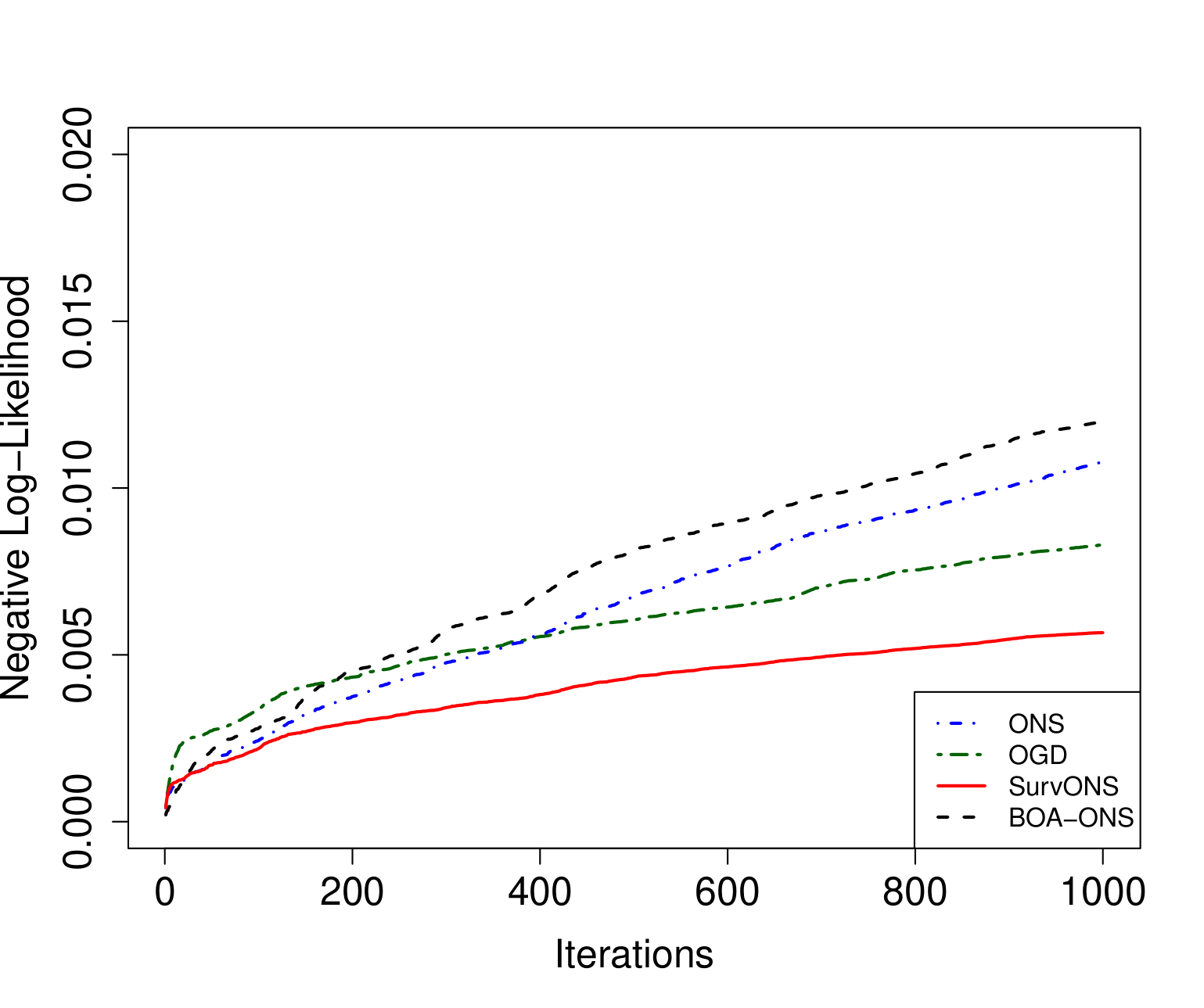}
    \caption[.]{Cumulative negative log-likelihood with hyperparameters in grid $\protect\Gamma_1$ [left] and $\Gamma_2$ [right] }
    \label{fig:likelihood}
\end{figure}

In Figure~\ref{fig:likelihood}, we observe that SurvONS (in purple) does not outperform BOA-ONS (in black) with the $\Gamma_1$ grid. However, the scenario changes with the second grid, $\Gamma_2$, where SurvONS proves to be more effective than the other methods. This unexpected result arises from the fact that the $\Gamma_1$ grid falls within the theoretical limits. Nevertheless, we observe a consistent improvement in performance for all algorithms when considering a larger grid. This discrepancy could arise from either an overestimation of the constant $G$ or the presence of outlier points exhibiting extremely large gradients. Nonetheless, given the similarity in the constant $\gamma_t$ estimation across the two grids, shown in Figure~\ref{fig:gamma}, we recommend opting for larger grids, ranging from 4 to 40 times the theoretical bound of $1/4GD$.

In addition, Figure~\ref{fig:error} presents the quadratic error, where we consider the cumulative average of the estimations. Specifically, given a sequence of algorithm predictions $(\theta_s)_{s=1}^t$, the cumulative average is defined as $\bar\theta_t := t^{-1}\sum_{s=1}^t \theta_s$. Let us remind that the curves depicted represent the average of $100$ instances obtained from simulating $100$ datasets. The figure is in log-log scale.


\begin{figure}[t]
\centering
\includegraphics[scale=.25]{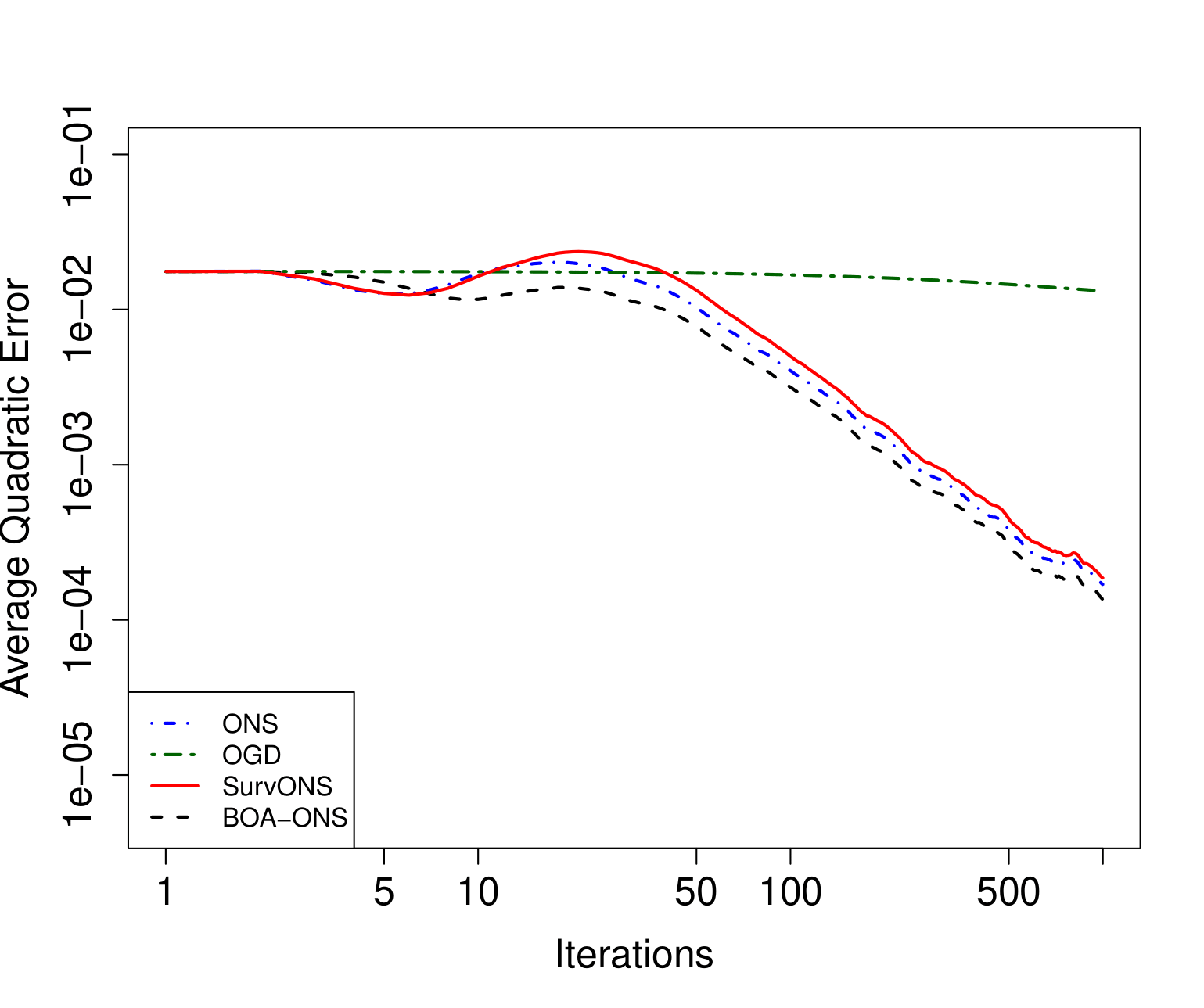} 
    \qquad
\includegraphics[scale=.25]{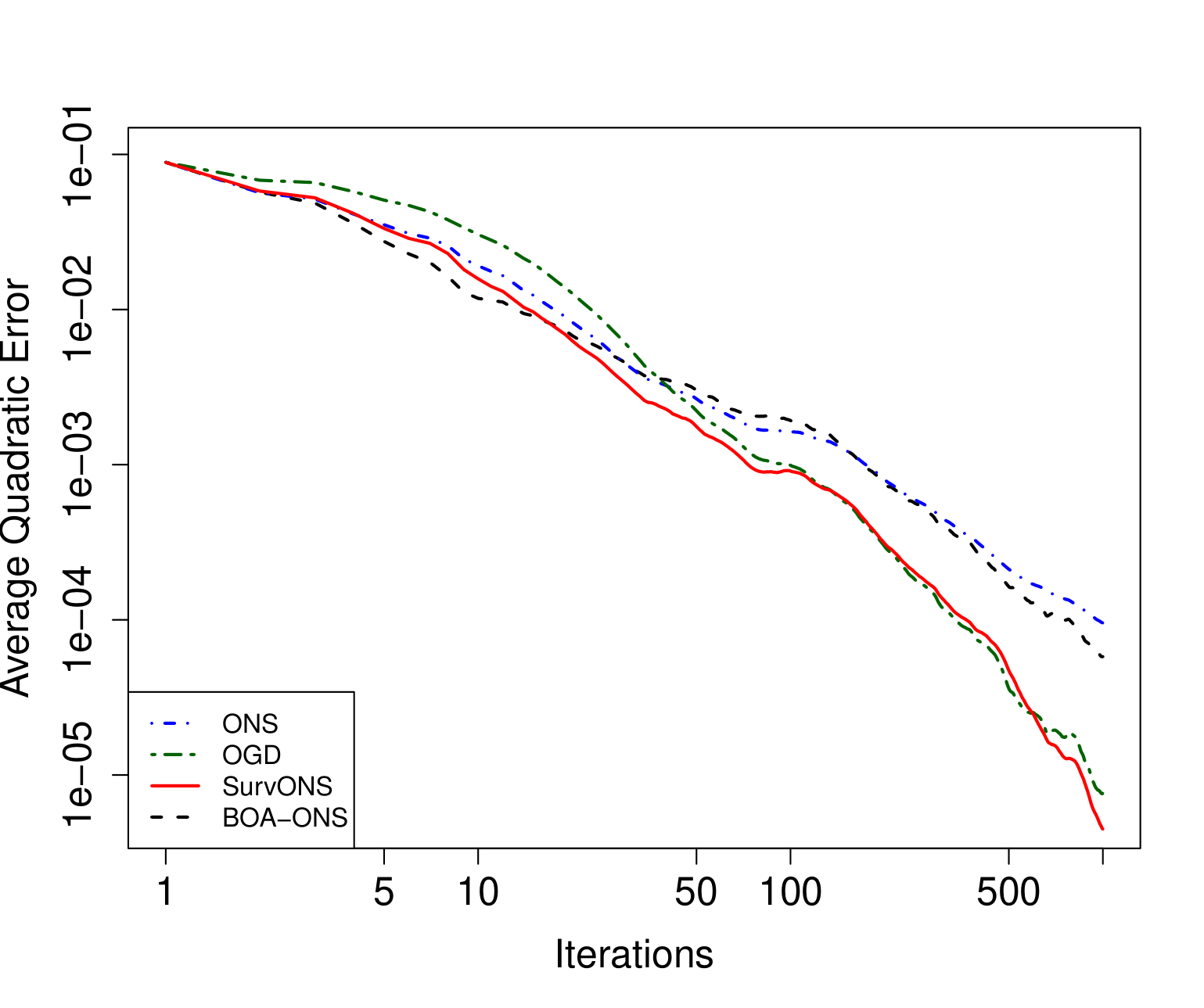}
    \caption[.]{Average quadratic error with hyperparameters in $\protect\Gamma_1$ [left] and $\Gamma_2$ [right] }
    \label{fig:error}
\end{figure}

Figure~\ref{fig:error} corroborates the result of Corollary~\ref{cor:stc}, which establishes the convergence of the estimations $\theta_t$ to the real parameter $\theta^*$ when using the ONS algorithm. The findings of Wintenberger in \cite{wintenberger2021stochastic}, which demonstrate the $\mathcal O (\log(n))$ stochastic regret of BOA-ONS, together with the insights from Figure~\ref{fig:error}, suggest the potential to extend a similar corollary to both BOA-ONS and SurvONS. Furthermore, Corollary~\ref{cor:stc}
can be easily extended to BOA-ONS by replacing the application of Theorem~\ref{thr:stcregret} with Theorem~\ref{thr:olivier} from \cite{wintenberger2021stochastic}.

\section{Conclusions}
In this paper, we presented a detailed mathematical framework for online survival data, analyzing the regret of Online Newton Step and its sensitivity to the learning rate. Notably, we found that tuning this parameter is challenging, and the regret bound is highly sensitive to its adjustment. Our first contributions is introducing a stochastic setting to ensure that ONS achieves logarithmic stochastic regret in the survival context. Additionally, we proposed an adaptive method, SurvONS, which aggregates ONS with different learning rates. Adaptive methods, commonly used in first-order algorithms like AdaGrad~\cite{duchi2011adaptive} or Adam~\cite{adam2014method}, offer a promising avenue for enhancing second-order algorithms. Our approach leverages adaptive strategies to improve efficiency and convergence, extending its applicability beyond the online survival domain. The regret analysis of SurvONS strategically selects larger learning rates to address sub-optimal parameters. In conclusion, aggregation methods enhance robustness in selecting algorithm hyperparameters; however, achieving and maintaining fast rates remains a non-trivial task.

Finally, in the simulation experiments, we compared two grid choices. Figure~\ref{fig:gamma} shows that $\gamma_t$ estimations closely align within the grids, and the second grid produces values that do not approach zero to the same extent as the first grid. Additionally, Figure~\ref{fig:likelihood} indicates that choosing larger values for the learning rate grid accelerates convergence, suggesting the preference for larger grids.

\clearpage
\bibliography{ref}


\appendix

\section{Background on parametric inference} 
\label{AppA}
\subsection{Proof of Proposition~\ref{likelihood}}

\begin{proof}
We define the equivalent of the survival probability for the censored distribution \[ G(t|x_i,\tau_i) = \mathbb P (c_i \geq t|x_i,\tau_i)\,.\]
Given $\theta \in \Theta$, we write the density of $u_i$ distinguishing two cases: $$ \mathbb{P}( u_i \in [t, t+h) ,\delta_i = 1 | \theta, x_i, \tau_i) = \mathbb{P}(t_i \in [t, t+h), c_i \geq t |x_i, \tau_i,\theta),$$ and $$ \mathbb{P}(u_i \in [t, t+h), \delta_i = 0 | \theta, x_i, \tau_i) = \mathbb{P}(c_i \in [t,t+h), t_i \geq t | \theta, x_i, \tau_i).$$ By conditional independence we obtain
\begin{align*}
\mathbb{P}( u_i \in [t, t+h) ,\delta_i = 1 | \theta, x_i, \tau_i) & = \mathbb{P}(t_i \in [t, t+h)|\theta, x_i, \tau_i)\mathbb{P}(c_i \geq t+h|\theta, x_i, \tau_i) ,\\ 
\mathbb{P}(u_i \in [t, t+h), \delta_i = 0 | \theta, x_i, \tau_i) & = \mathbb{P}(c_i \in [t,t+h)| \theta, x_i, \tau_i)S( t+h | \theta, x_i, \tau_i).
\end{align*}
When $h$ goes to zero, it tends respectively to $$G(t|\theta, x_i, \tau_i) f(t|\theta, x_i, \tau_i), $$ and $$g(t|\theta, x_i, \tau_i)\mathbb{P}(t_i \geq t| \theta, x_i, \tau_i).$$ Therefore, by the independence of the random variables $(t_i,c_i)$ among the events $i\in\{1,\ldots,N\}$ we obtain the density 
\begin{multline*}
f_{(u_i,\delta_i)_{1\le i\le N}}((u_i,\delta_i)_{1\le i\le N}\mid \theta, x_i, \tau_i) =\\
\displaystyle  \prod_{i=1}^N g(u_i|\theta, x_i, \tau_i)^{1-\delta_i} G( u_i |\theta, x_i, \tau_i)^{\delta_i}\prod_{i=1}^N f(u_i|\theta, x_i, \tau_i)^{\delta_i} S( u_i |\theta, x_i, \tau_i)^{1-\delta_i}\,.\end{multline*} Here we use the assumption of non-informative censoring (see Kalbfleisch et al.~\cite{kalbfleisch2011statistical}), which means that the censored distribution does not involve the parameter $\theta$. Then we obtain a simplified version of the likelihood, up to a multiplicative constant
\[
\ell(\theta)\propto \prod_{i=1}^N f(u_i|\theta, x_i, \tau_i)^{\delta_i} S(u_i |\theta, x_i, \tau_i)^{1-\delta_i}\,.
\]
Omitting an additional constant, we can equivalently write the log-likelihood to be \begin{equation*} \log(\ell(\theta)) = \displaystyle \sum_{i=1}^N \delta_i \log(f(u_i|\theta, x_i, \tau_i)) + (1- \delta_i)  \log(S( u_i|\theta, x_i, \tau_i)). \end{equation*}

Let us remark that $f(t|\theta, x_i, \tau_i) = H(t|x_i,\tau_i) S( t|x_i, \tau_i)$ and from the definition of $H(t|x_i,\tau_i)$ we can write the log-likelihood as 
\[ \log(\ell(\theta)) = \displaystyle \sum_{i=1}^N \delta_i \log(H(u_i|x_i,\tau_i)) - \int_{\tau_i}^{u_i} H(s|x_i,\tau_i)ds. \]

Following the exponential model of Definition~\ref{ass:exp} we replace $H(t|x_i,\tau_i)$ in the previous equation to get  $$ \log(\ell(\theta)) = \displaystyle \sum_{i=1}^N \delta_i \theta^T x_i(u_i) - \int_{\tau_i}^{u_i}\exp(\theta^T x_i(s))ds\, , $$ 
We write the negative log-likelihood:
\begin{equation*}\ell(\theta) = - \log(\ell(\theta)) = \displaystyle \sum_{i=1}^N -\delta_i \theta^T x_i(u_i) + \int_{\tau_i}^{u_i}\exp(\theta^T x_i(s))ds.\end{equation*} 
\end{proof}

\section{Online Convex Optimization} 
\label{AppB}
\subsection{Proof of Lemma~\ref{lem:1}}

\begin{proof}
Only the second assertion needs to be proven, the first one being Lemma 4.2.1 from Hazan \cite{hazan2022introduction} is already showed.  
To prove the second assertion we first see that Equation \eqref{eq:lem1} means that for all $\theta \in \Theta$ \[ \nabla ^2 \ell(\theta) \succcurlyeq \mu \nabla \ell(\theta) \nabla \ell(\theta)^{\top}, \] which implies that for all vector $\nu \in \mathbb{R}^d$ \[
\nu^{\top} \nabla^2\ell(\theta)\nu \geq \mu~\nu^{\top} \nabla \ell(\theta) \nabla \ell(\theta)^{\top}\nu.
\] Since $\nabla \ell(\theta) \nabla \ell(\theta)^{\top}$ is a rank one matrix and $\nu = \nabla \ell(\theta)$ is an eigenvector associated to the unique non-null eigenvalue, we can replace $\nu$ in the previous equation to get \[
\nabla \ell(\theta)^{\top} \nabla^2\ell(\theta)\nabla \ell(\theta) \geq \mu \nabla \ell(\theta)^{\top} \nabla \ell(\theta) \nabla \ell(\theta)^{\top}\nabla \ell(\theta).
\] When $\nabla \ell(\theta) \neq 0$ we can write  \[ \mu \leq \frac{\nabla \ell(\theta)^{\top} \nabla^2\ell(\theta)\nabla \ell(\theta)}{ || \nabla \ell(\theta)||^4},
\] and as this is true for every $\theta \in \Theta$ we have \[ \mu \leq \min_{\theta \in \Theta} \frac{\nabla \ell(\theta)^{\top} \nabla^2\ell(\theta)\nabla \ell(\theta)}{ || \nabla \ell(\theta)||^4}.
\]
\end{proof}

\subsection{Proof of Lemma~\ref{lem:2}}


\begin{proof}
The proof starts similarly than the one of Lemma 4.2.2 of Hazan \cite{hazan2022introduction}. We consider the concave function $p(\theta)=\exp(-\mu \ell(\theta))$. We derive that for $\theta_1,\theta_2 \in \Theta$:
\begin{align*}
 \ell(\theta_2)&\ge \ell(\theta_1)-\dfrac1{\mu}\log(1-\mu (\nabla \ell(\theta_1)^T(\theta_2-\theta_1)))\\   
 &\ge \ell(\theta_1) + \nabla \ell(\theta_1)^T (\theta_2-\theta_1)\\
 &\hspace{2cm}-\Big(\dfrac1{\mu}\log(1-\mu (\nabla \ell(\theta_1)^T(\theta_2-\theta_1)))+ \nabla \ell(\theta_1)^T (\theta_2-\theta_1) \Big)\,.  
\end{align*}
Using the Cauchy-Schwarz inequality we upper bound $|\nabla \ell(\theta_1)^T (\theta_2-\theta_1) | \le \|\nabla \ell(\theta_1)\| D$ for any $\theta_2\in \Theta$. Combined with the monotonicity of the function $\mu^{-1}\log(1-\mu z)+z$ which is decreasing for any $-\|\nabla \ell(\theta_1)\| D\le  z\le \|\nabla \ell(\theta_1)\| D$ we obtain:
\[
 \ell(\theta_2)\ge \ell(\theta_1) + \nabla \ell(\theta_1)^T (\theta_2-\theta_1)-\dfrac1{\mu}\log(1+\mu \|\nabla \ell(\theta_1)\| D)+\|\nabla \ell(\theta_1)\| D\,.
\]
By definition of the directional derivative constant, we thus can estimate:
\begin{align*}
\gamma&\le \min_{\theta_1,\theta_2\in\Theta} 2\frac{-\dfrac1{\mu}\log(1+\mu \|\nabla \ell(\theta_1)\| D)+\|\nabla \ell(\theta_1)\| D}{(\nabla \ell(\theta_1) (\theta_2-\theta_1))^2} \, ,\\
&\le \min_{\theta_1\in \Theta } 2\frac{-\dfrac1{\mu}\log(1+\mu \|\nabla \ell(\theta_1)\| D)+\|\nabla \ell(\theta_1)\| D}{(\|\nabla \ell(\theta_1)\|D)^2}  
\end{align*}
by another application of the Cauchy-Schwarz inequality.
\end{proof}

\subsection{The Online Newton Step  algorithm}

\begin{algorithm}[H]
\caption{ Online Newton Step \cite{hazan2007logarithmic}}
    \begin{algorithmic}
    \STATE {\bfseries Input: }$(\ell_t)_{t=1,2,\ldots}, \gamma>0, n\geq 1, \epsilon>0$\\
    \STATE {\bfseries Initialization:}  $ \theta_0 \in \Theta, ~ A^{-1}_0 = (1/\epsilon) \mathds{1}_d$\\
    \FOR{iteration $t=1,\ldots,n$}

        \STATE \textbf{Observe:} $\nabla\ell_{t}(\theta_t)$
        \STATE \textbf{Recursion:} $$A^{-1}_t = A^{-1}_{t-1} - \frac{A_{t-1}^{-1} \nabla \ell_t(\theta_t) \nabla \ell_t(\theta_t)^T A_{t-1}^{-1}}{1+ \nabla \ell_t(\theta_t) A_{t-1}^{-1} \nabla \ell_t(\theta_t)^T}$$\\ $$\theta_{t+1} = {\rm Proj}_t\Big(\theta_t - \frac{1}{\gamma} A_t^{-1} \nabla \ell_t(\theta_t)\Big) $$
        \STATE where ${\rm Proj}_t(\theta^*) \in \displaystyle \arg\min_{\theta\in \Theta}(\theta - \theta^*)^T A_t (\theta -\theta^*)$.
    \ENDFOR
    \RETURN $\theta_n$
    \end{algorithmic}
    \label{alg:ONS}
\end{algorithm}

\section{Stochastic Setting}
\label{AppC}
In this section we prove Theorem~\ref{thr:stcregret}, and for this we need to recall the hypothesis of Theorem 7 from \cite{wintenberger2021stochastic}.

\textbf{(H1)} The diameter of $\Theta$ is $D$ and the loss functions $\ell_t$ are continuously differentiable over $\Theta$ a.s. with integrable gradients.
 
\textbf{(H2)} The random loss functions $(\ell_t)_{t=1,2,\ldots}$ are stochastically exp-concave~\ref{con:1} for some $\gamma \geq 0$.
 
\textbf{(H3)} The gradients $(\nabla \ell_t(\theta_t))_{t=1,2,\ldots}$, satisfy for $G_1,G_2>0$ and all $ k \geq 1, t=1,2,\ldots$, and $ \theta \in \Theta$: 
    \begin{align*}
    \mathbb E_{t-1}[(\nabla \ell_t(\theta_t)^{\top} (\theta_t - \theta))^{2k}]& \le k! (G_1 D)^{2(k-1)} \mathbb E_{t-1}[ (\nabla \ell_t(\theta_t)^{\top}(\theta_t - \theta))^2] \qquad a.s., \\
     \mathbb E_{t-1} [ || \nabla \ell_t (\theta) ||^{2k} ]& \le k! G_1^{2(k-1)}\mathbb E_{t-1} [ || \ell_t (\theta_t) ||^2]  \qquad a.s.,\\
     \mathbb E_{t-1} [ || \nabla \ell_t (\theta) ||^2 ]&  \le G_2^2 \qquad \qquad a.s.   
\end{align*}

Let us notice that condition \textbf{(H3)} 
is satisfied in the bounded cases\\ $||\nabla \ell_t(\theta_t)||^2 \leq G^2, ~ t=1,2,\ldots$ with $G_1:=G_2:=G$. Condition \textbf{(H3)} is independent on the risk $L_t(\theta_t) = \mathbb E_{t-1} [\ell_t(\theta_t)],~t=1,2,\ldots$, and thus, it does not interfere with condition \textbf{(H2)}. Additionally, we notice that in our setting where we consider the stochastic losses $\ell_t$ defined in \eqref{stcloss}, the hypothesis \textbf{(H1)} is already satisfied. Now, we recall the stochastic regret bound theorem.

\begin{Theorem}[Wintenberger \cite{wintenberger2021stochastic}]\label{thr:olivier}
    Under \textbf{(H1)}, \textbf{(H2)} with constant $\gamma$ and \textbf{(H3)}, for $\varrho >0$ and $n \geq 1$ the ONS algorithm~\ref{alg:ONS} with learning rate $\gamma/3$ satisfies with probability $1-3 \varrho$ the stochastic regret bound:
    \begin{align*}
    Risk_n &\leq \frac{3}{2\gamma} \left(1+d\log\left(1+\frac{2(\gamma D)^2 \left(nG_2^2 + G_1^2 \log(\varrho^{-1}) \right) }{9} \right) \right) \\
    & \hspace{6cm} + \left( \frac{4 \gamma (G_1 D)^2}{9} + \frac{18}{\gamma} \right) \log(\varrho^{-1})\,.
    \end{align*}
\end{Theorem}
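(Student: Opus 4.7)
The plan is to reduce Theorem~\ref{thr:stcregret} to a conditional application of Theorem~\ref{thr:olivier}: I would identify a high-probability event $\mathcal G$ on which the stochastic losses $(\ell_t)$ in \eqref{stcloss} satisfy hypotheses (H1)--(H3), then invoke Theorem~\ref{thr:olivier} conditionally on $\mathcal G$ and absorb the failure probability $\mathbb P(\mathcal G^c)$ into one extra $\varrho$ on top of the $3\varrho$ already present in that theorem.

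First, I would verify (H1) and (H3). Smoothness and integrability in (H1) are immediate because each $\ell_t$ in \eqref{stcloss} is a finite sum of exponentials of affine functions of $\theta$ on the bounded set $\Theta$. For (H3), computing $\nabla \ell_t(\theta)$ shows $\|\nabla \ell_t(\theta)\|$ is dominated by a universal constant times the number $R_t := \sum_i r_{it}$ of at-risk individuals on $(t-1,t]$. Since $R_t \leq N_t$, a Poisson maximal tail inequality lets me pick $R_\star = O(\lambda n + \log(n/\varrho))$ such that $\mathcal G := \{\max_{t \leq n} R_t \leq R_\star\}$ has $\mathbb P(\mathcal G^c) \leq \varrho$. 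On $\mathcal G$, the three moment requirements in (H3) hold with deterministic constants $G_1, G_2 = O(R_\star)$, and the logarithmic dependence of $R_\star$ on $n/\varrho$ is precisely what produces the $\log(n/\varrho)$ factor in the final regret.

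Second, I would establish (H2), i.e.\ stochastic exp-concavity with the stated hyperparameter $\gamma$. Differentiating $L_t(\theta) = \mathbb E_{t-1}[\ell_t(\theta)]$ twice under the integral yields
\[
\nabla^2 L_t(\theta) \;=\; \mathbb E_{t-1}\!\left[\sum_i r_{it}\, x_i x_i^T \exp(\theta^T x_i)\bigl(u_i \wedge t - \tau_i \vee (t-1)\bigr)\right].
\]
Restricting to individuals with $\tau_i \in (t-1,t]$ and using time-homogeneity of the Poisson arrivals together with the shift-invariance of $g$ recorded in Section~\ref{paramsection}, the change of variable $s = t - \tau_i$ reduces the contribution of these individuals to the regime of Assumption~\ref{ass:strongconv}, yielding $\nabla^2 L_t(\theta) \succcurlyeq c(\lambda) A I_d$ for a positive constant $c(\lambda)$; older at-risk individuals merely add a further positive semidefinite term. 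Strong convexity of $L_t$ combined with the gradient bound from the previous step then yields (H2) with $\gamma \asymp A/G_2^2$ by a standard Taylor-remainder argument that absorbs $(\gamma/2)\mathbb E_{t-1}[(\nabla \ell_t(\theta_1)^T(\theta_1-\theta_2))^2]$ into the curvature term of $L_t$.

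With (H1)--(H3) in hand on $\mathcal G$, Theorem~\ref{thr:olivier} applied with learning rate $\gamma/3$ gives the stated $O(\log(n/\varrho)/\gamma)$ regret bound with conditional probability at least $1-3\varrho$; a union bound with $\mathbb P(\mathcal G^c) \leq \varrho$ produces the $1-4\varrho$ guarantee, and tracking constants yields the explicit bound referenced as \eqref{stc:explicitbound}. The main obstacle will be rigorously transporting Assumption~\ref{ass:strongconv}---stated at $\tau=0$ and on the event $\{T \leq C\}$---to a generic interval $(t-1,t]$ with partial censoring; the cleanest route is the decomposition of at-risk individuals into new arrivals (where Poisson stationarity and shift-invariance of $(x,T,C)$ reduce to the $\tau=0$ case) and carry-overs (which only help by adding positive semidefinite contributions), but quantifying $c(\lambda)$ and handling the event $\{u_i \wedge t - \tau_i \vee (t-1) > 0\}$ cleanly is the technical core of the argument.
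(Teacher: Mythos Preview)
The proposal does not address the stated theorem. Theorem~\ref{thr:olivier} is quoted verbatim from \cite{wintenberger2021stochastic} as an external tool and is not proved in this paper; your sketch is instead a proof of Theorem~\ref{thr:stcregret}, which \emph{applies} Theorem~\ref{thr:olivier} as a black box. There is therefore no proof in the paper to compare against for the statement actually in question.

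Read as a proof of Theorem~\ref{thr:stcregret}, your overall architecture matches the paper's (verify (H1)--(H3) on a high-probability event, then invoke Theorem~\ref{thr:olivier} and union-bound), but your handling of (H3) has a genuine gap. You bound $R_t \le N_t$ and take $R_\star = O(\lambda n + \log(n/\varrho))$. Since $N_t$ counts \emph{all} arrivals up to time $t$ and is increasing, $\max_{t\le n}N_t = N_n \sim \lambda n$, so $G_1,G_2 = O(\lambda n)$. Feeding this into Proposition~\ref{prop:olivier} gives $\gamma = \mu/G^2 = O(n^{-2})$, and the ``$O(\log(n/\varrho)/\gamma)$'' conclusion becomes $O(n^2\log n)$---vacuous. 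Your claim that ``the logarithmic dependence of $R_\star$ on $n/\varrho$ is precisely what produces the $\log(n/\varrho)$ factor'' is wrong: the $\lambda n$ term dominates. The paper avoids this by bounding $R_t$ (the number \emph{at risk} on $(t-1,t]$, not the cumulative count) directly in Lemma~\ref{lem:prop4}: it uses $R_t\le \sum_i \mathds 1\{\tau_i\le t\}\mathds 1\{t_i\ge t-1\}$ and controls $\mathbb P(t_i\ge t-1\mid \tau_i\in(s-1,s])$ via the survival function, obtaining $R_t\le 32 e^{Dx_\infty}(4\lambda+1+\log(2/\varrho))$ uniformly in $t$, hence $G$ independent of $n$ (Proposition~\ref{upperbnd}).

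A secondary point: ``conditionally applying Theorem~\ref{thr:olivier} on $\mathcal G$'' is not quite rigorous, since (H2)--(H3) are required to hold a.s.\ for the martingale arguments underlying that theorem. The paper's device is to define truncated losses $\hat\ell_t := \ell_t\,\mathds 1\{\Omega_{t-1}\}$ with $\Omega_{t-1}\in\mathcal F_{t-1}$, so that (H1)--(H3) hold a.s.\ for $(\hat\ell_t)$, apply Theorem~\ref{thr:olivier} to $(\hat\ell_t)$, and then observe that $\hat L_t = \mathds 1\{\Omega_{t-1}\}L_t$ agrees with $L_t$ on $\bigcap_t\Omega_{t-1}$.
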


In order to simplify notation we refer to the right-hand-side bound as $\mathcal B (n)$. In addition we need a proposition presented in \cite{wintenberger2021stochastic} that gives us a constant $\gamma$ that assures the stochastic exp-concavity of the losses.

\begin{proposition}[Wintenberger \cite{wintenberger2021stochastic}]
\label{prop:olivier}
 If  $L_t$ is $\mu$-strongly convex and there exists $G>0$ such that \begin{equation*} G^2 I_d \succcurlyeq \mathbb{E}_{t-1}[\nabla \ell_t(\theta) \nabla \ell_t(\theta)^T], \qquad \qquad \forall \theta \in \Theta, a.s., t=1,2,\ldots\, ,
\end{equation*} then Definition~\ref{con:1} holds with $\gamma := \mu/G^2$.
\end{proposition}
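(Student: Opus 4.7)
The plan is to derive the stochastic exp-concavity inequality of Definition~\ref{con:1} directly by combining the $\mu$-strong convexity of the risk $L_t$ with the operator-norm bound on the conditional gradient covariance, choosing $\gamma=\mu/G^2$ so that the two estimates match.

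First I would write out the $\mu$-strong convexity of $L_t$ between $\theta_2$ and $\theta_1$ in the form
\[
L_t(\theta_2) \;\ge\; L_t(\theta_1) + \nabla L_t(\theta_1)^{\top}(\theta_2-\theta_1) + \frac{\mu}{2}\,\|\theta_2-\theta_1\|^2 \qquad a.s.,
\]
and rearrange to isolate $L_t(\theta_1)$ on the left-hand side, producing a term $-\frac{\mu}{2}\|\theta_1-\theta_2\|^2$ on the right.

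Second, I would convert the matrix inequality hypothesis into a scalar one. Setting $v:=\theta_1-\theta_2$ and sandwiching the matrix bound $G^{2}I_d \succcurlyeq \mathbb{E}_{t-1}[\nabla \ell_t(\theta_1)\nabla \ell_t(\theta_1)^{\top}]$ between $v^{\top}$ and $v$ yields
\[
\mathbb{E}_{t-1}\!\left[(\nabla \ell_t(\theta_1)^{\top}(\theta_1-\theta_2))^2\right] \;=\; v^{\top}\mathbb{E}_{t-1}[\nabla \ell_t(\theta_1)\nabla \ell_t(\theta_1)^{\top}]\,v \;\le\; G^2\,\|\theta_1-\theta_2\|^2 \qquad a.s.
\]
This is the only non-trivial identity; everything else is bookkeeping.

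Third, I would combine the two estimates. Dividing the scalar inequality by $G^2$ and multiplying by $\mu$ gives
\[
\frac{\mu}{2}\,\|\theta_1-\theta_2\|^2 \;\ge\; \frac{\mu}{2G^2}\,\mathbb{E}_{t-1}\!\left[(\nabla \ell_t(\theta_1)^{\top}(\theta_1-\theta_2))^2\right],
\]
and substituting this into the rearranged strong-convexity inequality yields exactly
\[
L_t(\theta_1) \;\le\; L_t(\theta_2) + \nabla L_t(\theta_1)^{\top}(\theta_1-\theta_2) - \frac{\gamma}{2}\,\mathbb{E}_{t-1}\!\left[(\nabla \ell_t(\theta_1)^{\top}(\theta_1-\theta_2))^2\right] \qquad a.s.,
\]
with $\gamma=\mu/G^{2}$, which is Definition~\ref{con:1}.

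There is no real obstacle here beyond correctly identifying that the operator-order hypothesis $G^2 I_d \succcurlyeq \mathbb{E}_{t-1}[\nabla\ell_t\nabla\ell_t^{\top}]$ is precisely the right object to control the conditional second moment of the directional derivative $\nabla\ell_t(\theta_1)^{\top}(\theta_1-\theta_2)$; once this translation is made, the choice $\gamma=\mu/G^2$ is forced by matching the two quadratic terms. I would also briefly note that the argument is pointwise in $\theta_1,\theta_2\in\Theta$ and holds $a.s.$ for each $t$, so the Definition applies uniformly as stated, with no integrability or measurability issues beyond what is already granted by the assumption that the conditional expectation in the hypothesis is well-defined.
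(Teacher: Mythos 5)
Your proof is correct: the rearranged strong-convexity inequality for $L_t$, the identity $\mathbb{E}_{t-1}[(\nabla\ell_t(\theta_1)^{\top}v)^2]=v^{\top}\mathbb{E}_{t-1}[\nabla\ell_t(\theta_1)\nabla\ell_t(\theta_1)^{\top}]v\le G^2\|v\|^2$ for the ($\mathcal F_{t-1}$-measurable) direction $v=\theta_1-\theta_2$, and the matching choice $\gamma=\mu/G^2$ together give exactly Definition~\ref{con:1}. The paper itself states this proposition as a citation from Wintenberger \cite{wintenberger2021stochastic} and does not reproduce a proof, so there is nothing to compare against, but your argument is the standard one underlying that reference.
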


In the ideal case, we would like to prove that $\ell_t$ satisfies the conditions \textbf{(H2)} and \textbf{(H3)}. To prove \textbf{(H2)} we can use Proposition~\ref{prop:olivier} if we find a constant such that the loss is strongly convex and a constant that bounds the expectation of the gradients. Unfortunately, we are not able to find this last constant a.s. but, proving a weaker version of \textbf{(H3)} we can define an auxiliary loss function that satisfies all the hypothesis and allows us to prove Theorem~\ref{thr:stcregret}.

First, we prove that with high probability there is a constant $G$ that upper bounds the norm of the gradients of $(\ell_t)_{t=1,2,\ldots}$, this corresponds to the weaker \textbf{(H3)}. Secondly, we prove that the conditional risks $(L_t)_{t=1,2,\ldots}$ are strongly convex for some constant $\mu$, which consists of finding a lower bound of $\nabla^2 L_t(\theta)$ that does not depend on $\theta$ and $t$. This corresponds to only one of the conditions of Proposition~\ref{prop:olivier}, necessary to prove \textbf{(H2)}. Finally, we show how to use weak \textbf{(H3)} and half of \textbf{(H2)} to prove Theorem~\ref{thr:stcregret}.

\subsection{Upper bound (H3)}
We want to find an upper bound for $||\nabla \ell_t(\theta)||^2$ and for this we first define for all $t=1,2,\ldots$
\[R_t = \displaystyle \sum_{i=1}^{N_t} r_{it} \qquad \text{where}\qquad r_{it} = \mathds{1} \{\tau_i \leq t, u_i > t-1 \}\, , \] 
and where $N_t$ is the count function of the Poisson process defined in Section~\ref{sthsection}. Following, we prove that for all $t=1,2,\ldots$, $R_t$ is upper bounded with high probability.

\begin{Lemma} \label{lem:prop4} 
 Let $\varrho >0$. Then, with probability at least $1-\varrho$, for all $t=1,2,\ldots$, we have
 \[R_t \leq 32e^{Dx_{\infty}}(4\lambda+1+\log(2/\varrho)) \,.\]    
\end{Lemma}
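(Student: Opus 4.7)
The quantity $R_t$ counts those individuals whose sojourn interval $[\tau_i,u_i]$ intersects $(t-1,t]$. The plan is to exploit the Poisson marking structure to show that, for each fixed $t$, $R_t$ is Poisson distributed with a mean bounded uniformly in $t$, then apply a Poisson tail inequality, and finally extend to a uniform-in-$t$ statement.

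First I would invoke the marking theorem for Poisson processes: since $(\tau_i)_i$ is homogeneous Poisson with intensity $\lambda$ and the marks $(x_i, t_i-\tau_i, c_i-\tau_i)$ are iid and independent of the arrivals, the collection $\{(\tau_i,u_i)\}_i$ forms a marked Poisson process. Thinning by the deterministic at-risk indicator $\mathds 1\{\tau_i\le t,\, u_i>t-1\}$ gives, for each fixed $t$, a Poisson random variable $R_t$ with mean
\[
m_t \;=\; \lambda \int_0^t \mathbb P(u_i > t-1 \mid \tau_i = s)\, ds.
\]
Splitting the integral at $s = t-1$: the piece over $(t-1,t]$ contributes exactly $\lambda$ (since $u_i\ge \tau_i > t-1$), while on $[0,t-1]$ I use $u_i\le t_i$ and the exponential model, so $\mathbb P(u_i>t-1\mid \tau_i=s) \le \mathbb E[\exp(-e^{\theta^{*T}X}(t-1-s))]$. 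Using $\|\theta^*\|\le D$ and $\|X\|\le x_\infty$ gives $e^{\theta^{*T}X}\ge e^{-Dx_\infty}$, so integrating yields
\[
m_t \;\le\; \lambda + \lambda \int_0^{t-1} e^{-e^{-Dx_\infty}(t-1-s)}\, ds \;\le\; \lambda + \lambda e^{Dx_\infty} \;\le\; 2\lambda e^{Dx_\infty}.
\]

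Next I would apply a Poisson concentration bound. The Chernoff/Bennett inequality for $N\sim\mathrm{Poisson}(\mu)$ gives $\mathbb P(N \ge c_1\mu + c_2\log(1/\delta)) \le \delta$ for absolute constants $c_1,c_2$ (both of order a small integer). Combined with the bound on $m_t$, this yields
\[
\mathbb P\bigl(R_t \ge C\,e^{Dx_\infty}(\lambda + \log(1/\delta))\bigr) \le \delta
\]
for a universal $C$. Calibrating $\delta$ and absorbing numerical factors produces the coefficient $32$ and the form $4\lambda+1+\log(2/\varrho)$ appearing in the statement.

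The main obstacle is extending this from a fixed $t$ to all $t=1,2,\ldots$ simultaneously. The natural route is a union bound over $t\in\{1,\ldots,n\}$, which would inflate the $\log(1/\varrho)$ term to $\log(n/\varrho)$; the fact that the stated bound does not depend on $n$ suggests that a pathwise argument is used instead — for example, controlling $\sup_t R_t$ via a supermartingale on the evolving risk set, or observing that $R_t$ changes by at most the arrivals/exits in $(t-1,t]$ and bounding the envelope process. I would first establish the fixed-$t$ version (the clean Poisson argument above), and then upgrade to uniformity by either (i) restricting to a finite horizon $n$ from Theorem~\ref{thr:stcregret} and accepting a $\log n$ factor, or (ii) using the increment structure of the Poisson process to avoid the union bound and recover the stated constant.
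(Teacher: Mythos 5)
Your fixed-$t$ argument is correct and takes a genuinely different, cleaner route than the paper. You use the marking/restriction theorem to conclude that $R_t$ is exactly Poisson with mean $m_t=\lambda\int_0^t\mathbb P(u>t-1\mid \tau=s)\,ds\le \lambda+\lambda e^{Dx_{\infty}}$, then apply a single Poisson tail bound. The paper instead conditions on $N_t=m$, uses the conditional uniformity of the Poisson arrivals on $[0,t]$ to bound the per-individual at-risk probability by $4e^{Dx_{\infty}}/t$, applies Chernoff--Hoeffding to the resulting Bernoulli sum, and separately controls $\mathbb P(N_t>M)$ with a Poisson Chernoff bound before recombining the two failure events. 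Your decomposition avoids this two-stage conditioning entirely and yields the same order for the mean; the paper's is more elementary but requires a joint choice of the thresholds $M$ and $z$.

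The genuine gap is the uniformity in $t$, which you flag but leave unresolved, and neither of your two suggested fixes is what the paper does. The paper takes a union bound over all $t\ge 1$ with per-$t$ failure probability $\varrho/t^2$, so that $\sum_{t\ge1}\varrho/t^2\le 2\varrho$ and no horizon $n$ enters. Transposed to your framework this reads $\mathbb P\big(R_t\ge c_1 m_t+c_2\log(2t^2/\varrho)\big)\le \varrho/(2t^2)$, which does give a complete uniform statement --- but with a bound containing $\log(2t^2/\varrho)$, i.e., an extra $2\log t$. Your instinct that a plain union bound cannot deliver a $t$-independent constant is therefore sound; note that the same $\log(2t^2/\varrho)$ term appears in the paper's own computation (the factor $(1+\tfrac{16}{t}e^{Dx_{\infty}})\log(2t^2/\varrho)$), and its absorption into the constant $32e^{Dx_{\infty}}(4\lambda+1+\log(1/\varrho))$ in the paper's final display is not justified for large $t$. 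To turn your proposal into a proof you must commit to one of the options: carry out the $\varrho/t^2$ union bound and accept a $\log t$ (or, over a finite horizon, $\log n$) in the bound, or actually construct the pathwise/envelope argument you allude to. As written, the proposal stops short of proving the lemma as stated.
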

\begin{proof}
   Since $u_i = \min\{c_i,t_i\} \leq t_i$ we can upper bound $R_t \leq \displaystyle \sum_{i=1}^{\infty} \mathds{1}\{t_i \geq t-1\}\mathds{1}\{\tau_i \leq t\}$. Then, we define $A_t = \{ i: \tau_i \leq t\}$ and $Z_t = \displaystyle \sum_{i \in A_t} \mathds{1}\{t_i \geq t-1\} $ and therefore, it will be enough to find a bound to $Z_t$ to conclude.
   Given a constant $z >0$ and $m \geq 1$, we fix $N_t \leq m$ and we first upper bound the conditional probability
   \begin{align*}
 \mathbb P(Z_t \geq z \vert N_t = m) & = \mathbb P \left(\sum_{i \in A_t} \mathds{1}\{t_i \geq t-1\}\geq z \Big \vert N_t = m \right)\,.\end{align*}
 Let us notice that $N_t = |A_t|$. We would like to apply the concentration inequality of Chernoff-Hoeffding to the sum of Bernouilli random variables $\mathds{1}\{t_i \geq t-1\}$ (see Hoeffding~\cite{hoeffding1994probability}), and for this we need to upper bound 
   \begin{align*}
       &\mathbb P(t_i \geq t-1 \vert i \in A_t, x_i )\\
       & \hspace{2cm} = \sum_{s=1}^t \mathbb P(t_i \geq t-1 \vert s-1 \leq \tau_i \leq s,x_i ) \mathbb P(s-1 \leq \tau_i \leq s \vert 0 \leq \tau_i \leq t )\\
       & \hspace{2cm} = \frac{1}{t} \sum_{s=1}^t \mathbb P(t_i \geq t-1 \vert s-1 \leq \tau_i \leq s,x_i )\, ,
    \end{align*}
where we use the uniform distribution of the Poisson process points given an interval (for more details on Poisson processes, see Daley and Vere-Jones~\cite{daley2003introduction}). Then, by the definition of the survival function (see Section~\ref{paramsection}) we get 

 \begin{align*}
    \mathbb P( t_i \geq t-1 \vert i \in A_t, x_i ) & \leq \frac{1}{t} \sum_{s=1}^t S(t-1 \vert s,x_i)\wedge 1\\
       & = \frac{1}{t} \sum_{s=1}^t \exp\left( -(t-1-s)e^{\theta^{*\top}x_i} \right)\wedge 1\\
       & \leq \frac{1}{t} \sum_{s=1}^t \exp\left( -(t-1-s)e^{-Dx_{\infty}} \right)\wedge 1\\
       & \leq \frac{1}{t} \sum_{s=-1}^{\infty} \exp\left( -se^{-Dx_{\infty}} \right)\wedge 1\\
       & =\frac{2-\exp(-e^{-Dx_{\infty}})}{t\left(1- \exp(-e^{-Dx_{\infty}})\right)}\\
       &  \leq \frac{4\,e^{Dx_{\infty}}}{t}\,.
   \end{align*}
In the last line we use that $1-\exp(-x)\ge x/2$ for $0\le x\le 1$.

 The Chernoff-Hoeffding's inequality gives us for any sequence $X_1,\ldots,X_m$ with $\mathbb E [X_i]\leq p$ and any $ \varepsilon >0$  \[ \mathbb P\left(\sum_{i=1}^m X_i \geq pm + \varepsilon \right) \leq \exp\left( -\frac{\varepsilon^2}{2mp(1-p)}\right) \leq \exp\left( -\frac{\varepsilon^2}{2mp}\right)\,.  \]
 
Applying this to the sum of the $\mathds{1}\{t_i \geq t-1\}$ with $\mathbb E [\mathds{1}\{ t_i \geq t-1\} \vert i \in A_t ] \leq  \frac{e^{2+Dx_{\infty}}}{t}$ given $|A_t| = m$ and using the conditional independence of the Poisson process points, we obtain \[ \mathbb P\left(Z_t \geq \frac{m}{t}4e^{Dx_{\infty}} + \varepsilon \Big \vert |A_t| = m\right) \leq \exp\left( - \frac{\varepsilon^2t}{8me^{Dx_{\infty}}}\right)\,.\]

Therefore, replacing $|A_t|$ by $N_t$
\[ \mathbb P \left(Z_t \geq \frac{m}{t}4e^{Dx_{\infty}} + \varepsilon \Big \vert N_t = m \right) \leq \exp\left( - \frac{\varepsilon^2t}{8me^{Dx_{\infty}}}\right)\,.\]

We set $z = \frac{m}{t}4e^{Dx_{\infty}} + \varepsilon$ with which we get $\varepsilon = z -\frac{m}{t}4e^{Dx_{\infty}}$ and \[\mathbb P\left(Z_t \geq z \Big \vert N_t = m\right) \leq \exp\left( - \frac{\left(z -\frac{m}{t}4e^{Dx_{\infty}}\right)^2t}{8me^{Dx_{\infty}}}\right)\,.\]
If we suppose $\frac{n}{t}4e^{Dx_{\infty}} \leq \frac{z}{2}$ we obtain \[ \mathbb P\left(Z_t \geq z \Big \vert N_t = m\right) \leq \exp\left( - \frac{z^2t}{32me^{Dx_{\infty}}}\right)\,.\]
With this we found a bound for the conditional probability of $Z_t$ being bigger than a certain constant. The next step is to bound $\mathbb P (Z_t \geq z)$, and for this we need to upper bound the probability of $N_t$ being large. Let $M>0$, since $N_t$ follows a Poisson distribution of intensity $\lambda t$, we can apply a Chernoff bound argument (more details in Mitzenmacher and Upfal~\cite{mitzenmacher2017probability}) obtaining \[\mathbb P(N_t > M) \leq \left(\frac{e\lambda t}{M}\right)^M e^{-\lambda t} \qquad \text{for} \qquad M > \lambda t\, ,\]
and \[\mathbb P(N_t > M) \leq e^{-M-\lambda t} \qquad \text{when} \qquad M > e^2\lambda t\, . \]
Now, we compute \begin{align*}
    \mathbb P (Z_t \geq z) &= \sum_{m=1}^M\mathbb P (Z_t \geq z\vert N_t =m)\mathbb P (N_t =m) + \mathbb P (Z_t \geq z\vert N_t > M)\mathbb P (N_t > M)\\
    &\le \sum_{m=1}^M\exp\left( -\frac{z^2t}{8me^{2+Dx_{\infty}}}\right)\mathbb P (N_t =m)\\
    & \hspace{5cm} + \mathbb P (Z_t \geq z\vert N_t > M)\mathbb P (N_t > M)\\
    & \leq \exp\left( -\frac{z^2t}{8Me^{2+Dx_{\infty}}}\right) + \exp(-M-\lambda t)\, ,
\end{align*}
where we use the bounds we previously found for $\mathbb P (Z_t \geq z\vert N_t = m)$ and $\mathbb P (N_t > M)$. Finally, we need to choose $z$ and $M$ such that $\mathbb P (Z_t \geq z) \leq \varrho/t^2$. Reminding the constrain $M > e^2\lambda t$, we want \[\exp(-M-\lambda t) \leq \frac{\varrho}{2t^2}\,,\]
which is true if \[M \geq \log\left(\frac{2t^2}{\varrho}\right) - \lambda t \,\]
and then we can choose $M = e^2\lambda t + \log(2t^2/\varrho)$ that satisfies both conditions. Similarly, we want \[ \exp\left( -\frac{z^2t}{32Me^{Dx_{\infty}}}\right) \leq \frac{\varrho}{2t^2}\,,\]
which is true if \[z \geq \sqrt{\frac{32Me^{Dx_{\infty}}}{t}\log(2t^2/\varrho)}\,,\]
and reminding the constrain $z \geq \frac{8M}{t}e^{Dx_{\infty}} $ we choose $z$ such that \[z \geq \frac{8M}{t}e^{Dx_{\infty}} + 2\sqrt{\frac{8Me^{Dx_{\infty}}}{t}\log(2t^2/\varrho)}\, . \]
Due to Young's inequality $a+2\sqrt{ab} \leq 2a + b$, we can also choose \[z \geq \frac{16M}{t}e^{Dx_{\infty}}+\log(2t^2/\varrho)\, ,  \]
which replacing $M$ becomes \begin{align*}
&\frac{16}{t}e^{Dx_{\infty}}(e^2\lambda t + \log(2t^2/\varrho))+\log(2t^2/\varrho)\\
&\hspace{4cm} = 16\lambda e^{Dx_{\infty}} +\left(1+\frac{16}{t}e^{Dx_{\infty}}\right) \log(2t^2/\varrho)\\
&\hspace{4cm} \leq  32e^{Dx_{\infty}}(4\lambda +1+\log(1/\varrho))\,.
\end{align*}
In conclusion, we choose $z =  32e^{Dx_{\infty}}(4\lambda+1+\log(1/\varrho))$ and we get \begin{align*}
 \mathbb P (Z_t \geq z) &= \mathbb P (Z_t \geq  32e^{Dx_{\infty}}(4\lambda+1+\log(1/\varrho)))\\
    & \leq \exp\left( -\frac{z^2t}{32Me^{Dx_{\infty}}}\right) + \exp(-M-\lambda t)\\
    & \leq \frac{\varrho}{2t^2} + \frac{\varrho}{2t^2}\\
    & = \frac{\varrho}{t^2}\,.
\end{align*}
Using an upper-bound over $t$ \[\mathbb P(\forall t=1,2,\ldots \qquad Z_t \geq  32e^{Dx_{\infty}}(4\lambda+1+\log(1/\varrho))) \leq \sum_{t=1,2,\ldots}\frac{\varrho}{t^2}=\varrho \frac{\pi^2}{6}\le 2\varrho \,, \]
which concludes the proof.
\end{proof}

Finally, we are now ready to give the desired upper bound for $||\nabla \ell_t(\theta)||^2$ in the following proposition

\begin{proposition}\label{upperbnd}Let $\varrho >0$. Then, with probability $1-\varrho$ we have
\[ || \nabla \ell_t (\theta)||^2 \leq G^2 \qquad \forall \theta \in \Theta , t=1,2,\ldots\, ,\] 
with $G := 32e^{Dx_{\infty}}(4\lambda+1+\log(2/\varrho))\big( 1 + e^{Dx_{\infty}} \big)  x_{\infty}$.
\end{proposition}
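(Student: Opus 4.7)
The plan is to bound $\|\nabla \ell_t(\theta)\|$ deterministically in terms of the count $R_t = \sum_{i=1}^{N_t} r_{it}$ of individuals at risk during $(t-1,t]$, and then invoke Lemma~\ref{lem:prop4} to control $R_t$ uniformly in $t$ with high probability.

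First, from the expression of $\ell_t$ in Equation~\eqref{stcloss}, differentiating term by term yields
\[
\nabla \ell_t(\theta) \;=\; \sum_{i=1}^{N_t} \Big(- y_{it}\, x_i \;+\; r_{it}\, \exp(\theta^{\top} x_i)\big((u_i\wedge t) - (\tau_i \vee (t-1))\big)\,x_i\Big).
\]
Next I would use four elementary bounds that hold simultaneously: $\|x_i\|\le x_\infty$ (the uniform bound on the covariates used throughout this section), $\|\theta\|\le D$ so that $\exp(\theta^{\top}x_i)\le \exp(Dx_\infty)$, the interval length satisfies $(u_i\wedge t)-(\tau_i\vee(t-1))\le 1$, and crucially $|y_{it}|\le r_{it}$ since $y_{it}=\delta_i\mathds 1\{t-1<u_i\le t\}$ forces the risk indicator $r_{it}=\mathds 1\{\tau_i\le t,\,u_i>t-1\}$ to be one. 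The triangle inequality then gives
\[
\|\nabla \ell_t(\theta)\| \;\le\; \sum_{i=1}^{N_t} r_{it}\,x_\infty \;+\; \sum_{i=1}^{N_t} r_{it}\, e^{Dx_\infty}\, x_\infty \;=\; R_t\, x_\infty\,(1+e^{Dx_\infty}).
\]
This bound is uniform over $\theta\in\Theta$ and is deterministic given the data.

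To finish, I would apply Lemma~\ref{lem:prop4}: with probability at least $1-\varrho$,
\[
R_t \;\le\; 32\,e^{Dx_\infty}\big(4\lambda+1+\log(2/\varrho)\big) \qquad \text{for every } t=1,2,\ldots.
\]
Combining the two displays on this event produces, for every $t\ge 1$ and every $\theta\in\Theta$ simultaneously,
\[
\|\nabla \ell_t(\theta)\| \;\le\; 32\,e^{Dx_\infty}\big(4\lambda+1+\log(2/\varrho)\big)\,(1+e^{Dx_\infty})\,x_\infty \;=\; G,
\]
which is the desired conclusion after squaring. The argument is essentially routine once Lemma~\ref{lem:prop4} is in hand; the only subtle point is recognizing that $y_{it}\le r_{it}$ so that \emph{both} contributions to the gradient are supported on the at-risk set and controlled by the single quantity $R_t$, thereby reducing the proposition to the already-established high-probability bound on $R_t$.
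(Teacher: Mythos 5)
Your argument is correct and follows essentially the same route as the paper's proof: bound $\|\nabla\ell_t(\theta)\|$ by the triangle inequality using $y_{it}\le r_{it}$, $\|x_i\|\le x_\infty$, $\exp(\theta^{\top}x_i)\le e^{Dx_\infty}$, and the unit interval length, obtaining $\|\nabla\ell_t(\theta)\|\le R_t(1+e^{Dx_\infty})x_\infty$, then invoke Lemma~\ref{lem:prop4} to control $R_t$ uniformly in $t$ with probability $1-\varrho$. No gaps; this matches the paper's proof step for step.
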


\begin{proof}
    Let us notice that $\nabla \ell_t (\theta) \in \mathbb R^d$ and recall
    \[ \nabla \ell_t(\theta) = \sum_{i=1}^{N_t}-y_{it}x_i + r_{it} \exp(\theta^{\top}x_i)x_i\left( u_i \wedge t - \tau_i \vee 0 \right)_+\,. \]
Then, we have \[ || \nabla \ell_t (\theta) || \leq \sum_{i=1}^{N_t} ||y_{it}x_i|| + ||r_{it} \exp(\theta^{\top}x_i)x_i \left( u_i \wedge t - \tau_i \vee (t-1) \right)_+||\, ,\]
noticing that $y_{it} \leq r_{it}$, $x_i \leq x_{\infty}$, $\exp(\theta^{\top}x_i) \leq \exp(Dx_{\infty})$\\ and $\left( u_i \wedge t - \tau_i \vee (t-1) \right)_+ \leq 1$,
\begin{align*}|| \nabla \ell_t (\theta) || & \leq \sum_{i=1}^{N_t} ||r_{it}x_{\infty}|| + ||r_{it} \exp(Dx_{\infty})x_{\infty} ||\\
& \leq \sum_{i=1}^{N_t} r_{it} \cdot \Big( 1 + \exp(Dx_{\infty}) \Big)  x_{\infty}\\
& \leq R_t \Big( 1 + \exp(Dx_{\infty}) \Big)  x_{\infty}\, , \end{align*}
by definition of $R_t=\sum_{i=1}^{N_t} r_{it}$. In consequence, \[ || \nabla \ell_t (\theta) ||^2 \leq \big( 32e^{Dx_{\infty}}(4\lambda+1+\log(2/\varrho)) \big( 1 + \exp(Dx_{\infty}) \big)  x_{\infty} \big)^2 \, , \] with probability $1 - \varrho$ and where the last inequality is due to Lemma~\ref{lem:prop4}. This conclude the proof.
\end{proof}

\subsection{Strong convexity \textbf{(H2)}}
Before showing the strong convexity let us remark that we can write $S(t|x_i,\tau_i) = \exp\left( - \displaystyle \int_{\tau_i}^t H(s|x_i,\tau_i)ds\right)$ and because $f(t|x_i,\tau_i) = H(t|x_i,\tau_i) S(t|x_i,\tau_i)$  the density of $t_i$ is given by
\[f(t|x_i,\tau_i) = H(t|x_i,\tau_i) \exp\left(-\int_{\tau_i}^t H(s|x_i,\tau_i)ds\right).\] 
Given $\theta^* \in \Theta$, the real parameter and replacing by our parametric model $h(t|x_i,\theta^*,\tau_i) = \exp(\theta^{*T} x_i)$ we have \begin{equation}
f(t|x_i,\tau_i) := \exp(\theta^{*T} x_i) \exp\left(- (t-\tau_i)\exp(\theta^{*T} x_i)\right ) \mathds{1}\{ t \geq \tau_i\} .  
\end{equation}
 We also denote 
\[
    \ell_t(\theta; s, c, x, \tau) := \Big( - \mathds{1}\{ t-1 < s \leq t \wedge c \}   \theta^T x  +   \exp(\theta^T x)\big((c \wedge s \wedge t) - (\tau \vee (t-1))\big)_+ \Big)
\]
and recalling that $y_{it} = \mathds{1}\{t-1 < t_i \leq t \wedge c_i\}$, $u_i = t_i \wedge c_i$ and $r_{it} = \mathds{1}\{\tau_i < t,  u_i \geq t-1\}$, we have
\begin{align*}
\ell_t(\theta) &= \sum_{i=1}^{N_t} -y_{it} \theta^T x_i  +  r_{it} \exp(\theta^T x_i)\big( (u_i \wedge t) - (\tau_i \vee (t-1))\big)\\
&= \sum_{i=1}^{N_t} \ell_t(\theta; t_i,c_i,x_i,\tau_i) \,.
\end{align*}
In addition, as $(t_i)_{i \geq 1}$ and $(c_i)_{i \geq 1}$ are i.i.d. we name $T$ and $C$ random variables that are distributed as $t_i$ and $c_i$, respectively. We first prove the following Lemma that gives us an explicit expression of the risk function $L_t(\theta):=\mathbb{E}_{t-1}[\ell_t(\theta)]$, $\theta\in\Theta$, $t=1,2,\ldots$.

\begin{Lemma}\label{risk} For every $t=1,2,\ldots$ and every $\theta\in \Theta$ the risk function is given by \begin{align*} &L_t(\theta)= \lambda \mathbb{E} \Big[(e^{(\theta-\theta^*)^T x}-\theta^Tx) \mathds{1}\{T\leq C\} (1-T)_+ \mid \tau = 0 \Big]\\ 
& +\sum_{ \substack{i: \{ u_i >t-1\}\\ i: \{ \tau_i \leq t-1\}  } }   \left(e^{(\theta-\theta^*)^{\top}x_i} - \theta^{\top}x_i \right) \mathbb{P}(\mathds{1}\{t-1+\tau_i < T \leq \tau_i + t \wedge C \}\vert x_i,\tau_i, \tau = 0 )\,.\end{align*} 
\end{Lemma}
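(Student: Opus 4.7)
The plan is to split the sum $\sum_{i=1}^{N_t} \ell_t(\theta; t_i, c_i, x_i, \tau_i)$ defining $\ell_t(\theta)$ according to whether the individual arrived before the current interval or during $(t-1,t]$, i.e.\ $\tau_i \leq t-1$ versus $\tau_i \in (t-1,t]$. An individual in the first group contributes zero unless $u_i > t-1$, so only the at-risk individuals remain. For such an individual, the triple $(x_i,\tau_i,\mathds 1\{u_i>t-1\})$ is $\mathcal F_{t-1}$-measurable, and by the memorylessness of the exponential $t_i$ and by the translation assumption on $g$, conditional on $u_i > t-1$ the shifted variables $\tilde T_i := t_i - (t-1)$ and $\tilde C_i := c_i - (t-1)$ are independent, with $\tilde T_i \sim \mathrm{Exp}(\lambda_i^*)$ for $\lambda_i^* := e^{\theta^{*T}x_i}$ and $\tilde C_i$ having its natural $\tau=0$ distribution.

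The central algebraic trick is the following identity: if $T \sim \mathrm{Exp}(\lambda_i^*)$ is independent of a nonnegative random variable $\alpha$, then
\[
\mathbb E\!\left[-\mathds 1\{T \leq \alpha\}\,\theta^T x_i + e^{\theta^T x_i}(T \wedge \alpha)\right] \;=\; \bigl(e^{(\theta-\theta^*)^T x_i} - \theta^T x_i\bigr)\,\mathbb P(T \leq \alpha).
\]
This follows by conditioning on $\alpha$ and using the layer-cake formula $T \wedge \alpha = \int_0^\alpha \mathds 1\{T > s\}\,ds$ to get $\mathbb E[T\wedge\alpha\mid\alpha] = \lambda_i^{*-1}(1-e^{-\lambda_i^*\alpha})$, together with $\mathbb P(T \leq \alpha) = 1 - \mathbb E[e^{-\lambda_i^*\alpha}]$, which collapses both terms to the common factor $1 - \mathbb E[e^{-\lambda_i^*\alpha}]$. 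Applied to each individual already in the system with $\alpha = \tilde C_i \wedge 1$, this produces the second summand of the lemma with the prefactor $(e^{(\theta-\theta^*)^T x_i} - \theta^T x_i)$ multiplying the probability of the shifted event $\{t-1 < t_i \leq t \wedge c_i\}$.

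For the new arrivals I would invoke Campbell's formula for the marked Poisson process with intensity $\lambda$ on $(t-1,t]$, marked independently by $(x_i, T_i, C_i)$ drawn from the $\tau = 0$ laws. The expected contribution is
\[
\lambda \int_{t-1}^{t} \mathbb E\!\left[-\mathds 1\{T \leq (t-s)\wedge C\}\theta^T x + e^{\theta^T x}(T \wedge C \wedge (t-s)) \,\big|\, \tau = 0\right] ds,
\]
where the translation properties of $f$ and $g$ have been used to reduce the arrival at time $s \in (t-1,t]$ to an effective horizon $t-s \in [0,1)$. Applying the identity above with $\alpha = C \wedge (t-s)$, changing variables $u = t-s$, and using Fubini with $\int_0^1 \mathds 1\{T \leq u\}\,du = (1-T)_+$ gives exactly $\lambda\,\mathbb E[(e^{(\theta-\theta^*)^T x} - \theta^T x)\,\mathds 1\{T \leq C\}(1-T)_+ \mid \tau=0]$, matching the first summand.

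The main obstacle is purely bookkeeping: correctly parametrising the shifts that arise from the translation assumption on $g$ and the memoryless shift of $t_i$, and applying Campbell's formula cleanly to the Poisson arrivals; the exponential-model identity above is the short trick that makes everything collapse into the advertised form $(e^{(\theta-\theta^*)^T x} - \theta^T x)$ times a probability of being observed in the interval.
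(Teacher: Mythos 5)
Your proposal follows essentially the same route as the paper's proof: the identical split of $\ell_t(\theta)=\sum_{i=1}^{N_t}\ell_t(\theta;t_i,c_i,x_i,\tau_i)$ into new arrivals on $(t-1,t]$ versus individuals already present, the same Campbell-type averaging $\mathbb E[\sum_{N_{t-1}}^{N_t}\cdots]=\lambda\int_{t-1}^t\cdots\,dv$ with the translation property of $f$ and $g$ reducing everything to the $\tau=0$ laws, and the same final Fubini step $\int_0^1\mathds 1\{T\le u\wedge C\}\,du=\mathds 1\{T\le C\}(1-T)_+$; your ``exponential-model identity'' $\mathbb E[-\mathds 1\{T\le\alpha\}\theta^Tx+e^{\theta^Tx}(T\wedge\alpha)]=(e^{(\theta-\theta^*)^Tx}-\theta^Tx)\mathbb P(T\le\alpha)$ is a tidy repackaging of the three explicit integrals the paper computes and cancels by hand. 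The only point of drift is in the second term, where you condition the already-present individuals on $\{u_i>t-1\}$ via memorylessness while the paper (and the lemma's stated second summand) uses the unconditioned integral against $f(\cdot|x_i,\tau_i)g(\cdot|x_i,\tau_i)$ restricted to the at-risk set; this is a presentational discrepancy in the bookkeeping rather than a gap in the argument.
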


\begin{proof}

The expected value is \[ \mathbb{E}_{t-1}[\ell_t(\theta)] = \mathbb{E}_{t-1} \left[ \sum_{i=1}^{N_t} \ell_t(\theta;t_i,c_i,x_i,\tau_i) \right],
\]
which we separate in two terms \begin{equation}\label{eq:sep} \mathbb{E}_{t-1}[\ell_t(\theta)] = \mathbb{E} \left[ \sum_{i=N_{t-1}}^{N_t} \ell_t(\theta;t_i,c_i,x_i,\tau_i) \right] + \sum_{i=1}^{N_{t-1}} \mathbb{E}_{t-1} \left[  \ell_t(\theta;t_i,c_i,x_i,\tau_i) \vert x_i,\tau_i \right]\,.  \end{equation}

Now, recalling that $g$ and $f$ respectively denote the conditional densities of $t_i$ and $c_i$ given $(\tau_i,x_i)$ and, because $c_i$ and $t_i$ are independent given $(\tau_i, x_i)$, we first calculate the first term
\begin{align*}
&\mathbb{E} \left[ \sum_{i=N_{t-1}}^{N_t} \ell_t(\theta;t_i,c_i,x_i,\tau_i) \right]\\
&\hspace{2cm} = \mathbb{E} \left[\sum_{i=N_{t-1}}^{N_t} \mathbb{E} \left[ \ell_t(\theta; t_i,c_i,x_i,\tau_i) | x_i,\tau_i\right]  \right]\\
& \hspace{2cm} = \mathbb{E} \left[\sum_{i=N_{t-1}}^{N_t} \int_{-\infty}^\infty \int_{-\infty}^\infty   \ell_t(\theta; s, c, x_i, \tau_i) g(c|x_i,\tau_i) f(s|x_i,\tau_i) ds dc \right]\,.
\end{align*} 

Now because $x_i$ are i.i.d. and independent from $\tau_i$ and $c_i$, denoting by $x$ a random variable with the same distribution we have
\begin{align*}
&\mathbb{E} \left[ \sum_{i=N_{t-1}}^{N_t} \ell_t(\theta;t_i,c_i,x_i,\tau_i) \right]\\
& \hspace{2cm} = \mathbb{E} \left[\sum_{i=N_{t-1}}^{N_t} \int_{-\infty}^\infty \int_{-\infty}^\infty   \ell_t(\theta;s,c, x, \tau_i) g(c|x,\tau_i) f(s|x,\tau_i) ds dc \right]
\end{align*}
which can be written as the stochastic integral 
\begin{align*}
&\mathbb{E} \left[ \sum_{i=N_{t-1}}^{N_t} \ell_t(\theta;t_i,c_i,x_i,\tau_i) \right]\\
    & \hspace{.3cm} = \mathbb{E} \left[\int_{t-1}^t \int_{-\infty}^\infty \int_{-\infty}^\infty   \ell_t(\theta; s,c,x, v) g(c|x,v) f(s|x,v) ds dc dN(v) \right] \\
    & \hspace{.3cm} = \lambda \mathbb{E} \left[\int_{t-1}^t \int_{-\infty}^\infty \int_{-\infty}^\infty   \ell_t(\theta;s, c, x, v) g(c|x,v) f(s|x,v) ds\, dc\, dv \right] \\
    & \hspace{.3cm} = \lambda \mathbb{E} \left[\int_{t-1}^t \int_{-\infty}^\infty \int_{-\infty}^\infty   \ell_t(\theta; s+v,c+v,x, v ) g(c+v |x,v) f(s+v|x,v) ds\, dc\, dv \right]\\
    & \hspace{.3cm} = \lambda \mathbb{E} \left[\int_{t-1}^t \int_{-\infty}^\infty \int_{-\infty}^\infty   \ell_t(\theta; s+v,c+v,x, v ) g(c |x,0) f(s|x,0) ds\, dc\, dv \right] \\ 
    & \hspace{.3cm} =  \lambda \mathbb{E} \left[ \int_{-\infty}^\infty \int_{t-1}^t  \int_{-\infty}^\infty   \ell_t(\theta; s+v, c+v,x, v )\, f(s|x,0) \, ds \, dv \,  g(c |x,0) dc\,  \right] \,. 
\end{align*}

We do not know $g(c|x,v)$ but we know that $g(c|x,v) = g(c-\epsilon|x, v-\epsilon)$ for all $\epsilon \in \mathbb{R}$. For instance, $g(c|x,v) = g(c-v|x,0)$ and, the same is satisfied for $f$. Then, we change the variable $v\in [0,t]$ in $w=v-(t-1)$:
\begin{align*}
\int_{t-1}^t     \ell_t(\theta; s+v,c+v,x,v )   \, dv  =  \int_{0}^1     \ell_1(\theta; s+w,c+w,x, w )   \, dw\,.
\end{align*}
We obtain 
\begin{align*}
&\mathbb{E} \left[ \sum_{i=N_{t-1}}^{N_t} \ell_t(\theta;t_i,c_i,x_i,\tau_i) \right]\\ & \hspace{1cm} = \lambda \mathbb{E} \left[ \int_{-\infty}^\infty \int_{0}^1  \int_{-\infty}^\infty   \ell_1(\theta; s+w,c+w,x, w) f(s|x,0) ds  \, dw  \,  g(c |x,0) dc\, \right].
\end{align*}
Considering the integral on the time $s$
\begin{align*}
&\int_{-\infty}^\infty    \ell_1(\theta;s+w,c+w,x, w) f(s|x,0) ds\\
& \hspace{3cm} =\int_{0}^\infty\ell_1(\theta; s+w,c+w,x, w)e^{{\theta^*}^Tx} \exp(-s e^{{\theta^*}^Tx})ds\,,
\end{align*}
we obtain
\begin{align*}
&-\int_{(-w)_+}^{(1-w)\wedge c}\theta^Txe^{{\theta^*}^Tx} \exp(-s e^{{\theta^*}^Tx})ds\\
& \hspace{5cm}=-\theta^Tx\mathbb P((-w)_+<T\le (1-w)\wedge c \vert \tau = 0)\,,
\end{align*}
and
\begin{align*}
&\int_{(1-w)\wedge c}^{\infty}((c+w)\wedge(s+w)\wedge 1-w_+)_+ e^{{\theta}^Tx}e^{{\theta^*}^Tx} \exp(-s e^{{\theta^*}^Tx})ds\\& \hspace{1.7cm}=e^{{\theta}^Tx}((c+w)\wedge 1-w_+)_+\mathbb P(T\ge (1-w)\wedge c\vert \tau = 0)\,,
\end{align*}
and
\begin{align*}
&\int_{(-w)_+}^{(1-w)\wedge c}((s+w)-w_+)e^{{\theta}^Tx}e^{{\theta^*}^Tx} \exp(-s e^{{\theta^*}^Tx})ds\\
&=e^{{\theta}^Tx}\Big(-((s+w)-w_+)\exp(-s e^{{\theta^*}^Tx})\mid_{(-w)_+}^{(1-w)\wedge c}+\int_{(-w)_+}^{(1-w)\wedge c}\exp(-s e^{{\theta^*}^Tx})ds\Big)\\
& =-e^{{\theta}^Tx}((c+w)\wedge 1-w_+)_+\mathbb P(T\ge (1-w)\wedge c \vert \tau = 0)\\
& \hspace{2.5cm}+\exp((\theta-\theta^*)^T x)\mathbb P((-w)_+<T\le (1-w)\wedge c \vert \tau = 0)\,.
\end{align*}
All in all we obtain\begin{align*}
&\int_{0}^\infty\ell_1(\theta; s+w,c+w,x, w)e^{{\theta^*}^Tx} \exp(-s e^{{\theta^*}^Tx})ds\\ & \hspace{1.7cm}=(e^{(\theta-\theta^*)^T x}-\theta^Tx)\mathbb P((-w)_+<T\le (1-w)\wedge c \vert \tau = 0)\, ,
\end{align*}
thus we have
\begin{align*}
\mathbb{E}& \left[ \displaystyle \sum_{i=N_{t-1}}^{N_t} \ell_t(\theta;t_i,c_i,x_i,\tau_i) \right]\\
& =\lambda \mathbb{E} \left[ (e^{(\theta-\theta^*)^T x}-\theta^Tx)\int_{-\infty}^\infty \int_{0}^1  \mathbb P((-w)_+<T\le (1-w)\wedge c\vert \tau = 0)  dw\,  g(c |x,0) dc\, \right] \\
& =\lambda \mathbb{E} \Big[(e^{(\theta-\theta^*)^T x}-\theta^Tx)\mathbb E \Big[\int_{0}^1  \mathds 1\{(-w)_+<T\le (1-w)\wedge C\}  dw \Big\vert x, \tau = 0\Big]\Big]\\
& =  \lambda \mathbb{E} \Big[(e^{(\theta-\theta^*)^T x}-\theta^Tx)\mathbb E \Big[\int_{0\vee -T}^{1-T} \mathds{1}\{T \leq C\}   dw\Big \vert x, \tau = 0\Big]\Big]\\
& =  \lambda \mathbb{E} \Big[(e^{(\theta-\theta^*)^T x}-\theta^Tx)\mathbb E \Big[ \mathds{1}\{T\leq C\} ( (1-T) - 0 \vee -T)_+  \Big \vert x, \tau = 0\Big]\Big]\\
& =  \lambda \mathbb{E} \Big[(e^{(\theta-\theta^*)^T x}-\theta^Tx)\mathbb E \Big[ \mathds{1}\{T\leq C\} (1-T)_+ \Big \vert x, \tau = 0\Big]\Big].
\end{align*}
Replacing in Equation \eqref{eq:sep} we have  \begin{align*}
\mathbb{E}_{t-1}[\ell_t(\theta)] &= \lambda \mathbb{E} \Big[(e^{(\theta-\theta^*)^T x}-\theta^Tx)\mathbb E \Big[ \mathds{1}\{T\leq C\} (1-T)_+ \Big \vert x,\tau = 0\Big]\Big]\\ & \hspace{2cm}  + \sum_{i=1}^{N_{t-1}} \mathbb{E}_{t-1} \left[  \ell_t(\theta;t_i,c_i,x_i,\tau_i) \vert x_i,\tau_i \right]\, . 
\end{align*}
To calculate the second term, we note that we know $u_i$ if $u_i \leq t-1$ and in this case $\ell_t(\theta;t_i,c_i,x_i,\tau_i) = 0$, therefore, we consider only the individuals $i$ such that $u_i > t-1$. The sum becomes  \begin{align*} & \sum_{i=1}^{N_{t-1}} \mathbb{E}_{t-1} \left[  \ell_t(\theta;t_i,c_i,x_i,\tau_i) \vert x_i,\tau_i \right]\\& \hspace{2cm} = \sum_{ \substack{i: \{ u_i >t-1\}\\ i: \{ \tau_i \leq t-1\}  } } \int_{-\infty}^{\infty} \int_{-\infty}^{\infty}  \ell_t(\theta;s,c,x_i,\tau_i) g(c|x_i,\tau_i)f(s|x_i,\tau_i)dsdc\, , \end{align*} which, following the calculations of the first term we obtain
\begin{align*}  
& \sum_{i=1}^{N_{t-1}} \mathbb{E}_{t-1} \left[  \ell_t(\theta;t_i,c_i,x_i,\tau_i) \vert x_i,\tau_i \right]\\ 
& = \sum_{ \substack{i: \{ u_i >t-1\}\\ i: \{ \tau_i \leq t-1\} } }   \left(e^{(\theta-\theta^*)^{\top}x_i} - \theta^{\top}x_i \right) \mathbb{P}(t-1+\tau_i < T \leq \tau_i + t \wedge C \vert x=x_i,\tau_i, \tau=0)\,,
\end{align*} 
where $(T,C,x,\tau)$ is independent of $(t_i,c_i,x_i,\tau_i)$ for every $i\ge 1$.
Let us notice that $\tau_i$ and $x_i$, which we suppose are observed at the same time as $\tau_i$, are known at time $t-1$. Replacing this term in Equation \eqref{eq:sep} leads to
\begin{align*}
&\mathbb{E}_{t-1}[\ell_t(\theta)]\\
&= \lambda \mathbb{E} \Big[(e^{(\theta-\theta^*)^T x}-\theta^Tx) \mathds{1}\{T\leq C\} (1-T)_+ \mid \tau =0\Big]\\ 
& \hspace{.5cm} + \sum_{ \substack{i: \{ u_i >t-1\}\\ i: \{ \tau_i \leq t-1\}  } }   \left(e^{(\theta-\theta^*)^{\top}x_i} - \theta^{\top}x_i \right) \mathbb{P}(t-1+\tau_i < T \leq \tau_i + t \wedge C \vert x_i,\tau_i,\tau = 0), \end{align*}
that finalizes the proof.
\end{proof}

We define  \begin{equation}\label{J} J(\theta) := \lambda \mathbb{E} \Big[(e^{(\theta-\theta^*)^{\top} x}-\theta^Tx) \mathds{1}\{T\leq C\} (1-T)_+ \Big \vert x, \tau = 0\Big]\, ,\end{equation}
and we are ready to show the strong convexity of the risk function that we give in the following proposition.
\begin{proposition}\label{lowerbnd} The risk function satisfies \[ \nabla^2 L_t(\theta) \succcurlyeq \lambda e^{-Dx_{\infty}}\mathbb E [xx^{\top} \mathds 1\{ T \leq C \} (1-T)_+ \vert \tau = 0 ], \qquad \forall \theta \in \Theta, t= 1,\ldots,n\, . \] Therefore, under Assumption~\ref{ass:strongconv} $L_t$ is $\mu$-strongly convex for $\mu = \lambda e^{-Dx_{\infty}}A$. 
\end{proposition}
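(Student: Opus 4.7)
The strategy is to differentiate the explicit formula for $L_t(\theta)$ provided by Lemma~\ref{risk} twice, and then show that every summand contributes a positive semi-definite matrix, with the first term (coming from $J(\theta)$) carrying the lower bound we seek.

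Concretely, I would proceed as follows. First, using Lemma~\ref{risk}, write
\[
L_t(\theta) = J(\theta) + \sum_{\substack{i:\,u_i>t-1\\ i:\,\tau_i\le t-1}} \bigl(e^{(\theta-\theta^*)^{\top}x_i} - \theta^{\top}x_i\bigr)\, p_{it},
\]
where $p_{it}:=\mathbb P(t-1+\tau_i<T\le \tau_i+t\wedge C\mid x_i,\tau_i,\tau=0)\ge 0$ and $J$ is defined in \eqref{J}. Each summand has the form $(e^{(\theta-\theta^*)^{\top}x}-\theta^{\top}x)\cdot(\text{nonneg.})$. Differentiating twice in $\theta$, the linear term in $\theta$ disappears and the Hessian becomes $e^{(\theta-\theta^*)^{\top}x}\,xx^{\top}\cdot(\text{nonneg.})\succcurlyeq 0$, so each of the finite-sum terms contributes a positive semi-definite matrix that we may discard when lower-bounding.

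For $J(\theta)$, under enough regularity to swap $\nabla^2$ and $\mathbb E$ (which holds since $\Theta$ is bounded and the integrand is dominated), we get
\[
\nabla^2 J(\theta) = \lambda\,\mathbb E\bigl[e^{(\theta-\theta^*)^{\top}x}\,xx^{\top}\,\mathds 1\{T\le C\}(1-T)_+\bigm|\tau=0\bigr].
\]
Since $\Theta$ has diameter $D$ and $\|x\|\le x_{\infty}$, the Cauchy--Schwarz inequality yields $(\theta-\theta^*)^{\top}x\ge -D x_{\infty}$, hence $e^{(\theta-\theta^*)^{\top}x}\ge e^{-D x_{\infty}}$ pointwise. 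Because $xx^{\top}\,\mathds 1\{T\le C\}(1-T)_+$ is almost surely positive semi-definite, we may pull the uniform lower bound outside the expectation in the Loewner order to conclude
\[
\nabla^2 J(\theta)\succcurlyeq \lambda e^{-D x_{\infty}}\,\mathbb E\bigl[xx^{\top}\mathds 1\{T\le C\}(1-T)_+\bigm|\tau=0\bigr].
\]
Combining this with the PSD contributions of the remaining summands gives the stated bound on $\nabla^2 L_t(\theta)$. Finally, plugging in Assumption~\ref{ass:strongconv} yields $\nabla^2 L_t(\theta)\succcurlyeq \lambda e^{-D x_{\infty}} A\, I_d$, i.e. $\mu$-strong convexity with $\mu=\lambda e^{-D x_{\infty}} A$.

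The only non-routine point is the order-of-operations step, i.e. justifying that the uniform lower bound $e^{-Dx_{\infty}}$ may be extracted inside the conditional expectation in the semi-definite sense; this is essentially the observation that $(e^{(\theta-\theta^*)^{\top}x}-e^{-D x_{\infty}})\,xx^{\top}\,\mathds 1\{T\le C\}(1-T)_+$ is almost surely PSD and so its expectation is PSD. Everything else is a direct calculation from Lemma~\ref{risk} and the boundedness assumptions already in force.
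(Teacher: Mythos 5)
Your proposal is correct and follows essentially the same route as the paper: decompose $L_t = J + R_t$ via Lemma~\ref{risk}, discard the remainder term because its Hessian is positive semi-definite, compute $\nabla^2 J(\theta) = \lambda\,\mathbb E\bigl[e^{(\theta-\theta^*)^{\top}x}xx^{\top}\mathds 1\{T\le C\}(1-T)_+ \mid \tau=0\bigr]$, and extract the uniform lower bound $e^{(\theta-\theta^*)^{\top}x}\ge e^{-Dx_{\infty}}$ inside the expectation in the Loewner order before invoking Assumption~\ref{ass:strongconv}. The only difference is that you spell out the convexity of the remainder sum term by term, whereas the paper simply asserts it; both arguments are the same in substance.
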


\begin{proof}
Lemma~\ref{risk} gives us an expression of the risk $L_t=:J+R_t$ with $R_t$ some random convex function. By convexity $\nabla^2 L_t(\theta) \succcurlyeq \nabla^2 J(\theta)$, $\theta\in \Theta$, and therefore, it is enough to bound the hessian of the first term $J$. We calculate \begin{equation*}
    \nabla J(\theta) = \lambda \mathbb{E}\Big[ (e^{(\theta - \theta^*)^{\top}x}-1)x \mathds{1}\{T\leq C\}(1-T)_+ \big \vert x,\tau = 0 \Big]
\end{equation*}
and \begin{equation*} \nabla^2 J(\theta) = \lambda \mathbb{E}\Big[ e^{(\theta - \theta^*)^{\top}x}xx^{\top} \mathds{1}\{T\leq C\}(1-T)_+ \big \vert x,\tau = 0 \Big]\, .
\end{equation*}

Let us notice that $e^{(\theta - \theta^*)^{\top}x} \geq e^{-Dx_{\infty}}$ and then \[\nabla^2J(\theta) \succcurlyeq \lambda e^{-Dx_{\infty}}\mathbb E [xx^{\top} \mathds 1\{ T \leq C \} (1-T)_+ \vert x, \tau = 0 ]\, ,\]
which due to Assumption~\ref{ass:strongconv} concludes the proof.
\end{proof}

\subsection{Proof of Theorem~\ref{thr:stcregret} }
\begin{proof}
First of all, we remind that Proposition~\ref{upperbnd} implies \textbf{(H3)} with $G = G_1 =G_2$, but this bound for the gradients is satisfied with probability $1-\varrho$ instead of almost surely and therefore we cannot claim that \textbf{(H3)} is always fulfilled. 
But there is a problem with this definition because \textbf{(H3)} considers all $t=1,2,\ldots$ and, in order to have a $\mathcal F_t$-measurable function we need to define a time dependent \textbf{(H3)}$_t$: 

\textbf{(H3)}$_t$ For $t+1 \geq s\geq 1$ the gradients $\nabla \ell_s(\theta_s)$, satisfy for $G>0$ and all $k \geq 1,$ and $ \theta \in \Theta$: 
    \begin{align*}
    \mathbb E_{s-1}[(\nabla \ell_s(\theta_s)^{\top} (\theta_s - \theta))^{2k}]& \le k! (G D)^{2(k-1)} \mathbb E_{s-1}[ (\nabla \ell_s(\theta_s)^{\top}(\theta_s - \theta))^2] \,, \\
     \mathbb E_{s-1} [ || \nabla \ell_s (\theta) ||^{2k} ]& \le k! G^{2(k-1)}\mathbb E_{s-1} [ || \ell_s (\theta_s) ||^2]  \,,\\
     \mathbb E_{s-1} [ || \nabla \ell_t (\theta) ||^2 ]&  \le G^2 \,.  
\end{align*}

We define $\Omega_t = \{ (y_{is},x_i,\tau_i,u_i)_{s \leq t} \text{ such that \textbf{(H3)}$_t$ is satisfied} \} $ for all $t=1,2,\ldots $ sand we check that $\Omega_t$ is $\mathcal F_t$-measurable. Next, for all $t=1,2,\ldots$ we define the auxiliary loss function 
\[ \hat{\ell_t}(\theta) = \ell_t (\theta) \mathds{1}\{ \Omega_{t-1} \}\, , \]
which is $\mathcal F_{t}$-measurable. Let us notice that we need to define $\Omega_t$ using \textbf{(H3)}$_t$ instead of the inequality of Proposition~\ref{upperbnd} to preserve the past dependency and the measurability. We prove that the function $\hat{\ell_t}$ satisfies the conditions \textbf{(H1)}, \textbf{(H2)} and \textbf{(H3)}.

First of all, \textbf{(H1)} is still verified because the indicator function does not depend on $\theta$. Secondly, if \textbf{(H3)}$_t$ is not realized then the function $\hat{\ell_t}$ is zero and all the bounds hold. Thirdly, if \textbf{(H3)}$_t$ is realized, $\ell_t$ satisfies the inequalities of \textbf{(H3)} and $\hat{\ell_t} = \ell_t$ by definition. Then the bounds in \textbf{(H3)} are also true for $\hat{\ell_t}$, concluding that $\hat{\ell_t}$ satisfies the inequalities of \textbf{(H3)} for all $t=1,2,\ldots$. Finally, it remains to prove \textbf{(H2)}.

By $\mathcal F_{t-1}$-measurability of $\Omega_{t-1}$ we calculate for $\theta \in \Theta$:  
\[ \mathbb E_{t-1} [\nabla \hat{\ell _t}(\theta) \nabla \hat{\ell_t} (\theta)^{\top}] = \mathds{1}\{\Omega_{t-1}\} \mathbb E_{t-1} [ \nabla \ell _t(\theta) \nabla\ell_t (\theta)^{\top}]\, .\]
If \textbf{(H3)}$_t$ is not realized, $\mathds{1}\{\Omega_{t-1}\} = 0$ and so \eqref{con:1} is true for any constant $\gamma \geq 0$. If \textbf{(H3)}$_t$ is realized, $\mathds{1}\{\Omega_{t-1}\} = 1$ and there exist $G >0$ such that:
\begin{align*}
\mathbb E_{t-1} [\nabla \hat{\ell _t}(\theta) \nabla \hat{\ell_t} (\theta)^{\top}] = \mathbb E_{t-1} [ \nabla \ell _t(\theta) \nabla\ell_t (\theta)^{\top}]  \preccurlyeq G^2 \mathcal I_d\, .
\end{align*}
This, together with the strong convexity of Proposition~\ref{lowerbnd} give us the hypothesis of Proposition~\ref{prop:olivier} assuring the stochastic exp-concavity for $\gamma = \lambda e^{-Dx_{\infty}}A/G^2$ and concluding~\textbf{(H2)}. Now, we have that $\hat{\ell_t}$ satisfies all the conditions of Theorem~\ref{thr:olivier} assuring the logarithmic stochastic regret bound of ONS. 

To study the stochastic regret bound we need also to define for all $t=1,2,\ldots$ the risk function $\hat{L_t}(\theta) = \mathbb E_{t-1}[ \hat{\ell_t}(\theta)]$ and we notice that as $\mathds{1}\{\Omega_{t-1}\}$ is $\mathcal F_{t-1}$-measurable:
\[
\hat{L_t}(\theta) = \mathds{1}\{\Omega_{t-1}\} \mathbb E_{t-1}[\ell_t(\theta)] = \mathds{1}\{\Omega_{t-1}\} L_t(\theta) \,, \qquad \theta\in\Theta .
\]

Now, it remains to prove that ONS has logarithmic stochastic regret also for $L_t$ and therefore, we calculate for every $n\ge1, t=1,\ldots,n$,\\ $\theta^* \in \arg\min_{\theta \in \Theta}  \sum_{t=1}^n L_{t}(\theta) $ and $\theta_t$ the prediction of ONS at time $t$:
\begin{align*}
    \mathbb P \left[\sum_{t=1}^n  L_t(\theta_t) - L_t(\theta) >   \mathcal B (n) \right] & = \mathbb P \left[\sum_{t=1}^n  L_t(\theta_t) - L_t(\theta^*) >  \mathcal B (n), \bigcap_{t \geq 2} \Omega_{t-1} \right]\\
    & + \mathbb P \left[\sum_{t=1}^n  L_t(\theta_t) -  L_t(\theta^*) >  \mathcal B (n), \left(\bigcap_{t \geq 2} \Omega_{t-1}\right)^c \right]  \\
    & \leq \mathbb P \left[\sum_{t=1}^n \left( L_t(\theta_t) -  L_t(\theta^*)\right) \mathds{1}\{\Omega_{t-1}\} >  \mathcal B (n) \right]\\
    & \hspace{2cm} + \mathbb P \left[ \left(\bigcap_{t \geq 2} \Omega_{t-1}\right)^c \right]\, , 
\end{align*}

where $\mathcal B(n)$ is the stochastic regret bound for $\hat L(\theta)$ of Theorem~\ref{thr:olivier} which we remind:
\begin{align*}
     \mathcal B (n)&= \frac{3}{2\gamma} \left(1+d\log\left(1+\frac{2(\gamma D)^2 G^2\left(n + \log(\varrho^{-1}) \right) }{9} \right) \right) \\
    & \hspace{6cm} + \left( \frac{4 \gamma (G D)^2}{9} + \frac{18}{\gamma} \right) \log(\varrho^{-1})\,.
    \end{align*}

Plugging in $ \mathcal B (n)$ the specific values of $\gamma$, $G$ and $\mu$ found in Propositions~\ref{prop:olivier},~\ref{upperbnd}, and~\ref{lowerbnd}, respectively, \[\gamma = \frac{\mu}{G^2} = \frac{\lambda e^{-Dx_{\infty}}A}{\left(32 e^{Dx_{\infty}}(4\lambda +1 +\log(2/\varrho))(1+e^{Dx_{\infty}})x_{\infty}\right)^2 }  \, , \]
we obtain the regret bound
\begin{align}
     Risk_n&\leq \frac{3G^2e^{Dx_{\infty}}}{2\lambda A} \left(1+d\log\left(1+\frac{2(\lambda A D)^2\left(n + \log(\varrho^{-1}) \right) }{9G^2e^{2Dx_{\infty}}} \right) \right) \nonumber \\
    & \hspace{4cm} + \left( \frac{4 \lambda A D^2}{9e^{Dx_{\infty}}} + \frac{18Ge^{Dx_{\infty}}}{\lambda A} \right) \log(\varrho^{-1})\,. \label{stc:explicitbound}
\end{align}

Then, because of Theorem~\ref{thr:olivier}, this bound holds with probability $3\varrho$ and as \[\mathbb P \left[ \left(\bigcap_{t \geq 2} \Omega_{t-1}\right)^c \right] \leq \varrho\,,\] 
we have:   
\[
\mathbb P \left[\sum_{t=1}^n L_t(\theta_t) -  L_t(\theta) > \mathcal O(\log(n/\varrho)) \right] \leq 4\varrho\, ,
\]
and thus, with probability $1-4\varrho$, ONS algorithm has logarithmic stochastic regret.
\end{proof}

\subsection{Proof of Corollary~\ref{cor:stc}}
\begin{proof}
Due to the $\mu$-strong convexity of $L_t(\theta)$ proved in Proposition~\ref{lowerbnd} we have for all $t=1,2,\ldots$: 
\[
\mu ||\theta_t - \theta^*||^2 \leq \nabla L_t(\theta^*)^{\top}(\theta_t - \theta^*) + \mu ||\theta_t - \theta^*||^2 \leq
L_t(\theta_t) - L_t(\theta^*) \, , 
\]
where the first inequality is true because $\nabla L_t(\theta^*)^{\top}(\theta_t - \theta^*) \geq 0$. Then, because of Theorem~\ref{thr:stcregret}:
\[
\sum_{t=1}^n ||\theta_t - \theta^*||^2  \leq \frac{1}{\mu} \sum_{t=1}^n
L_t(\theta_t) - L_t(\theta^*) \leq \frac{1}{\mu} \mathcal B (n) \, , 
\]
and remembering that $\mu = \lambda e^{-Dx_{\infty}}A$, the  bound is:
\begin{align}
    \frac{1}{\mu} \mathcal B(n)  & =  \frac{3G^2e^{2Dx_{\infty}}}{2\lambda^2 A^2} \left(1+d\log\left(1+\frac{2(\lambda A D)^2\left(n + \log(\varrho^{-1}) \right) }{9G^2e^{2Dx_{\infty}}} \right) \right) \nonumber \\
    & \hspace{4cm} + \left( \frac{4 \lambda A D^2}{9e^{Dx_{\infty}}} + \frac{18Ge^{Dx_{\infty}}}{\lambda A} \right) \log(\varrho^{-1})\,, \label{cor:explicitbound}
\end{align}
 which is $\mathcal O (\log(n/\varrho))$. We conclude the convergency of $\theta_t$ to $\theta^*$ and then: 
\[
||\bar \theta_n - \theta^*||^2 \leq \frac{1}{n} \sum_{t=1}^n || \theta_t - \theta^*||^2 - \frac{1}{n} \sum_{t=1}^n ||\theta_t - \bar \theta_n||^2 \leq \frac{1}{\mu} \mathcal O (\log(n/\varrho)/n)\, ,
\]
concluding the convergency of $\bar \theta_n$ to $\theta^*$. 
\end{proof}

\section{SurvONS} \label{AppD}
\subsection{Proof of Lemma~\ref{lem:aux}}
\begin{proof}
    We first compute 
    \begin{align*} \nabla \hat{\ell}_{t,\gamma}(\theta_1)  &= \nabla \ell_t(\hat\theta) + \gamma \left( \nabla \ell_t(\hat\theta)(\theta_1-\hat\theta) \right) \nabla \ell_t(\hat\theta)\\& = \left( 1 + \gamma \nabla \ell_t(\hat\theta)(\theta_1-\hat\theta) \right) \nabla \ell_t(\hat\theta)\,.
    \end{align*}
We need to show that there exists $\hat{\gamma}>0$ such that
\begin{align*}
    \hat{\ell}_{t,\gamma}(\theta_2) \geq \hat{\ell}_{t,\gamma}(\theta_1) + \nabla \hat{\ell}_{t,\gamma}(\theta_1)(\theta_2- \theta_1) +  \frac{\hat \gamma}{2} \left(\nabla \hat{\ell}_{t,\gamma} (\theta_1)(\theta_2-\theta_1) \right)^2  
\end{align*}
and if we replace $\hat{\ell}_{t,\gamma}$ this inequality is equivalent to 
  \begin{align*} &\ell_t(\hat\theta)+\nabla \ell_t(\hat\theta)(\theta_2-\hat\theta)+\frac{\gamma}{2} \left( \nabla \ell_t(\hat\theta) (\theta_2 - \hat\theta) \right)^2\\ & \geq \ell_t(\hat\theta) + \nabla \ell_t(\hat\theta) (\theta_1 - \hat\theta) + \frac{\gamma}{2} \left( \nabla \ell_t(\hat\theta) (\theta_1 - \hat\theta)\right)^2 \\ & \hspace{0.5cm}+\left(1 + \gamma \nabla \ell_t(\hat\theta)(\theta_1- \hat\theta)\right)\nabla \ell_t(\hat\theta)(\theta_2-\theta_1)\\ &\hspace{1cm}  + \frac{\hat{\gamma}}{2}\left( (1+ \gamma \nabla \ell_t(\hat\theta) (\theta_1-\hat\theta) ) \nabla \ell_t(\hat\theta) (\theta_2-\theta_1)\right)^2. \end{align*} Grouping, this requirement becomes \begin{align*}
        \frac{\gamma}{2} \left( \nabla \ell_t(\hat\theta)(\theta_2-\hat\theta)\right)^2 & \geq \frac{\gamma}{2} \left( \nabla \ell_t(\hat\theta)(\theta_1-\hat\theta) \right)^2\\&\hspace{0.5cm} + \gamma \nabla \ell_t(\hat\theta)(\theta_1-\hat\theta)\nabla \ell_t(\hat\theta)(\theta_2-\theta_1)\\& \hspace{1cm}  + \frac{\hat{\gamma}}{2}\left( (1+ \gamma \nabla \ell_t(\hat\theta) (\theta_1-\hat\theta) ) \nabla \ell_t(\hat\theta) (\theta_2-\theta_1)\right)^2,
    \end{align*} which is \begin{equation*}
        0 \geq \left( \frac{\hat{\gamma}}{2}(1 +\gamma \nabla \ell_t(\hat\theta)(\theta_1-\hat\theta))^2 + \frac{\gamma}{2} \right) \left(\nabla \ell_t(\hat\theta)(\theta_2-\theta_1)\right)^2.
    \end{equation*} To satisfy this inequality we need \begin{equation*}
        \hat{\gamma} \leq \frac{\gamma}{(1+ \gamma \nabla \ell_t(\hat\theta)(\theta_1-\hat\theta))^2},
    \end{equation*} which is true for the choice $\hat{\gamma} = \frac{\gamma}{(1+\gamma D||\nabla \ell_t(\hat\theta)||)^2}$ and this concludes the proof.
\end{proof}

\subsection{Proof of Theorem~\ref{regbnd}}
\begin{proof} 
At each iteration $t$ we consider $\hat{\theta}_t$ the prediction of SurvONS and $\theta_t(\gamma)$ the prediction of ONS with $\gamma \in \Gamma$. We define the directional derivative lower bound function as in Equation \eqref{eq:auxfunc} \[ \hat{\ell}_{t,\gamma_t} (\theta) = \ell_t(\hat{\theta}_t) + \nabla \ell_t(\hat{\theta}_t)(\theta - \hat{\theta}_t) + \frac{\gamma_t}{2} \left( \nabla \ell_t(\hat{\theta}_t)(\theta - \hat{\theta}_t)\right)^2.\]
Let us notice that $\gamma \leq \frac{1}{4GD}$ and $\hat{\ell}_{t,\gamma_t}(\theta) \leq \ell_t(\theta)$ for all $\theta$.

We take $ \theta^* \in \arg\min_{\theta \in \Theta}  \sum_{t=1}^n \ell_t(\theta)$ and we can upper-bound the regret for any $\gamma \in \Gamma$ \begin{align*}
        Regret_n & = \sum_{t=1}^n \ell_t(\hat{\theta}_t) - \ell_t(\theta^*) \\ 
        & \leq \sum_{t=1}^n \hat{\ell}_{t,\gamma_t}(\hat{\theta}_t) - \hat{\ell}_{t,\gamma_t}(\theta^*)\\
        & = \sum_{t=1}^n \hat{\ell}_{t,\gamma_t}(\hat{\theta}_t) - \hat{\ell}_{t,\gamma_t}(\theta_t(\gamma))  + \hat{\ell}_{t,\gamma_t}(\theta_t(\gamma)) - \hat{\ell}_{t,\gamma_t}(\theta^*)\\
        & = \sum_{t=1}^n \nabla \ell_t(\hat{\theta}_t)(\hat{\theta}_t - \theta_t(\gamma)) - \sum_{t=1}^n \frac{\gamma_t}{2} \left( \nabla \ell_t(\hat{\theta}_t)(\hat{\theta}_t - \theta_t(\gamma))\right)^2 \\
        & \hspace{2cm} +\sum_{t=1}^n \hat{\ell}_{t,\gamma_t}(\theta_t(\gamma)) - \hat{\ell}_{t,\gamma_t}(\theta^*)\,.
    \end{align*}
We upper-bound the first term using the regret-bound of BOA~\cite{wintenberger2017optimal} which works for $\gamma \leq \frac{1}{4GD}$ \[ \sum_{t=1}^n \nabla \ell_t(\hat{\theta}_t)(\hat{\theta}_t - \theta_t(\gamma)) \leq \frac{\log(K)}{ \gamma} + 2 \gamma \sum_{t=1}^n \left(\nabla \ell(\hat{\theta}_t)(\hat{\theta}_t - \theta_t(\gamma))  \right)^2\, .  \]
Therefore, the regret is bounded by
\begin{align*}  Regret_n &\leq\frac{\log(K)}{ \gamma} + \sum_{t=1}^n \left(\frac{4\gamma - \gamma_t}{2}\right) \left(\nabla \ell(\hat{\theta}_t)(\hat{\theta}_t - \theta_t(\gamma))  \right)^2 \\
&\hspace{6cm}+ \sum_{t=1}^n \hat{\ell}_{t,\gamma_t}(\theta_t(\gamma)) - \hat{\ell}_{t,\gamma_t}(\theta^*). \end{align*}
We consider the surrogate losses $\hat \ell_{t,\hat\gamma_t}$ for $t=1,2,\ldots$ and $\hat \gamma_t = 4 \max\{ \gamma, \gamma_t/4\}$
\[ \hat{\ell}_{t,\hat \gamma_t}(\theta) = \ell_t(\hat{\theta}_t) + \nabla \ell_t(\hat{\theta}_t)(\theta - \hat{\theta}_t) + 2 \max\{\gamma,\frac{\gamma_t}{4}\} \left( \nabla \ell_t(\hat{\theta}_t)(\theta - \hat{\theta}_t)\right)^2\, ,  \] 
and we write the last term of the regret bound \begin{align*} &\sum_{t=1}^n \hat{\ell}_{t,\gamma_t}(\theta_t(\gamma)) - \hat{\ell}_{t,\gamma_t}(\theta^*)\\ &\hspace{1cm} = \sum_{t=1}^n  \left(\hat{\ell}_t(\theta_t(\gamma);\gamma) - \hat{\ell}_t(\theta^*;\gamma)\right) + \sum_{t=1}^n \left( \hat{\ell}(\theta_t(\gamma)) - \hat{\ell}_t(\theta_t(\gamma);\gamma) \right)\\
& \hspace{2cm} - \sum_{t=1}^n \left(\hat{\ell}_t(\theta^*) - \hat{\ell}_t(\theta^*;\gamma) \right)\\
&\hspace{1cm} =  \sum_{t=1}^n  \left(\hat{\ell}_t(\theta_t(\gamma);\gamma) - \hat{\ell}_t(\theta^*;\gamma)\right)\\ 
& \hspace{2cm} - \sum_{t=1}^n \frac{(4\gamma - \gamma_t)_+}{2} \left( \nabla \ell_t(\hat{\theta}_t)(\hat{\theta}_t - \theta_t(\gamma)) \right)^2\\
& \hspace{4cm} + \sum_{t=1}^n  \frac{(4\gamma - \gamma_t)_+}{2} \left( \nabla \ell_t(\hat{\theta}_t)(\hat{\theta}_t - \theta^*) \right)^2\, .  \end{align*}
We substitute this expression in the regret bound \begin{align*} Regret_n &\leq \frac{ \log(K)}{ \gamma} + \sum_{t=1}^n  \left(\hat{\ell}_t(\theta_t(\gamma);\gamma) - \hat{\ell}_t(\theta^*;\gamma)\right) \\ & \hspace{1cm}- \sum_{t=1}^n \frac{( \gamma_t - 4 \gamma)_+}{2} \left( \nabla \ell_t(\hat{\theta}_t)(\hat{\theta}_t - \theta_t(\gamma)) \right)^2 \\&\hspace{2cm} + \sum_{t=1}^n  \frac{(4\gamma - \gamma_t)_+}{2} \left( \nabla \ell_t(\hat{\theta}_t)(\hat{\theta}_t - \theta^*) \right)^2\, . \end{align*}
Now, we note that by Lemma~\ref{lem:aux} we have \begin{align*} \hat{\ell}_t(\theta_t(\gamma);\gamma) - \hat{\ell}_t(\theta^*;\gamma) &\leq \nabla \hat{\ell}_t(\theta_t(\gamma);\gamma)(\theta_t(\gamma) - \theta^*)\\ &\hspace{2cm} - \frac{\hat{\gamma}_t}{2}\left( \nabla \hat{\ell}_t(\theta_t(\gamma);\gamma)(\theta_t(\gamma) - \theta^*) \right)^2\, ,  \end{align*} where we can write $\max \{\gamma,\gamma_t/4\} = \gamma + 
\left(\gamma_t/4 - \gamma \right)_+$ and get \[\hat{\gamma}_t = \frac{4(\gamma+(\gamma_t/4 - \gamma)_+ )}{(1+4(\gamma + (\gamma_t/4 - \gamma)_+)(\nabla \ell_t(\hat{\theta}_t) (\theta_t(\gamma) - \hat{\theta}_t)))^2} \geq \gamma \, . \]  
Therefore, we can apply the regret bound of ONS which yields to \begin{align}\label{ons:ineq} \sum_{t=1}^n \hat{\ell}_t(\theta_t(\gamma);\gamma) - \hat{\ell}_t(\theta^*;\gamma) & \leq \frac{5d\log(n)}{\gamma} + \frac{\gamma}{2} \sum_{t=1}^n \left(\nabla \hat{\ell}_t(\theta_t(\gamma);\gamma)(\theta_t(\gamma) - \theta^*)\right)^2\nonumber\\& \hspace{2cm} -  \sum_{t=1}^n \frac{\hat{\gamma}_t}{2}\left(\nabla \hat{\ell}_t(\theta_t(\gamma);\gamma)(\theta_t(\gamma) - \theta^*)\right)^2 \, . \end{align} But, since \[ \nabla \hat{\ell}_t(\theta_t(\gamma);\gamma) = \left(1+4( \gamma + 
\left(\gamma_t/4 - \gamma \right)_+) \nabla\ell_t(\hat{\theta}_t)(\theta_t(\gamma) - \hat{\theta}_t)  \right)\nabla\ell_t(\hat{\theta}_t)\, ,\] we can write \begin{align*}& \left(\nabla \hat{\ell}_t(\theta_t(\gamma);\gamma)(\theta_t(\gamma) - \theta^*) \right)^2\\& \hspace{1cm} = \left(1+4( \gamma + 
\left(\gamma_t/4 - \gamma \right)_+) \nabla\ell_t(\hat{\theta}_t)(\theta_t(\gamma) - \hat{\theta}_t)  \right)^2 \left(\nabla \ell_t(\hat{\theta}_t)(\theta_t(\gamma) - \theta^*)\right)^2,\end{align*} which yields to \begin{equation}\label{eqaux1}\hat{\gamma}_t \left(\nabla \hat{\ell}_t(\theta_t(\gamma);\gamma)(\theta_t(\gamma) - \theta^*) \right)^2 = 4( \gamma + 
\left(\gamma_t/4 - \gamma \right)_+)\left(\nabla \ell_t(\hat{\theta}_t)(\theta_t(\gamma) - \theta^*)\right)^2. \end{equation}
Using the assumption $4(\gamma + (\gamma_t/4 - \gamma)_+ )\leq 1/GD$ we can also get
\begin{equation}\label{eqaux2}\left(\nabla \hat{\ell}_t(\theta_t(\gamma);\gamma)(\theta_t(\gamma) - \theta^*) \right)^2 \leq 4\left(\nabla \ell_t(\hat{\theta}_t)(\theta_t(\gamma) - \theta^*)\right)^2. \end{equation}

Therefore, plugging  \eqref{eqaux1} and \eqref{eqaux2} in \eqref{ons:ineq} we get \begin{align*}\sum_{t=1}^n \hat{\ell}_t(\theta_t(\gamma);\gamma) - \hat{\ell}_t(\theta^*;\gamma) & \leq \frac{5d\log(n)}{\gamma} + \frac{4\gamma}{2} \sum_{t=1}^n \left(\nabla \ell_t(\hat{\theta}_t)(\theta_t(\gamma) - \theta^*)\right)^2\\&  -  2\sum_{t=1}^n (\gamma +(\gamma_t/4 -\gamma)_+)\left(\nabla \ell_t(\hat{\theta}_t)(\theta_t(\gamma) - \theta^*)\right)^2\\ & = \frac{5d\log(n)}{\gamma}\\&\hspace{0.8cm} - 2 \sum_{t=1}^n (\gamma_t/4 -\gamma)_+\left(\nabla \ell_t(\hat{\theta}_t)(\theta_t(\gamma) - \theta^*)\right)^2.\end{align*} 
Thus, the regret bound becomes 
\begin{align*} &Regret_n  \leq \frac{ 2 \log(K) + 5d\log(n)}{\gamma}\\
& -2\sum_{t=1}^n  (\gamma_t/4 - \gamma)_+ \left( (\nabla \ell_t (\hat{\theta}_t)(\theta_t(\gamma) - \theta^*))^2 + (\nabla \ell_t(\hat{\theta}_t)(\hat{\theta}_t - \theta_t(\gamma)))^2  \right)\\
&  + \sum_{t=1}^n \frac{ (4\gamma - \gamma_t)_+ }{2}(\nabla \ell_t(\hat{\theta}_t)(\hat{\theta}_t - \theta^*))^2, \end{align*}
and as $(4\gamma - \gamma_t)_+ = 4\gamma + 4(\gamma_t/4 - \gamma)_+ -\gamma_t$, we can regroup and get 
\begin{align*}Regret_n \leq & \frac{ 2 \log(K) + 5d\log(n)}{\gamma}\\
& +2\sum_{t=1}^n  (\gamma_t/4 - \gamma)_+ \Big((\nabla \ell_t(\hat{\theta}_t)(\hat{\theta}_t - \theta^*))^2- (\nabla \ell_t (\hat{\theta}_t)(\theta_t(\gamma) - \theta^*))^2\\
& \hspace{6cm} - (\nabla \ell_t(\hat{\theta}_t)(\hat{\theta}_t - \theta_t(\gamma)))^2  \Big)\\
& + \sum_{t=1}^n \frac{ 4\gamma - \gamma_t }{2}(\nabla \ell_t(\hat{\theta}_t)(\hat{\theta}_t - \theta^*))^2\\  =& \frac{ 2 \log(K) + 5d\log(n)}{\gamma}\\
& -4\sum_{t=1}^n  (\gamma_t/4 - \gamma)_+ \Big((\nabla \ell_t (\hat{\theta}_t)(\theta_t(\gamma) - \theta^*))(\nabla \ell_t(\hat{\theta}_t)(\theta_t(\gamma)-\hat{\theta}_t)) \Big)\\
& + \sum_{t=1}^n \frac{ 4\gamma - \gamma_t }{2}(\nabla \ell_t(\hat{\theta}_t)(\hat{\theta}_t - \theta^*))^2 \end{align*} 

\end{proof}

\end{document}